\def\MODE{1} 
\newcommand{\COND}[2]{#1}
\newcommand{\COND}[2]{#2}
\newtheorem{theo}{Theorem}
\newtheorem{dfn}{Definition}
\newtheorem{lem}{Lemma}
\newtheorem{cor}{Corollary}
\newtheorem{rem}{Remark}
\newtheorem{example}{Example}
\newcommand{\E}{ \mathbb{E} }
\newcommand{\vecx}{{\bf x}}
\newcommand{\vecy}{{\bf y}}
\newcommand{\vecg}{{\bf g}}
\newcommand{\cyc}{{\textsc{Cyclades}}}
\newcommand{\HW}{\textsc{Hogwild!}}
\newcommand{\hog}{\textsc{Hogwild!}}
\newcommand{\setS}{\mathcal{S}}
\newcommand{\setD}{\mathcal{D}}
\newcommand{\setC}{\mathcal{C}}
\newcommand{\WH}[1]{}
\newcommand{\ST}[1]{}
\newcommand{\XP}[1]{}
\title{ \cyc{}: Conflict-free  Asynchronous Machine Learning}
\author{
Xinghao Pan$^{\alpha,\epsilon}$, Maximilian Lam$^{\epsilon}$, Stephen Tu$^{\alpha,\epsilon}$\\
Dimitris Papailiopoulos$^{\alpha,\epsilon}$, Ce Zhang$^{s}$\\
Michael I. Jordan$^{\alpha,\epsilon, \sigma}$, Kannan Ramchandran$^{\epsilon}$, Chris Re$^{s}$, Benjamin Recht$^{\alpha,\epsilon, \sigma}$\\
$^{\alpha}$AMPLab, $^{\epsilon}$EECS at UC Berkeley, $^{\sigma}$Statistics at UC Berkeley \\
$^{s}$CS at Stanford University
} 
\date{\today}
\begin{document}

\maketitle

\begin{abstract}
We present \cyc{}, a general framework for parallelizing stochastic
optimization algorithms in a shared memory setting.  \cyc{} is asynchronous during shared model updates, and
requires no memory locking mechanisms, similar to \hog{}-type algorithms.
Unlike \hog{}, \cyc{} introduces no conflicts during the parallel execution, and
offers a black-box analysis for provable speedups across a large family of
algorithms.  Due to its inherent conflict-free nature and cache locality,  our
multi-core implementation of \cyc{} consistently outperforms \hog{}-type
algorithms on sufficiently sparse datasets, leading to up to $40\%$ speedup gains
compared to the \hog{} implementation of SGD, and up to $5\times$ gains over asynchronous implementations
of variance reduction algorithms.
\end{abstract}

\section{Introduction}

Following the seminal work of \hog{} \cite{niu2011hogwild},  many studies have demonstrated that near-linear speedups are achievable on a variety of machine learning tasks via asynchronous, lock-free implementations
\cite{recht2012factoring,
zhuang2013fast,
yun2013nomad,
liu2014asynchronous1,
duchi2013estimation,
wang2014asynchronous,
hsieh2015passcode,
mania2015perturbed}.
In all of these studies, classic algorithms are parallelized by simply running parallel and asynchronous model updates without locks.  
These lock-free, asynchronous algorithms exhibit speedups even when applied to large, non-convex problems, as demonstrated by deep learning systems such as Google's Downpour SGD~\cite{dean2012large}  and Microsoft's Project Adam~\cite{chilimbi2014project}.   

While these techniques have been remarkably successful, many of the papers cited above require delicate and tailored analyses to understand the benefits of asynchrony for the particular learning task at hand.  Moreover, in non-convex settings, we currently have little quantitative insight into how much speedup is gained from asynchrony and how much accuracy may be lost.

In this work, we present~\cyc{}, a general framework for lock-free, asynchronous machine learning algorithms that obviates the need for specialized analyses. \cyc{} runs asynchronously and \emph{maintains serial equivalence}, i.e., it produces the same outcome as the serial algorithm.  
Since it returns exactly the same output as a serial implementation, any algorithm parallelized by our framework inherits the correctness proof of the serial counterpart without modifications.  Additionally, if a particular heuristic serial algorithm is popular, but does not have a rigorous analysis, such as backpropagation on neural networks, \cyc{} still guarantees that its execution will return a serially equivalent output.

\cyc{} achieves serial equivalence by partitioning updates among cores,
in a way that ensures that there are no conflicts across partitions.  Such a partition can always
be found efficiently by leveraging a powerful result on graph phase
transitions~\cite{krivelevich2016phase}.  When applied to our setting, this
result guarantees that a sufficiently small sample of updates will have only a
\emph{logarithmic} number of conflicts.  This allows us to  evenly partition model
updates across cores, with the guarantee that all conflicts are localized within
each core.  
Given enough problem sparsity, \cyc{} guarantees a nearly linear speedup, while
inheriting all the qualitative properties of the serial counterpart of the algorithm, e.g., proofs for rates of convergence.
Enforcing a serially equivalent execution in \cyc{} comes with additional practical benefits.
Serial equivalence is helpful for hyperparameter tuning, or locating the best model produced by the asynchronous execution, since experiments are reproducible, and solutions are easily verifiable.
Moreover, a \cyc{} program is easy to debug, because bugs are repeatable and we can examine the step-wise execution to localize them.

A significant benefit of the update partitioning in \cyc{} is that it induces
considerable access locality compared to the more unstructured nature of the memory accesses during \HW{}. Cores will access the same data points and
read/write the same subset of model variables.  This has the additional benefit
of reducing false sharing across cores.
Because of these gains, \cyc{} can actually \emph{outperform}
\hog{} in practice on sufficiently sparse problems, despite appearing to require more computational overhead.  
Remarkably, because of the added locality, even a single threaded implementation of \cyc{} can actually be faster than serial SGD. 
In our SGD experiments for matrix completion and word embedding problems, \cyc{} can offer a speedup gain of up to $40\%$ compared to that of \hog{}.
Furthermore, for variance reduction techniques such as SAGA~\cite{defazio2014saga} and SVRG~\cite{johnson2013accelerating},~\cyc{} yields
better accuracy and more significant speedups, with up to $5\times$ performance gains over \hog{}-type implementations.

The remainder of our paper is organized as follows.
Section \ref{sec:prelim} establishes some preliminaries.
Details and theory of \cyc{} are presented in Section \ref{sec:cyc}.
We present our experiments in Section \ref{sec:expts}, we discuss related work in Section \ref{sec:prior},
and then conclude with Section \ref{sec:conclusion}.

\section{The Algorithmic Family of Stochastic-Updates}
\label{sec:prelim}

We study parallel asynchronous iterative algorithms on the computational model used by~\cite{niu2011hogwild}, and similar to the partially asynchronous model of  \cite{bertsekas1989parallel}: a number of cores have access to the same shared memory, and each of them can read and update components of $\vecx$ in parallel from the shared memory.

In this work, we consider a large family of randomized algorithms that we will refer to as {\it Stochastic Updates} (SU).
The main algorithmic component of SU focuses on updating small subsets of a model variable $\vecx$, that lives in shared memory, according to prefixed access patterns, as sketched by Alg. \ref{alg:protoSU}.

\begin{wrapfigure}{R}{0.5\columnwidth}
\vspace{-0.3cm}
\begin{minipage}{0.5\columnwidth}
\begin{algorithm}[H]
    \caption{Stochastic Updates pseudo-algorithm}
\begin{algorithmic}[1]
\STATE Input: ${\bf x};\: f_1,\ldots, f_n;\: u_1,\ldots, u_n;\: \mathcal{D};\:T$.
\FOR {$t=1:T$}
\STATE sample $i\sim \mathcal{D}$
\STATE ${\bf x}_{\mathcal{S}_i} =   u_i({\bf x}_{\mathcal{S}_i}, f_i)$ \hfill {\color{gray}//update global model on ${\mathcal{S}_i}$}
\ENDFOR
\STATE {\bf Output:} ${\bf x}$
\end{algorithmic}
   \label{alg:protoSU}
 \end{algorithm}
\end{minipage}
\end{wrapfigure}

In Alg.~\ref{alg:protoSU} each set $\mathcal{S}_i$ is a subset of the coordinate indices of ${\bf x}$, 
each function $f_i$ only operates on the subset $\mathcal{S}_i$ of coordinates (i.e., both its domain and co-domain are inside $\mathcal{S}_i$), 
and $u_i$ is a local update function that computes a vector with support on $\mathcal{S}_i$ using as input $\vecx_{\setS_i}$ and $f_i$.
Moreover, $T$ is the total number of iterations, and $\mathcal{D}$ is the distribution  with support $\{1,\ldots, n\}$ from which we draw  $i$.
As we explain in Appendix \ref{sec:su}, several machine learning and optimization algorithms belong to the SU algorithmic family, such as stochastic gradient descent (SGD), with or without weight decay and regularization,  variance-reduced
learning algorithms like SAGA and SVRG, and even some combinatorial graph algorithms.

\paragraph{The Updates Conflict Graph}
 A useful construction for our developments is the conflict graph between updates, which can be generated from the bipartite graph between the updates and the model variables.
We define these graphs below, and provide an illustrative sketch in Fig.~\ref{fig:graph_example}.

\begin{wrapfigure}{r}{0.4\textwidth}
\vspace{-0.5cm}
\centering
\includegraphics[width=0.37\columnwidth]{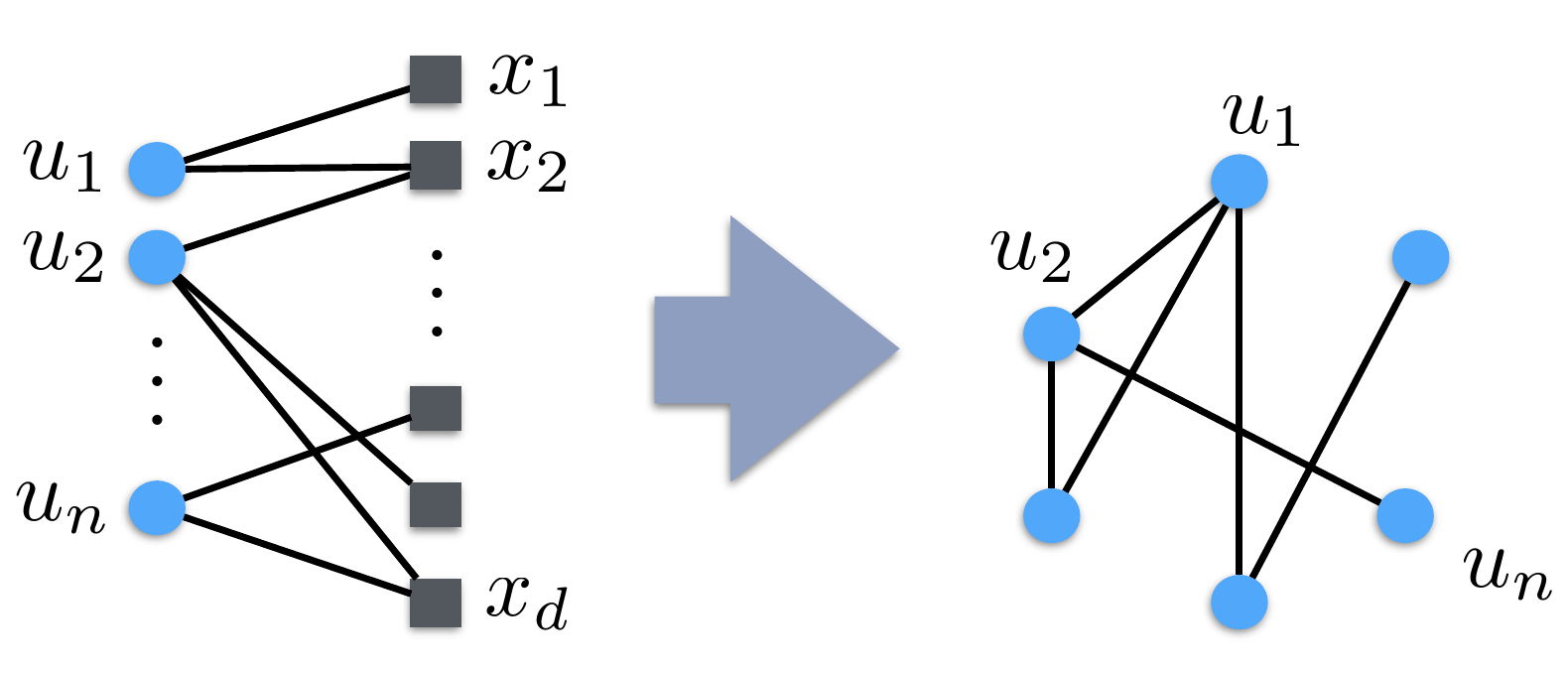}
\caption{{\small The above bipartite graph links an update $u_i$ to a variable $x_j$ when an update needs to access (read or write) the variable.
From $G_u$ we obtain the conflict graph $G_c$, whose max degree is $\Delta$.
If $G_c$ is sufficiently sparse, we expect that it is possible to parallelize updates without too many conflicts. \cyc{} exploits this intuition.}}
\label{fig:graph_example}
\vspace{-1cm}
\end{wrapfigure}

\begin{dfn}
We denote as $G_{u}$ the bipartite update-variable graph between the updates $u_1, \ldots, u_n$ and the $d$ model variables.
In $G_u$ an update $u_i$ is linked to a variable $x_j$, if $u_i$ requires to read or write $x_j$.
We let $E_{u}$ denote the number of edges in the bipartite graph, 
and also denote as $\Delta_L$ the left max vertex degree of $G_u$, and as  $\overline{\Delta}_L$ its average left degree.
\end{dfn}
\begin{dfn}
We denote by $G_c$ a conflict graph on $n$ vertices, each corresponding to an update $u_i$. 
Two vertices of $G_c$ are linked with an edge, if and only if the corresponding updates share at least one variable in the bipartite-update graph $G_u$.
We also denote as $\Delta$ the max vertex degree of $G_c$.
\end{dfn}

We stress that the conflict graph is never actually constructed, but serves as a useful concept for understanding \cyc{}.

\paragraph{Our Main Result} By exploiting the structure of the above graphs and through a light-weight and careful sampling and allocation of updates, 
\cyc{} is able to guarantee the following result for SU algorithms, which we establish in the following sections.

\begin{theo} [informal]
Let us consider an	 SU algorithm $\mathcal{A}$ defined through $n$ update rules, where the conflict max degree between the $n$ updates is $\Delta$, and the sampling distribution $\mathcal{D}$ is uniform with (or without) replacement from $\{1,\ldots,n\}$.
Moreover, assume that we wish to run $\mathcal{A}$ for $T=\Theta(n)$ iterations, and that $\frac{\Delta_L}{\overline{\Delta}_L}\le \sqrt{n}.$
Then on up to $P = \tilde{O}(\frac{n}{\Delta\cdot \Delta_L})$ cores, \cyc{} guarantees a $\widetilde{\Omega}(P)$ speedup over $\mathcal{A}$, while outputting the same solution $\vecx$ as  $\mathcal{A}$ would do after the same random set of $T$ iterations.\footnote{$\widetilde{\Omega}(\cdot)$ and $\widetilde{O}(\cdot)$ hide polylog factors.} 
\label{theo:informal}
\end{theo}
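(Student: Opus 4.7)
The plan is to prove the theorem via a three-step reduction: batching, component decomposition, and balanced parallel dispatch. First, I would partition the $T = \Theta(n)$ updates into batches of size $B$, with $B$ chosen small enough that the induced conflict subgraph on the $B$ sampled vertices lies in a \emph{subcritical} regime. This is where the phase-transition result of \cite{krivelevich2016phase} enters: if we subsample a $p$-fraction of vertices of a graph with maximum degree $\Delta$ with $p \cdot \Delta \le 1 - \epsilon$, then every connected component of the induced subgraph has size $O(\log n)$ with high probability. Taking $B = \Theta(n/\Delta)$ makes the sampling probability $p = B/n = \Theta(1/\Delta)$ subcritical, so each batch's induced conflict graph decomposes into connected components of logarithmic size.

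Next, within each batch, I would compute these connected components. Rather than materializing $G_c$ (which would be expensive), components can be identified by running a parallel BFS or union--find on the induced sub-bipartite-graph of $G_u$, whose edge count is at most $B \cdot \Delta_L$; this admits a near-linear parallel implementation and thus costs $\tilde{O}(B \Delta_L / P)$ time per core. The resulting components are then dispatched to the $P$ cores by a balanced allocation: since each component has size $O(\log n)$ and the total workload per batch is $B$, a greedy dispatch assigns $O(B/P)$ updates per core up to $\tilde{O}(1)$ slack. Components in different cores share no model variables by construction, so each core may execute its assigned updates lock-free. Serial equivalence is enforced by sorting the updates within each component by their original sample index and executing them in that order; this reproduces exactly the sequence $\vecx_{\setS_{i_t}} \leftarrow u_{i_t}(\vecx_{\setS_{i_t}}, f_{i_t})$ that a serial run of $\mathcal{A}$ on the same random draws would have produced.

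Combining the costs yields the speedup bound. Per batch, each core performs $\tilde{O}(B/P)$ update evaluations and spends $\tilde{O}(B \Delta_L / P)$ time in the component phase, so the overhead is dominated by useful work exactly when $P = \tilde{O}(n/(\Delta \cdot \Delta_L))$, matching the claim. The hypothesis $\Delta_L/\overline{\Delta}_L \le \sqrt{n}$ is invoked to guarantee that the average per-update cost, which scales with $\overline{\Delta}_L$, still dominates the amortized allocation overhead, so that the net speedup is $\tilde{\Omega}(P)$ rather than being eroded by the worst-case $\Delta_L$. Summing over the $T/B = \Theta(\Delta)$ batches together with a union bound preserves the logarithmic-component property globally with high probability.

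The main obstacle I expect is the phase-transition step: Krivelevich's theorem is naturally stated for Bernoulli vertex inclusion, whereas $\mathcal{D}$ here is uniform sampling with (or without) replacement of a fixed-size batch, and the same update may even be drawn twice within one batch. Handling this mismatch cleanly---probably by coupling to a Bernoulli model or conditioning on the realized batch size---is the most delicate piece and sets the precise polylog factors hidden in $\tilde{\Omega}$. A secondary obstacle is ensuring that the parallel connected-components and bin-packing routines each genuinely run in $\tilde{O}(B \Delta_L / P)$ time with a deterministic guarantee, which requires standard but careful use of prefix-sum and work-stealing primitives.
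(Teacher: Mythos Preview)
Your proposal is correct and follows essentially the same route as the paper: batch into $B=(1-\epsilon)n/\Delta$ updates, invoke Krivelevich's subcritical component bound (and couple Bernoulli inclusion to fixed-size with/without-replacement sampling, exactly as the paper does in its appendix), compute CCs on the induced bipartite subgraph rather than on $G_c$, and then greedily bin-pack components across cores while preserving intra-component order for serial equivalence. The only minor divergence is your explanation of the role of $\Delta_L/\overline{\Delta}_L\le\sqrt n$: in the paper it enters specifically to make a Hoeffding bound on the total sampled edge count $\sum_i E_u^i$ concentrate (so that the CC and allocation costs are $O(E_u\log^2 n/P)$ w.h.p.), rather than as a generic ``average dominates worst-case'' statement, but the effect and the resulting $P=\tilde O(n/(\Delta\Delta_L))$ bound are as you describe.
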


We will now provide some simple examples of how the above parameters, and guarantees translate for specific problem cases.

\begin{example} 
Many machine learning applications often seek to minimize the empirical risk
\[
\min_{\bf x} \frac{1}{n}\sum_{i=1}^n \ell_i( {\bf a}_i^T {\bf x})
\]
where ${\bf a}_i$ represents the $i$th data point, ${\bf x}$ is the model we are trying to fit, and $\ell_i$ is a loss function that tells us how good of a fit a model is with respect to data point $i$.
Several problems can be formulated in the above way, such as logistic regression, least squares, support vector machines (SVMs) for binary classification, and others.
If we attempt to solve the above problem using SGD (with or without regularization), or via variance reduction techniques like SVRG and SAGA, then (as we show in  Appendix~\ref{sec:su}) the sparsity of the $u_i$ updates is determined by the gradient of a single sampled data point $i$.
For the aforementioned problems,  this will be proportional to $\left(\frac{d}{du}\ell_i(u)\big|_{u={\bf a}_i^T{\bf x}}\right){\bf a}_i$, hence the sparsity of the update is defined by the non-zero support of datapoint ${\bf a}_i$.
In the induced bipartite update-variable graph of this problem, we have $\Delta_L = \max_i ||{\bf a}_i||_0$, and the maximum conflict degree $\Delta$ is the maximum number of data points ${\bf a}_i$ that share at least one of the $d$ features.
As a toy example, let $\frac{n}{d} = \Theta(1)$ and let the non-zero support of ${\bf a}_i$ be of size $n^\delta$ and uniformly distributed. Then, one can show that with overwhelmingly high probability $\Delta = \widetilde{O}(n^{1/2+\delta})$ and hence \cyc{} achieves an $\widetilde{\Omega}(P)$ speedup on up to $P = \widetilde{O}(n^{1/2-2\delta} )$ cores.

\end{example}

\begin{example} 
Consider the following generic minimization problem
$$\min_{{\bf x}_1,\ldots,{\bf x}_{m_1}}\min_{{\bf y}_1,\ldots,{\bf y}_{m_2}}\sum_{i=1}^{m_1}\sum_{j=1}^{m_2} \phi_{i,j}({\bf x}_i, {\bf y}_j) $$
where $\phi_{i,j}$ is a convex function of a scalar. 
The above generic formulation captures several problems like matrix completion and matrix factorization \cite{recht2013parallel} (where $ \phi_{i,j} =  (A_{i,j}-{\bf x}_i^T {\bf y}_j)^2$), word embeddings \cite{arora2015rand-walk} (where $ \phi_{i,j} =A_{i,j} (\log(A_{i,j}) - \|{\bf x}_i + {\bf x}_j\|_2^2 - C)^2$), graph $k$-way cuts \cite{niu2011hogwild} (where $\phi_{i,j}=A_{i,j}\|{\bf x}_i-{\bf x}_j\|_1$), and others.
Let $m_1=m_2=m$ for simplicity, and assume that we aim to minimize the above by sampling a single function $\phi_{i,j}$ and then updating ${\bf x}_i$ and ${\bf y}_j$ using SGD. 
Here, the number of update functions is proportional to $n = m^2$, and for the above setup each gradient update with respect to the sampled function $ \phi_{i,j}({\bf x}_i,{\bf y}_j)$ is only interacting with the variables ${\bf x}_i$ and ${\bf y}_j$, i.e., only {\it two} variable vectors out of the $2m$ many (i.e., $\Delta_L = 2$). Moreover, the previous imply a conflict degree of at most $\Delta = 2m$. 
In this case, \cyc{} can provably guarantee an $\widetilde{\Omega}(P)$ speedup for up to $P=O(m)$ cores. 
\end{example}

In our experiments we test \cyc{} on several problems including least squares, classification with logistic models, matrix factorization, and word embeddings, and several algorithms including SGD, SVRG, and SAGA.
We show that in most cases it can significantly outperform the \HW{} implementation of these algorithms, if the data is sparse.

\begin{rem}
We would like to note, that there are several cases where there might be a few outlier updates with extremely high conflict degree.
In Appendix~\ref{app:outliers}, we prove that if there are no more than $O(n^\delta)$ vertices of high conflict degree $\Delta_o$, and the rest of the vertices have max degree at most $\Delta$, then 
the result of Theorem~\ref{theo:informal} still holds in expectation.
\end{rem}

In the following section, we establish the technical results behind \cyc{} and  provide the details behind our parallelization framework.

\section{\cyc{}: Shattering Dependencies}
\label{sec:cyc}

\cyc{} consists of three computational components as shown in
Figure ~\ref{fig:cyc}.

\begin{wrapfigure}{r}{0.5\textwidth}
  \vspace{-0.3cm}
  \centering
  \includegraphics[width=0.45\textwidth]{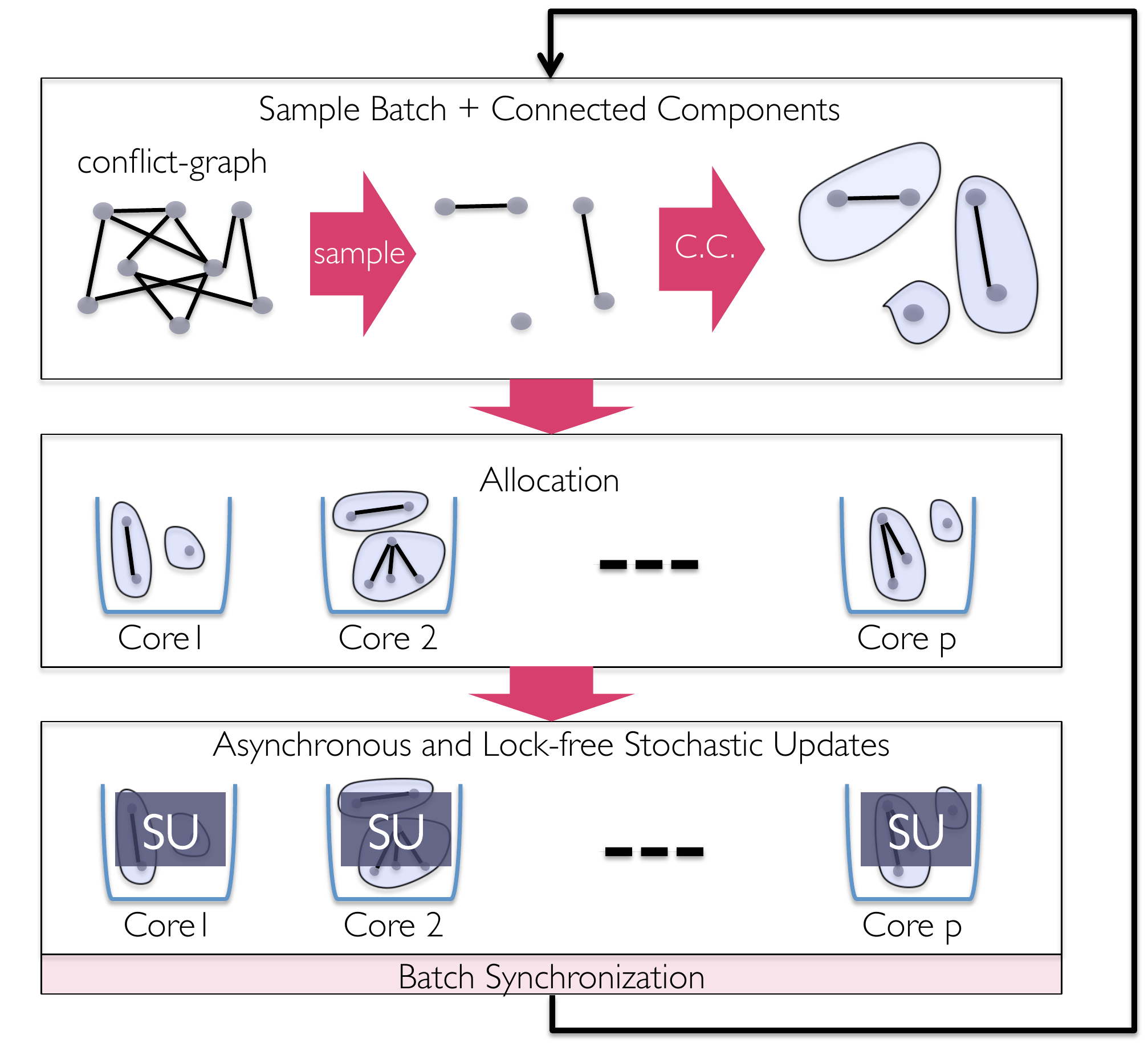}
  \caption{{\small \cyc{} carefully samples updates, then finds conflict-groups, and allocates them across cores.
Then, each core asynchronously updates the shared model, without incurring any read/write conflicts. 
This is possible by processing all the conflicting updates within the same core. After the processing of a batch is completed, the above is repeated, for as many iterations as required.}}
  \label{fig:cyc}
  \vspace{-0.7cm}
\end{wrapfigure}

It starts by sampling (according
to a distribution $\setD$) a number of $B$ updates from the graph shown in
Fig.~\ref{fig:graph_example}, and assigns a label to each of them (a processing order).
We note that in practice the sampling is done on the bipartite graph, which avoids the need to actually construct the conflict graph.
After sampling, it computes the connected
components of the sampled subgraph induced by the $B$ sampled updates, to determine the conflict groups. 
Once the conflicts groups are formed, it allocates them across $P$ cores.
Finally, each core processes locally the conflict groups of updates that it has been assigned, following the order that each
update has been labeled with.  The above process is then repeated, for as many iterations as needed.

The key component of \cyc{} is to carry out the sampling in such a way that we have as many connected components as possible, and all of them of small size, provably.
In the next subsections, we explain how each part is carried out, and provide theoretical guarantees for each of them individually, which we combine at the end of this section for our main theorem.

\paragraph{Frugal sampling shatters conflicts.}

A key technical aspect that we exploit in \cyc{} is that appropriate sampling and allocation of updates can lead to near optimal parallelization of sparse SU algorithms. To do that we expand upon the following result established in \cite{krivelevich2016phase}.
\begin{theo} 
Let $G$ be a graph on $n$ vertices, with maximum vertex degree $\Delta$.
Let us sample each vertex independently with probability 
$p = \frac{1-\epsilon}{\Delta}$ and define as $G'$ the induced subgraph on the sampled vertices.
Then, the largest connected component of $G'$ has size at most 
$\frac{4}{\epsilon^2} \log n$,
with high probability.
\label{theo:Gsample1}
\end{theo}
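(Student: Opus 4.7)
The natural approach is a BFS exploration argument together with stochastic domination by a subcritical branching process. Fix an arbitrary vertex $v \in G$ and explore the component $C(v)$ of $v$ in the sampled subgraph $G'$ using breadth-first search, revealing at each step whether previously un-queried neighbors are in the sample or not. Because every vertex has at most $\Delta$ neighbors in $G$, after discovering the $t$-th vertex of $C(v)$ the BFS has exposed at most $\Delta$ fresh edges, each of which adds a new vertex to $C(v)$ independently with probability $p = (1-\epsilon)/\Delta$. Thus the size of $C(v)$ is stochastically dominated by the total progeny of a Galton--Watson process with offspring distribution $\mathrm{Bin}(\Delta, p)$, whose mean $p\Delta = 1-\epsilon < 1$ is subcritical.

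The concrete bound I would use is: if $|C(v)| \ge k$, then the BFS must successfully find $k-1$ new vertices in the first $k-1$ ``expansion rounds,'' and this requires at least $k-1$ successes among at most $(k-1)\Delta$ independent Bernoulli$(p)$ trials (each potential neighbor is queried at most once). Hence
\[
\Pr\bigl[|C(v)| \ge k\bigr] \;\le\; \Pr\bigl[\,\mathrm{Bin}((k-1)\Delta,\, p) \ge k-1\,\bigr].
\]
The mean of the binomial is $(k-1)(1-\epsilon)$, while we are asking for at least $k-1$ successes, i.e.\ a multiplicative deviation of factor $1/(1-\epsilon) = 1 + \epsilon/(1-\epsilon)$ above the mean. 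A standard multiplicative Chernoff bound then yields an estimate of the form $\exp\!\bigl(-c\,\epsilon^2\,k\bigr)$ for an absolute constant $c$.

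To finish, I would pick $k = \frac{4}{\epsilon^2}\log n$ so that the Chernoff estimate is at most $n^{-2}$ (the constant $4$ is what makes this work after tracking the $(1-\epsilon)$ factors carefully), and then take a union bound over all $n$ choices of the root vertex $v$. This gives that with probability at least $1 - 1/n$, no component of $G'$ exceeds $\frac{4}{\epsilon^2}\log n$ vertices, which is the claimed statement.

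\textbf{Main obstacle.} The conceptual structure (BFS domination plus Chernoff) is standard; the delicate part is twofold. First, one must argue cleanly that the BFS exposures really are independent Bernoulli$(p)$ trials even though the underlying graph $G$ is arbitrary --- the key point is that each \emph{vertex} is sampled independently, so once BFS decides which neighbor to query next, sampling status of that neighbor is independent of everything exposed so far. Second, one must chase the constants to obtain exactly $\tfrac{4}{\epsilon^2}$; this requires using the multiplicative Chernoff bound rather than a loose Hoeffding estimate, and keeping track of the $(1-\epsilon)^{-1}$ factors that arise when converting the additive deviation $k\epsilon - 1$ into a multiplicative one. Everything else reduces to routine concentration-inequality bookkeeping.
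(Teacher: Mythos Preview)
The paper does not prove this theorem at all: it is quoted verbatim as a result ``established in \cite{krivelevich2016phase}'' and used as a black box. The paper's own technical contribution around it is Theorem~\ref{theo:ourGsample}, which adapts the statement to with- and without-replacement sampling; that adaptation is carried out in Appendix~\ref{app:replacement} by a monotonicity/coupling argument that reduces the new sampling models back to the i.i.d.\ case covered by Theorem~\ref{theo:Gsample1}. So there is nothing in the paper to compare your proof against.

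That said, your plan is exactly the standard argument for this type of statement (and is essentially how the cited reference proceeds): BFS exploration from a fixed root, stochastic domination of the discovered component by the total progeny of a subcritical Galton--Watson tree with $\mathrm{Bin}(\Delta,p)$ offspring, a Chernoff tail bound on the cumulative number of successes, and a union bound over the $n$ possible roots. Your handling of the independence issue is correct --- because sampling is per-vertex and BFS queries each vertex at most once, each exposure is a fresh $\mathrm{Bernoulli}(p)$ trial regardless of the structure of $G$. The one place to be careful is the constant: with the textbook multiplicative Chernoff form $\Pr[X\ge(1+\delta)\mu]\le\exp(-\mu\delta^2/3)$ and $\delta=\epsilon/(1-\epsilon)$, the exponent you get is of order $(k-1)\epsilon^2/(3(1-\epsilon))$, which does not quite deliver the exact constant $4$ uniformly in $\epsilon$; getting precisely $4/\epsilon^2$ requires the sharper relative-entropy (Chernoff--Cram\'er) form of the bound and a little more bookkeeping. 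This is a cosmetic issue, not a gap in the argument.
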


The above result pays homage to the giant component phase transition phenomena in random Erdos-Renyi graphs.
What is surprising is that a similar phase transition can apply for any given graph!

\paragraph{Adapting to ML-friendly sampling procedures.}

In practice, for most SU algorithms of interest, the sampling distribution of updates is either with or without replacement from the $n$ updates. 
As it turns out, morphing Theorem~\ref{theo:Gsample1} into a with-/without-replacement result is not straightforward.
We defer the analysis needed to Appendix \ref{app:replacement}, and present our main theorem about graph sampling here.

\begin{theo}
Let $G$ be a graph on $n$ vertices, with maximum vertex degree $\Delta$.
Let us sample $B = (1-\epsilon)\frac{n}{\Delta}$ vertices with or without replacement, and define as $G'$ the induced subgraph on the sampled vertices.
Then, the largest connected component of $G'$ has size at most $O(\frac{ \log n}{\epsilon^2})$,
with high probability.
\label{theo:ourGsample}
\end{theo}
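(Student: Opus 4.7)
The plan is to deduce Theorem~\ref{theo:ourGsample} from the Bernoulli-sampling Theorem~\ref{theo:Gsample1} via a coupling-plus-monotonicity argument, after absorbing a little slack by setting $p := (1-\epsilon/2)/\Delta$ instead of $(1-\epsilon)/\Delta$. The key observation that makes this work is that the ``largest connected component of the induced subgraph'' is a monotone functional of the vertex set: enlarging the sample can only enlarge the largest component. So if I can couple the fixed-size sample $T$ (of size $B$) into a Bernoulli$(p)$ sample $S_p$ such that $T \subseteq S_p$ with high probability, then the $O(\log n/\epsilon^2)$ bound that Theorem~\ref{theo:Gsample1} gives for $S_p$ (applied with $\epsilon' = \epsilon/2$) transfers directly to $T$.

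For \emph{without replacement}, the coupling is the classical one: draw $S_p$ as a Bernoulli sample, and, conditional on $|S_p| \ge B$, let $T$ be a uniformly random size-$B$ subset of $S_p$. A short combinatorial identity (summing $\binom{n-B}{|S_p|-B}\, p^{|S_p|}(1-p)^{n-|S_p|} / \binom{|S_p|}{B}$ over the possible values of $|S_p|$) verifies that the resulting $T$ is exactly uniform over size-$B$ subsets of $[n]$, as required. Since $\mathbb{E}|S_p| = (1-\epsilon/2)n/\Delta$ exceeds $B$ by a multiplicative $\Theta(\epsilon)$ factor, Chernoff yields $\Pr[|S_p| < B] \le \exp(-\Omega(\epsilon^2 n/\Delta))$, which is the high-probability regime of interest; and whenever $n/\Delta$ is so small that this tail bound is weak, the conclusion is anyway trivial because $|T| \le B \le O(\log n/\epsilon^2)$.

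For \emph{with replacement}, the indicators $\mathbf{1}[i \in T]$ are no longer independent, so I would route through Poissonization to restore product structure. Let $N \sim \mathrm{Poisson}(np)$ and draw $N$ i.i.d.\ uniform elements from $[n]$. By Poisson splitting, the multiplicity of each vertex $i$ is an independent $\mathrm{Poisson}(p)$, so the set of distinct vertices drawn is a product Bernoulli sample with marginal $1-e^{-p} \le p$, which I can couple pointwise inside $S_p$. On the event $N \ge B$ (again high probability by Chernoff, using the same $\epsilon/2$ slack), the first $B$ of these Poisson draws constitute a bona fide with-replacement sample of size $B$ whose distinct support sits inside $S_p$. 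Monotonicity then finishes the argument.

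The only delicate point is calibrating the slack: both couplings rely on the high-probability events $|S_p| \ge B$ or $N \ge B$, and I must ensure the corresponding Chernoff failure probabilities are absorbed cleanly into the final ``with high probability'' conclusion. This is exactly why one cannot take $p = (1-\epsilon)/\Delta$ directly and must reserve an $\epsilon/2$ margin; it is the step most prone to miscalibrated constants. Once the margin is fixed, everything else (monotonicity, Theorem~\ref{theo:Gsample1}, Poisson splitting, and the uniform-subset identity) is mechanical.
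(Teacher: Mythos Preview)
Your argument is correct, but it takes a different route from the paper's. Both proofs rest on the same key observation---that the size of the largest component is a \emph{monotone} functional of the sampled vertex set---but they exploit it differently. The paper conditions the i.i.d.\ Bernoulli sample on the event $\sum_i X_i = B$, observes that this conditional law is exactly the without-replacement law, and then lower-bounds the binomial point mass at its mode by $1/n$ to obtain $\rho_Y \le n\cdot \rho_X$; the extra factor of $n$ is then absorbed by restating Theorem~\ref{theo:Gsample1} with a stronger $1 - n^{-\zeta}$ guarantee. Your coupling approach instead spends an $\epsilon/2$ margin to force $T\subseteq S_p$ with Chernoff-small failure probability, which is the standard device in random-graph monographs and yields a sharper tail without revisiting Theorem~\ref{theo:Gsample1}. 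For the \emph{with-replacement} case the paper's argument is considerably simpler than your Poissonization: since a with-replacement draw of $B$ items produces at most $B$ distinct vertices, and since (by symmetry) conditioning on the number $b$ of distinct vertices makes that set uniform over $\binom{[n]}{b}$, monotonicity immediately gives $\rho_Z \le \rho_Y$ with no further coupling. Your Poissonization is correct but unnecessary here; you could replace it with this one-line domination and save yourself the $N\ge B$ event entirely.
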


The key idea from the above theorem is that if one samples no more than $B = (1-\epsilon)\frac{n}{\Delta}$ vertices, then there will be at least $O\left(\sfrac{\epsilon^2 B}{\log n} \right)$ conflict groups to allocate across cores, all of size at most $O\left(\log n /\epsilon^2\right)$. 
Moreover,  since there are no conflicts between different conflict-groups, the processing of updates per any single group will never interact with the variables corresponding to the updates of another conflict group.

The next step of \cyc{} is to form and allocate the connected components (CCs) across cores, and do so efficiently. 
We address this in the following subsection.
In the following, for simplicity we carry our analysis for the with-replacement sampling case, but it can be readily extended to the without-replacement sampling case.

\paragraph{Identifying Groups of Conflict via CCs}

In \cyc{}, we sample batches of updates of size $B=(1-\epsilon)\frac{n}{\Delta}$ multiple times, and for each batch we need to identify the conflict groups across the updates.  
Let us refer to 
$G_u^i$ as the subgraph induced by the $i$th sampled batch of updates on the
update-variable bipartite graph $G_u$.
In the following we always assume that we sample at most $n_b = c\cdot \frac{\Delta}{1-\epsilon}$ batches, where $c\ge 1$ is a constant that does not depend on $n$.
This number of batches results in a constant number of passes over the dataset.

Identifying the conflict groups in $G_u^i$ can be done with a connected components (CC) algorithm.
The main question we need to address is what is the best way to parallelize this graph partitioning part.
There are two avenues that we can take for this, depending on the number of cores $P$ at our disposal.
We can either parallelize the computation of the CCs of a single batch (i.e., compute the CCs of $G_u^i$ on $P$ cores), or we can compute in parallel the CCs of all $n_b$ batches, by allocating the sampled graphs $G_u^i$ to cores, so that each of them can compute the CCs of its allocated subgraphs. 
Depending on the number of available cores, one technique can be better than the other.
In Appendix \ref{sec:cc} we provide the details of this part, and prove the following result:

\begin{lem}
Let the number of cores by bounded as $P = O(\frac{n}{\Delta \Delta_{\text{L}} })$, and let $ \frac{\Delta_{\text{L}}}{\overline{\Delta}_{\text{L}}} \le \sqrt{n}$.
Then, the overall computation of CCs for $n_b = c\cdot \frac{\Delta}{1-\epsilon}$ batches, each of size $B = (1-\epsilon)\frac{n}{\Delta}$, costs no more than
$O(\frac{E_u \log^2 n}{P})$.
\end{lem}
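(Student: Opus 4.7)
My approach is to bound the total sequential work across the $n_b$ batches by $O(E_u)$ in expectation, and then to parallelize it by one of two strategies depending on how large $P$ is relative to $n_b$. The $\log^2 n$ overhead in the claim will arise from the depth of a parallel CC primitive used inside a single batch when $P$ is large, while the ratio hypothesis $\Delta_L/\overline{\Delta}_L \le \sqrt{n}$ will be used to control per-core load concentration when $P$ is small.

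First I would verify the total-work bound. Each sampled update $u_j$ contributes its left-degree to the bipartite subgraph $G_u^i$, so $\E|E(G_u^i)| = B \cdot \overline{\Delta}_L = (1-\epsilon) E_u/\Delta$. Summing over $n_b = c\Delta/(1-\epsilon)$ batches gives expected total edge count $O(E_u)$, and since a sequential BFS or union-find CC routine runs in time $O(|V(G_u^i)| + |E(G_u^i)|) = O(B + m_i)$, the total sequential work across all batches is $O(n_b B + E_u) = O(E_u)$ (using $n_b B = cn \le E_u$ when $\overline{\Delta}_L \ge 1$).

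Then I would split on $P$. If $P \le n_b$, I would distribute whole batches, $n_b/P$ per core, and run a sequential CC inside each core. Each core's load is a sum of $n_b/P$ per-batch edge counts, each bounded by $B\Delta_L$ with expectation $B\overline{\Delta}_L$; a Bernstein-type argument with variance proxy $\sigma^2 = (n_b/P)\cdot B\Delta_L \cdot B\overline{\Delta}_L$ then concentrates the per-core load at $O(E_u/P)$ up to polylog factors, and the ratio hypothesis is exactly what keeps the deviation term sub-leading. If instead $P > n_b$, I would process batches one at a time and invoke a parallel CC primitive such as Shiloach--Vishkin, with work $O(m_i \log n)$ and depth $O(\log^2 n)$ per batch. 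Brent scheduling gives per-batch time $O(m_i \log n/P + \log^2 n)$, so the total is
\[
O\!\left(\frac{E_u \log n}{P} + n_b \log^2 n\right).
\]
The hypothesis $P = O(n/(\Delta \Delta_L))$ together with $n_b = \Theta(\Delta)$ gives $P n_b = O(n/\Delta_L) \le n\overline{\Delta}_L = E_u$, so $n_b \log^2 n \le E_u \log^2 n/P$ and the additive span is absorbed into the stated bound.

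The main obstacle I expect is precisely the load-balancing argument in the first regime: a single batch can in principle contain as many as $B\Delta_L$ edges, a factor $\Delta_L/\overline{\Delta}_L$ above the mean, so without the stipulated ratio bound a heavy batch could dominate one core's runtime and destroy the speedup. The square-root threshold $\Delta_L/\overline{\Delta}_L \le \sqrt{n}$ is precisely the level at which the variance of the per-core sum of batch sizes is small enough for a Bernstein-type inequality to yield $O(E_u/P)$ per-core load up to polylog factors, which then slots cleanly into the $O(E_u \log^2 n/P)$ target of the lemma.
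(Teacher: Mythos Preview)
Your overall strategy is sound and arrives at the right bound, but it differs from the paper's argument in two substantive ways, and one of your case-splits has a small gap.

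\textbf{Where the paper differs.} In the many-cores regime the paper does \emph{not} appeal to a general parallel CC primitive like Shiloach--Vishkin. Instead it uses a trivially simple ``push-label'' iteration whose number of rounds is bounded by the diameter of the sampled subgraph, and then invokes Theorem~\ref{theo:ourGsample} (the shattering result) to say that each component --- hence the diameter --- is $O(\frac{\log n}{\epsilon^2})$. So the $\log n$ depth factor comes directly from the structural fact that sampling shatters the graph, which is the central theme of the paper. Your route is perfectly valid but is oblivious to this structure: you buy the $\log^2 n$ depth from a black-box PRAM algorithm rather than from the sampling theorem. The paper's argument is conceptually tighter (it reuses its own main tool), while yours is more portable (no reliance on the CC size bound). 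Second, the paper uses the hypothesis $\Delta_L/\overline{\Delta}_L\le\sqrt{n}$ only once, in a Hoeffding bound showing that the \emph{total} edge count $\sum_i E_u^i$ is $O(E_u\log n)$ with high probability; you instead try to use it inside a per-core Bernstein argument.

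\textbf{The gap in your few-cores case.} Your split at $P\le n_b$ versus $P>n_b$ is not quite right. When $P$ is just below $n_b$, each core gets only $O(1)$ batches, and the $M\log n$ term in Bernstein (with $M=B\Delta_L$) forces $P\lesssim \Delta\,\overline{\Delta}_L/\Delta_L$ to get $O(E_u/P)$ per core up to logs --- which is the paper's threshold, not yours. So for $\Delta\,\overline{\Delta}_L/\Delta_L \ll P \le n_b$ your Bernstein argument does not close. The easy fix is to observe that your many-cores argument (process batches sequentially, parallel CC inside each) actually works for \emph{all} $P=O(n/(\Delta\Delta_L))$: you already showed $Pn_b=O(n/\Delta_L)\le n\overline{\Delta}_L=E_u$, which absorbs the $n_b\log^2 n$ span regardless of whether $P>n_b$. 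So you can simply drop the few-cores branch, or move the split to $P=O(\Delta\,\overline{\Delta}_L/\Delta_L)$ as the paper does.
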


\paragraph{Allocating Updates to Cores}

Once we compute the CCs (i.e., the conflicts groups of the sampled updates), we have to allocate them across cores.
Once a core has been assigned with CCs, it will process the updates included in these CCs, according to the order that each update has been labeled with.
Due to Theorem~\ref{theo:ourGsample}, each connected component will contain at most $O(\frac{\log n}{\epsilon^2})$ updates.
Assuming that the cost of the $j$-th update in the batch is $w_j$, the cost of a single connected component $\setC$ will be $w_{\setC} = \sum_{j\in \setC} w_j$. 
To proceed with characterizing the maximum load among the $P$ cores, we assume that the cost of a single update $u_i$, for $i\in\{1,\ldots, n\}$, is proportional to the out-degree of that update ---according to the update-variable graph $G_u$--- times a constant cost which we shall refer to as $\kappa$. Hence, $w_j = O(d_{L,j} \cdot \kappa)$, where $d_{L,j}$ is the  degree of the $j$-th left vertex of $G_u$.
In Appendix \ref{sec:allocation} we establish that a near-uniform allocation of CCs according to their weights leads to the following guarantee.

\begin{lem}
Let the number of cores by bounded as $P = O(\frac{n}{\Delta \Delta_{\text{L}} })$, and let $ \frac{\Delta_{\text{L}}}{\overline{\Delta}_{\text{L}}} \le \sqrt{n}$.
Then, computing the stochastic updates across all $n_b= c\cdot \frac{\Delta}{1-\epsilon}$ batches can be performed in time
$O(\frac{E\log^2 n}{P} \cdot \kappa )$, with high probability, where $\kappa$ is the per edge cost for computing one of the $n$ updates defined on $G_u$.
\end{lem}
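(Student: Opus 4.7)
The plan is to decompose the total running time across the $n_b$ batches into a per-batch makespan and then sum. Within each batch I will combine three ingredients: (i) the CC size bound from Theorem~\ref{theo:ourGsample} to cap the weight of any single scheduling unit, (ii) a concentration bound on the aggregate sampled work, and (iii) a greedy list-scheduling allocation of CCs onto the $P$ cores. Let $W_i$ denote the total work of batch $i$, i.e., the sum of the per-update costs $w_j$ of the $B$ updates sampled into batch $i$.

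For (i), Theorem~\ref{theo:ourGsample} guarantees that every CC consists of at most $O(\log n / \epsilon^2)$ updates; since each update $u_j$ has cost $w_j = O(d_{L,j}\kappa) \le O(\Delta_L \kappa)$ by assumption, the maximum CC weight is at most $w_{\max}^{\mathrm{CC}} = O(\Delta_L \log n \cdot \kappa / \epsilon^2)$. For (ii), the $n_b B = \Theta(n)$ updates sampled across all batches are i.i.d.\ draws (with replacement) with mean $\overline{\Delta}_L \kappa$, second moment at most $\Delta_L \overline{\Delta}_L \kappa^2$, and bounded support $\Delta_L \kappa$. Bernstein's inequality then gives a deviation exponent scaling as $n\overline{\Delta}_L/\Delta_L \ge \sqrt{n}$ after invoking $\Delta_L/\overline{\Delta}_L \le \sqrt{n}$, so $\sum_{i=1}^{n_b} W_i = O(E\kappa)$ with high probability.

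For (iii), list-scheduling CCs onto $P$ cores yields makespan at most $W_i/P + w_{\max}^{\mathrm{CC}}$ per batch; CCs are conflict-free so once allocated they execute with no synchronization. Summing over batches gives
\[
\text{total time} \;\le\; \frac{1}{P}\sum_{i=1}^{n_b} W_i \;+\; n_b\, w_{\max}^{\mathrm{CC}} \;=\; O\!\left(\frac{E\kappa}{P}\right) \;+\; O\!\left(\frac{\Delta\,\Delta_L \log n \cdot \kappa}{\epsilon^2}\right).
\]
The core-count cap $P = O(n/(\Delta \Delta_L))$ rewrites the second term as $O(n\log n\cdot \kappa/P) \le O(E \log n\cdot \kappa / P)$, using $E \ge n$ (every update touches at least one variable), which yields the advertised $O(E \log^2 n \cdot \kappa / P)$ bound.

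The main obstacle is the high-probability step: a per-batch Bernstein is too weak because $B\overline{\Delta}_L/\Delta_L$ can be sub-logarithmic, so one must amortize concentration over all $n_b B = \Theta(n)$ samples at once. This is precisely where the hypothesis $\Delta_L/\overline{\Delta}_L \le \sqrt{n}$ pays off, turning the variance-to-mean ratio into an $\exp(-\Omega(\sqrt{n}))$ tail. The complementary delicate point is that the list-scheduling slack $n_b\, w_{\max}^{\mathrm{CC}}$ must be absorbed into the $E \log^2 n / P$ budget, which is exactly what the cap $P \le O(n/(\Delta\Delta_L))$ is tuned to accomplish.
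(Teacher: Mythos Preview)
Your proof is correct and follows essentially the same route as the paper: bound the maximum CC weight by $O(\Delta_L \log n\,\kappa)$ via Theorem~\ref{theo:ourGsample}, control the aggregate sampled work over all $n_b B=\Theta(n)$ draws by concentration under the hypothesis $\Delta_L/\overline{\Delta}_L\le\sqrt{n}$, apply a greedy scheduling bound per batch, and absorb the $n_b\,w_{\max}^{\mathrm{CC}}$ slack using $P=O(n/(\Delta\Delta_L))$ together with $E\ge n$. The only cosmetic differences are that the paper uses Hoeffding (yielding $\sum_i W_i=O(E\kappa\log n)$, hence the second $\log n$ in the stated bound) where you use Bernstein (which in fact shaves that $\log n$), and the paper invokes the sorted-greedy $4/3$-approximation rather than the list-scheduling $W_i/P+w_{\max}$ bound; both pairs of choices are interchangeable here.
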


\paragraph{Stitching the pieces together}

Now that we have described the sampling, conflict computation, and allocation strategies, we are ready to put all the pieces together and detail \cyc{} in full.
Let us  assume that we sample a total number of $n_b = c\cdot \frac{\Delta}{1-\epsilon}$ batches of size $B=(1-\epsilon)\frac{n}{\Delta}$, and that each update is sampled uniformly at random.
For the $i$-th batch let us denote as $\setC_1^i, \ldots \setC_{m_i}^i$ the connected components on the induced subgraph $G_u^i$.
Due to Theorem~\ref{theo:ourGsample}, each connected component $\setC$ contains a number of at most $O(\frac{\log n}{\epsilon^2})$ updates, and each update carries an ID (the order of which it would have been sampled by the serial algorithm).
Using the above notation, we give the pseudocode for \cyc{} in Alg. \ref{alg:SU}.

\begin{wrapfigure}{R}{0.5\columnwidth}
\vspace{-0.3cm}
\begin{minipage}{0.5\columnwidth}
\begin{algorithm}[H]
 \caption{\cyc{}}
   {\small
\begin{algorithmic}[1]
\STATE {\bf Input:} $G_u, T, B$. 
\STATE Sample $n_b = T/B$ subgraphs $G_u^1, \ldots, G_u^{n_b}$ from $G_u$
\STATE Cores compute in parallel CCs for sampled subgraphs
\FOR{batch $i=1:n_b$}
\STATE Allocation of $\setC_1^i, \ldots \setC_{m_i}^i$ to $P$ cores
\vspace{0.1cm}
\hrule
\vspace{0.1cm}
\FOR{each core {\bf in parallel}}
\FOR{each allocated component $\mathcal{C}$}
\FOR{each update $j$ (in order) from $\mathcal{C}$}
\STATE ${\bf x}_{\mathcal{S}_j}  = u_j({\bf x}_{\mathcal{S}_j}, f_j)$ 
\ENDFOR
\ENDFOR
\ENDFOR
\ENDFOR
\vspace{0.1cm}
\hrule
\vspace{0.1cm}
\STATE {\bf Output:} {\bf x}
\end{algorithmic}
}
   \label{alg:SU}
 \end{algorithm}
\end{minipage}
\end{wrapfigure}

Note that the inner loop that is parallelized (i.e., the SU processing loop in lines 6 -- 9), can be performed asynchronously; cores do not have to synchronize, and do not need to lock any memory variables, as they are all accessing non-overlapping subset of $\vecx$. 
This also provides for better cache coherence.
Moreover, each core potentially accesses the same coordinates several times, leading to good cache locality.
These improved cache locality and coherence properties experimentally lead to substantial performance gains as we see in the next section.

We can now combine the results of the previous subsection to obtain our main theorem for \cyc{}.
\begin{theo}
Let us assume any given update-variable graph $G_u$ with average, and max left degree $ \overline{\Delta}_L$ and $ {\Delta}_L$, such that $ \frac{\Delta_{\text{L}}}{\overline{\Delta}_{\text{L}}} \le \sqrt{n}$, and with induced max conflict degree $\Delta$.
Then, \cyc{} on $P  = O( \frac{n}{\Delta \cdot \Delta_L})$ cores, with batch sizes $B = (1-\epsilon)\frac{n}{\Delta}$ can execute $T = c\cdot n$ updates, for any constant $c\ge 1$, selected uniformly at random with replacement, in time 
$$\mathcal{O}\left( \frac{E_u \cdot \kappa}{P}\cdot  \log^2 n \right),$$ 
with high probability.
\label{theo:main}
\end{theo}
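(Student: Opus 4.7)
The plan is to assemble Theorem \ref{theo:main} directly from the three building blocks already in place: Theorem \ref{theo:ourGsample} (each sampled batch shatters into tiny conflict groups), the CC-computation lemma, and the allocation lemma. The only real bookkeeping is to check that the parameter regimes of all three results are simultaneously satisfied under the hypotheses of the theorem, and to control the high-probability statements uniformly across batches.

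First, I would translate the iteration count into batch count. With $B=(1-\epsilon)n/\Delta$ and $T=cn$, the number of batches is $n_b = T/B = c\Delta/(1-\epsilon)$, which matches the ``$c\cdot \Delta/(1-\epsilon)$ batches'' hypothesis used in the two cost lemmas. Because sampling is uniform with replacement and batches are drawn independently, Theorem \ref{theo:ourGsample} applies to each $G_u^i$: with high probability every connected component has size $O(\log n/\epsilon^2)$. A union bound over the $n_b = O(\Delta) \le \text{poly}(n)$ batches preserves the ``with high probability'' guarantee, so we may condition on this event for the remainder of the argument.

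Next I would add the two cost contributions. The CC-computation lemma bounds the total time spent partitioning all $n_b$ sampled subgraphs into conflict groups by $O(E_u \log^2 n / P)$, under exactly the hypotheses $P = O(n/(\Delta\Delta_L))$ and $\Delta_L/\overline{\Delta}_L \le \sqrt{n}$ assumed in the theorem. The allocation lemma, under the same hypotheses, says that once conflict groups are formed, the near-uniform weighted allocation followed by asynchronous processing finishes in $O(E_u \kappa \log^2 n / P)$ time with high probability. Summing the two contributions (and noting $\kappa \ge 1$, so the processing cost dominates, or absorbing constants otherwise) gives the stated runtime $\mathcal{O}(E_u \kappa \log^2 n / P)$. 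Correctness (i.e.\ serial equivalence) follows from the fact that updates within a conflict group are executed in their sampled order on a single core, and updates across conflict groups touch disjoint coordinates of $\vecx$, so reorderings across cores cannot affect the outcome.

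The main obstacle, and what distinguishes this from a one-line corollary, is ensuring all the high-probability events line up cleanly: the shattering event from Theorem \ref{theo:ourGsample}, the load-balance event implicit in the allocation lemma, and the bounds on per-batch CC cost all need to hold simultaneously across $n_b$ batches. I would handle this by a single union bound, using that $n_b = O(\Delta) = \text{poly}(n)$ so each ``$1 - n^{-\Omega(1)}$'' failure probability inflates by only a polynomial factor and can be absorbed by enlarging the hidden constants in the $\log^2 n$ term. A secondary subtlety worth flagging is that the allocation lemma's cost accounting uses $w_j = O(d_{L,j}\kappa)$, so the total work per batch telescopes to $O(E_u^{(i)} \kappa)$ where $E_u^{(i)}$ is the edge count of the sampled subgraph; summing over $n_b = \Theta(\Delta)$ batches each of expected edge-mass $\Theta(E_u/\Delta)$ gives the $E_u \kappa$ factor, and the extra $\log^2 n$ tracks both the CC-size bound and the load-balance slack.
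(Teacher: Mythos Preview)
Your proposal is correct and follows exactly the route the paper takes: the paper's own ``proof'' of Theorem~\ref{theo:main} is literally the sentence ``We can now combine the results of the previous subsection to obtain our main theorem,'' i.e., stitch together Theorem~\ref{theo:ourGsample}, Lemma~1, and Lemma~2 under the common hypotheses. Your write-up is in fact more careful than the paper's, since you explicitly address the union bound over $n_b = O(\Delta)$ batches and the serial-equivalence argument, both of which the paper leaves implicit.
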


Observe that \cyc{} bypasses the need to establish convergence guarantees for the parallel algorithm. Hence, it could be the case for many applications of interest that although we might not be able to analyze how ``well'' the serial SU algorithm might perform in terms of the accuracy of the solution, \cyc{} can provide black box guarantees for speedup, since our analysis is completely oblivious to the qualitative performance of the serial algorithm.  
This is in contrast to recent studies similar to \cite{de2015taming}, where the authors provide speedup guarantees via a convergence-to-optimal proof for an asynchronous SGD on a nonconvex problem. Unfortunately these proofs can become complicated especially on a wider range of nonconvex objectives.

In the following section we show that \cyc{} is not only useful theoretically, but can consistently outperform \HW{} on sufficiently sparse datasets.

\section{Evaluation}
\label{sec:expts}

\subsection{Implementation and Setup} 
We implemented \cyc{} in \texttt{C++} and tested it on a variety of problems and datasets described below.
We tested a number of stochastic updates algorithms, and compared against their \hog{} (i.e., asynchronous and lock-free) implementations --- in some cases, there are no theoretical foundations for these \hog{} implementations, even if they work reasonably well in practice.
Since \cyc{} is intended to be a general approach for parallelization of stochastic updates algorithms, we do not compare against algorithms designed and tailored for specific applications, nor do we expect \cyc{} to outperform every such highly-tuned, well-designed, specific algorithms.

Our experiments were conducted on a machine with 72 CPUs (Intel(R) Xeon(R) CPU E7-8870 v3, 2.10 GHz) on 4 NUMA nodes, each with 18 CPUs, and 1TB of memory.
We ran both \cyc{} and \hog{} with 1, 4, 8, 16 and 18 threads pinned to CPUs on
a single NUMA node (i.e., the maximum physical number of cores possible, for a single node), so that we can avoid well-known cache coherence and scaling issues across different nodes \cite{zhang2014dimmwitted}. 
 We note that distributing threads across NUMA nodes
significantly increased running times for both \cyc{} and \hog{}, but was
relatively worse for \hog{}.  We believe this is due to the poorer locality of
\hog{}, which results in more cross-node communication.
In this paper, we exclusively focus our study and experiments on parallelization within a single NUMA node, and leave cross-NUMA node parallelization for future work, while referring the interested reader to a recent study of the various tradeoffs of ML algorithms on NUMA aware architectures \cite{zhang2014dimmwitted}.

In our experiments, we measure overall running times which include the overheads for computing connected components and allocating work in \cyc{}.
Separately, we also measure running times for performing the stochastic updates by excluding the \cyc{} coordination overheads.
We also compute the objective value at the end of each epoch (i.e., one full pass over the data).
We measure the speedups for each algorithm as
\[
\frac{\text{time of the parallel algorithm to reach $\epsilon$ objective}}{\text{time of the serial algorithm to reach $\epsilon$ objective}}
\]
where $\epsilon$ was chosen to be the smallest objective value that is achievable by all parallel algorithms on every choice of number of threads.
That is, $\epsilon = \max_{\mathcal{A},T}\min_{e} f(X_{\mathcal{A},T,e})$
where $X_{\mathcal{A},T,e}$ is the model learned by algorithm $\mathcal{A}$ on $T$ threads after $e$ epochs.
The serial algorithm used for comparison is \hog{} running serially on one thread.

\begin{table}[t]
\begin{center}
\begin{small}
\begin{tabular}{|c||c|c|c|c|l|}
\hline
\multirow{3}{*}{Dataset} & \multirow{3}{*}{\# datapoints} & \multirow{3}{*}{\# features} & Density (average & \multirow{3}{*}{Comments}\\
&  &  & number of features &\\
& & & per datapoint)& \\
\hline\hline
\multirow{2}{*}{NH2010} & \multirow{2}{*}{48,838} & \multirow{2}{*}{48,838} & \multirow{2}{*}{4.8026} & Topological graph of 49 \\
& & & &Census Blocks in New Hampshire.\\\hline
\multirow{2}{*}{DBLP} & \multirow{2}{*}{5,425,964} & \multirow{2}{*}{5,425,964} & \multirow{2}{*}{3.1880} & Authorship network of 1.4M authors\\
& & & & and 4M publications, with 8.65M edges.\\\hline
\multirow{2}{*}{MovieLens}  &\multirow{2}{*}{$\sim$10M} & \multirow{2}{*}{82,250} & \multirow{2}{*}{200} & 10M movie ratings from 71,568  users\\
& & & & for 10,682 movies.\\\hline
EN-Wiki & 20,207,156 & 213,272 & 200 & Subset of English Wikipedia dump.\\\hline
\end{tabular}
\end{small}
\caption{\small 
Details of datasets used in our experiments.
}
\label{tab:datasets}
\end{center}
\end{table}

In Table \ref{tab:datasets} we list some details of the datasets that we use in our experiments.
The stepsizes and batch sizes used for each problem are listed in Table \ref{tab:exptparams}, along with dataset and problem details.
In general, we chose the stepsizes to maximize convergence without diverging.
Batch sizes were picked to optimize performance for \cyc{}.

\begin{table}[t]
\begin{center}
\begin{small}
\begin{tabular}{|c|c|c||c|c|c|c|c|}
\hline
\multirow{3}{*}{Problem} & \multirow{3}{*}{Algorithm} & \multirow{3}{*}{Dataset} & \hog{} & \cyc{} & Batch & Average \# & Average size\\
& & & Stepsize & Stepsize & Size & of connected & of connected\\
& & & & & & components & components\\
\hline\hline
\multirow{2}{*}{Least squares} & \multirow{2}{*}{SAGA} & NH2010 & $1 \times 10^{-14}$ & $3 \times 10^{-14}$ & 1,000 & 792.98 & 1.257 \\\cline{3-8}
& & DBLP & $1 \times 10^{-5}$ & $3 \times 10^{-4}$ & 10,000 & 9410.34 & 1.062 \\\hline
\multirow{2}{*}{Graph eigen} & \multirow{2}{*}{SVRG} & NH2010 & $1 \times 10^{-5}$ & $1 \times 10^{-1}$ & 1,000 & 792.98 & 1.257 \\\cline{3-8}
& & DBLP & $1 \times 10^{-7}$ & $1 \times 10^{-2}$ & 10,000 & 9410.34 & 1.062 \\\hline
\multirow{2}{*}{Matrix comp} & SGD & \multirow{2}{*}{MovieLens} & \multicolumn{2}{|c|}{\multirow{2}{*}{$5\times 10^{-5}$}} & \multirow{2}{*}{5,000} & \multirow{2}{*}{1663.73} & \multirow{2}{*}{3.004} \\\cline{2-2}
& Weighted SGD & &\multicolumn{2}{|c|}{} & & &\\\hline
Word embed & SGD & EN-Wiki & \multicolumn{2}{|c|}{$1 \times 10^{-10}$} & 4,250 & 2571.51 & 1.653 \\\hline
\end{tabular}
\end{small}
\caption{\small Stepsizes and batch sizes for the various learning tasks in our evaluation.
We selected stepsizes that maximize convergence without diverging.
We also chose batch sizes to maximize performance of \cyc{}.
We further list the average size of connected components and the average number of connected components in each batch.
Typically there are many connected components with small average size, which leads to good load balancing for \cyc{}.
}
\label{tab:exptparams}
\end{center}
\vskip -0.1in
\end{table}

\subsection{Learning tasks and algorithmic setup}

\COND{}{\vspace{-0.4cm}}
\paragraph{Least squares via SAGA}
The first problem we consider is least squares: 
$$\min_{\bf x} \frac{1}{n} \|{\bf A}{\bf x}-{\bf b}\|_2^2 = \min_{\bf x} \frac{1}{n}\sum_{i=1}^n ({\bf a}_i^T{\bf  x} - b_i)^2$$
which we will solve using the SAGA algorithm \cite{defazio2014saga}, an incrimental gradient algorithm with faster than SGD rates on convex, or strongly convex functions.
In SAGA, we initialize ${\bf g}_i = \nabla f_i ({\bf x}_0)$ and iterate the following two steps
\begin{align*}
&{\bf x}_{k+1} = {\bf x}_k - \gamma \cdot \left(\nabla f_{s_k} ({\bf x}_k) - {\bf g}_{s_k} + \frac{1}{n} \sum_{i=1}^n {\bf g}_i\right)\\
&{\bf g}_{s_k} = \nabla f_{s_k} ({\bf x}_k).
\end{align*}
where $f_i({\bf x}) =  ({\bf a}_i^T{\bf  x} - b_i)^2$ and
$\nabla f_i({\bf x}) = 2 \left({\bf a}_i^T{\bf x} -b_i\right) {\bf a}_i.$
In the above iteration it is useful to observe that the updates can be performed in a sparse and "lazy" way. That is for any updates where the sampled gradients $\nabla f_{s_k}$ have non-overlapping support, we can still run them in parallel, and apply the vector of gradient sums at the end of a batch "lazily". 
We explain the details of the lazy updates in Appendix \ref{app:lazy}.
This requires computing the number of skipped gradient sum updates, say they were $\tau_j$ of them for each lazily updated coordinate $j$, which may be negative in \hog{} due to re-ordering of updates.
We thresholded $\tau_j$ when needed in the \hog{} implementation, as this produced better convergence for \hog{}.
Unlike other experiments, we used different stepsizes $\gamma$ for \cyc{} and \hog{}, as \hog{} would often diverge with larger stepsizes.
The stepsizes chosen for each were the largest such that the algorithms did not diverge.
We used the  DBLP and NH2010 datasets for this experiment, and set ${\bf A}$ as the adjacency matrix of each graph.
For NH2010, the values of ${\bf b}$ were set to population living in the Census Block.
For DBLP we used synthetic values: we set ${\bf b} = {\bf A}\tilde{{\bf x}} + 0.1 \tilde{{\bf z}}$, where $\tilde{{\bf x}}$ and $\tilde{{\bf z}}$ were generated randomly.
The SAGA algorithm was run for up to 500 epochs for each dataset.

\paragraph{Graph eigenvector via SVRG}
Given an adjacency matrix ${\bf A}$, the top eigenvector of ${\bf A}^T{\bf A}$ is useful in several applications such as spectral clustering, principle component analysis, and others.
In a recent work, \cite{jin2015robust} proposes an algorithm for computing the top eigenvector of ${\bf A}^T{\bf A}$ by running intermediate SVRG steps to approximate the shift-and-invert iteration.
Specifically, at each step SVRG is used to solve 
$$\min \frac{1}{2}{\bf x}^T (\lambda {\bf I} - {\bf A}^T{\bf A}) {\bf x} - {\bf b}^T{\bf x} = \min \sum_{i=1}^n \left( \frac{1}{2}{\bf x}^T \left(\frac{\lambda}{n} {\bf I} - {\bf a}_i {\bf a}_i^T\right) {\bf x} - \frac{1}{n}{\bf b}^T{\bf x}\right).$$
According to  \cite{jin2015robust}, if we initialize ${\bf y} = {\bf x}_0$ and assume $\|{\bf a}_i \|=1$, we have to iterate the following updates
\begin{align*}
&{\bf x}_{k+1} = {\bf x}_k - \gamma \cdot n \cdot \left( \nabla f_{s_k}({\bf x}_k)- \nabla f_{s_k}({\bf y})\right) + \gamma\cdot \nabla f ({\bf y})\\
\end{align*}
where after every $T$ iterations we update  ${\bf y} = {\bf x}_k$, and the stochastic gradients are of the form 
$\nabla f_i({\bf x}) = \left(\frac{\lambda}{n} {\bf I} -  {\bf a}_i {\bf a}_i^T\right){\bf x} - \frac{1}{n}{\bf b}.$

We apply \cyc{} to SVRG with dense linear gradients (see App. \ref{app:lazy}) for parallelizing this problem, which uses lazy updates to avoid dense operations on the entire model ${\bf x}$.
This requires computing the number of skipped updates, $\tau_j$, for each lazily updated coordinate, which may be negative in \hog{} due to re-ordering of updates.
In our \hog{} implementation, we thresholded the bookkeeping variable $\tau_j$ (described in App. \ref{app:lazy}), as we found that this produced faster convergence.
The rows of ${\bf A}$ are normalized by their $\ell_2$-norm, so that we may apply the SVRG algorithm of \cite{jin2015robust} with uniform sampling.
Two graph datasets were used in this experiment.
The first, DBLP \cite{konect2015dblp}, is an authorship network consisting of 1.4M authors and 4M publications, with 8.65M edges.
The second, NH2010 \cite{uflsparse2014nh2010}, is a weighted topological graph of 49 Census Blocks in New Hampshire, with an edge between adjacent blocks, for a total of 234K edges.
We ran SVRG for 50 and 100 epochs for NH2010 and DBLP respectively.

\paragraph{Matrix completion via SGD}
In the matrix completion problem, we are given a partially observed $n \times m$ matrix ${\bf M}$, and wish to factorize it as ${\bf M} \approx {\bf U}{\bf V}$ where ${\bf U}$ and ${\bf V}$ are low rank matrices with dimensions $n \times r$ and $r\times m$ respectively.
This may be achieved by optimizing 
$$\min_{{\bf U},{\bf V}} \sum_{(i,j) \in \Omega} (M_{i,j} - {\bf U}_{i,\cdot} {\bf V}_{\cdot,j})^2$$ 
where $\Omega$ is the set of observed entries, which can be approximated by SGD on the observed samples.
The objective can also be regularized as:
$$\min_{{\bf U},{\bf V}} \sum_{(i,j) \in \Omega} (M_{i,j} - {\bf U}_{i,\cdot} {\bf V}_{\cdot,j})^2 + \frac{\lambda}{2}(\|{\bf U}\|_F^2 + \|{\bf V}\|_F^2)
=\min_{{\bf U},{\bf V}} \sum_{(i,j) \in \Omega} \left((M_{i,j} - {\bf U}_{i,\cdot} {\bf V}_{\cdot,j})^2 + \frac{1}{|\Omega|}\frac{\lambda}{2}(\|{\bf U}\|_F^2 + \|{\bf V}\|_F^2)\right)
.$$
The regularized objective can be optimized by weighted SGD, which samples $(i,j) \in \Omega$ and updates
\begin{align*}
{\bf U}_{i',\cdot} &\leftarrow \begin{cases}
(1-\gamma\lambda) {\bf U}_{i,\cdot} - \gamma \cdot |\Omega|\cdot 2({\bf U}_{i,\cdot} {\bf V}_{\cdot,j}-M_{i,j}) ({\bf V}_{\cdot,j})^T & \text{if } i = i'\\
(1-\gamma\lambda) {\bf U}_{i',\cdot} & \text{otherwise}
\end{cases}
\end{align*}
and analogously for ${\bf V}_{\cdot,j}$
In our experiments, we chose a rank of $r=100$, and ran SGD and weighted SGD for 200 epochs.
We used the MovieLens 10M dataset \cite{grouplens2009movielens10M} containing 10M ratings for 10,000 movies by 72,000 users.

\COND{}{\vspace{-0.4cm}}
\paragraph{Word embedding via SGD}
Semantic word embeddings aim to represent the meaning of a word $w$ via a vector ${\bf v}_w \in \mathbb{R}^r$.
In a recent work by \cite{arora2015rand-walk}, the authors propose using a generative model, and solving for the MLE which is equivalent to:
$$\min_{\{{\bf v}_w\},C} \sum_{w, w'} A_{w,w'} (\log(A_{w,w'}) - \|{\bf v}_w + {\bf v}_{w'}\|_2^2 - C)^2,$$
where $A_{w,w'}$ is the number of times words $w$ and $w'$ co-occur within $\tau$ words in the corpus. 
In our experiments we set $\tau=10$ following the suggested recipe of the aforementioned paper.
We can approximate the solution to the above problem by SGD: we can repeatedly sample entries $A_{w,w'}$ from ${\bf A}$ and update the corresponding vectors ${\bf v}_w, {\bf v}_{w'}$.
In this case the update is of the form as:
\begin{align*}
{\bf v}_w &= {\bf v}_{w} + 4\gamma A_{w,w'} (\log(A_{w,w'}) - \|{\bf v}_w + {\bf v}_{w'}\|_2^2 - C) ({\bf v}_w + {\bf v}_{w'})\\
\end{align*}
and identically for ${\bf v}_{w'}$
Then, at the end of each full pass over the data, we update the constant $C$ by its locally optimal value, which can be calculated in closed form:
$$C \leftarrow \frac{\sum_{w, w'} A_{w,w'} (\log(A_{w,w'}) - \|{\bf v}_w + {\bf v}_{w'}\|_2^2)}{\sum_{w, w'} A_{w,w'}}.$$
In our experiments, we optimized for a word embedding of dimension $r=100$, and tested on a $80$MB subset of the English Wikipedia dump available at \cite{mahoney2006enwiki}.
The dataset contains $213$K words and ${\bf A}$ has 20M non-zero entries.
For our experiments, we run SGD for 200 epochs.

\subsection{Speedup and Convergence Results}
In this subsection, we present the bulk of our experimental findings.
Our extended and complete set of results can be found in Appendix \ref{app:exptresults}.

\COND{
\begin{figure}[H]
  \centering
    \includegraphics[width=0.7\textwidth]{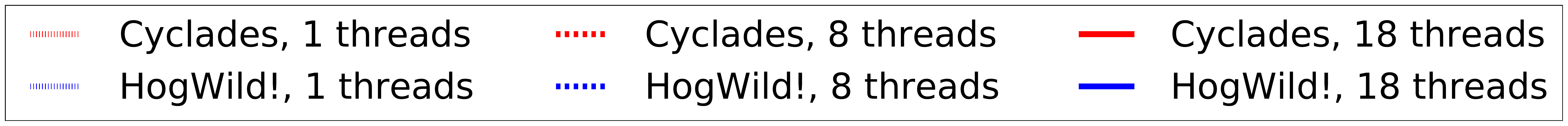}\\
    \subfigure[Least Squares, DBLP, SAGA]{\label{fig:ls_DBLP_may_saga:converge} \includegraphics[width=0.23\textwidth]{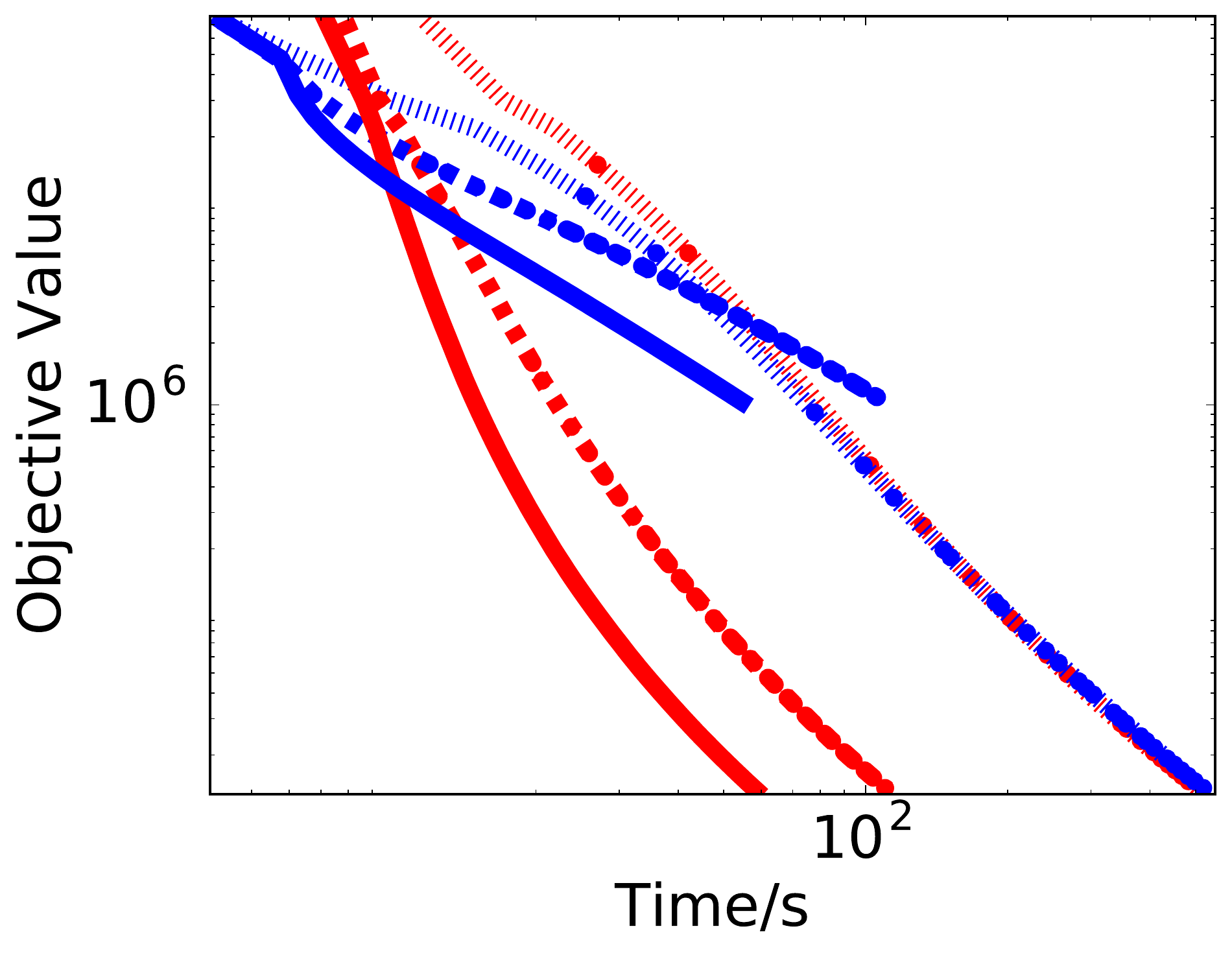}}
    \subfigure[Graph Eig., NH2010, SVRG]{\label{fig:ge_nh2010_may_svrg:converge} \includegraphics[width=0.23\textwidth]{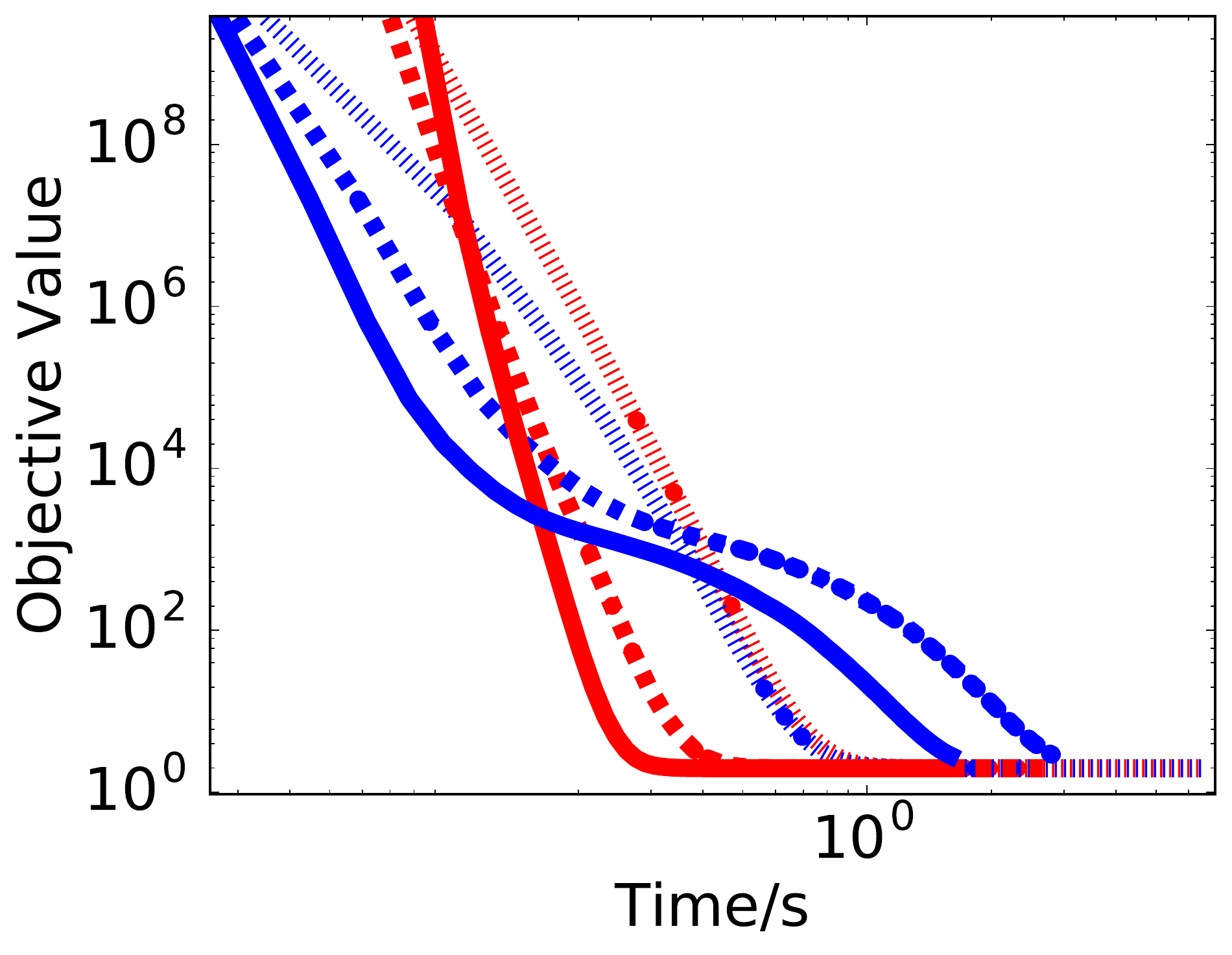}}
        \subfigure[Mat. Comp., 10M,  $\ell_2$-SGD]{\label{fig:mc_10m_may_rsg:converge} \includegraphics[width=0.23\textwidth]{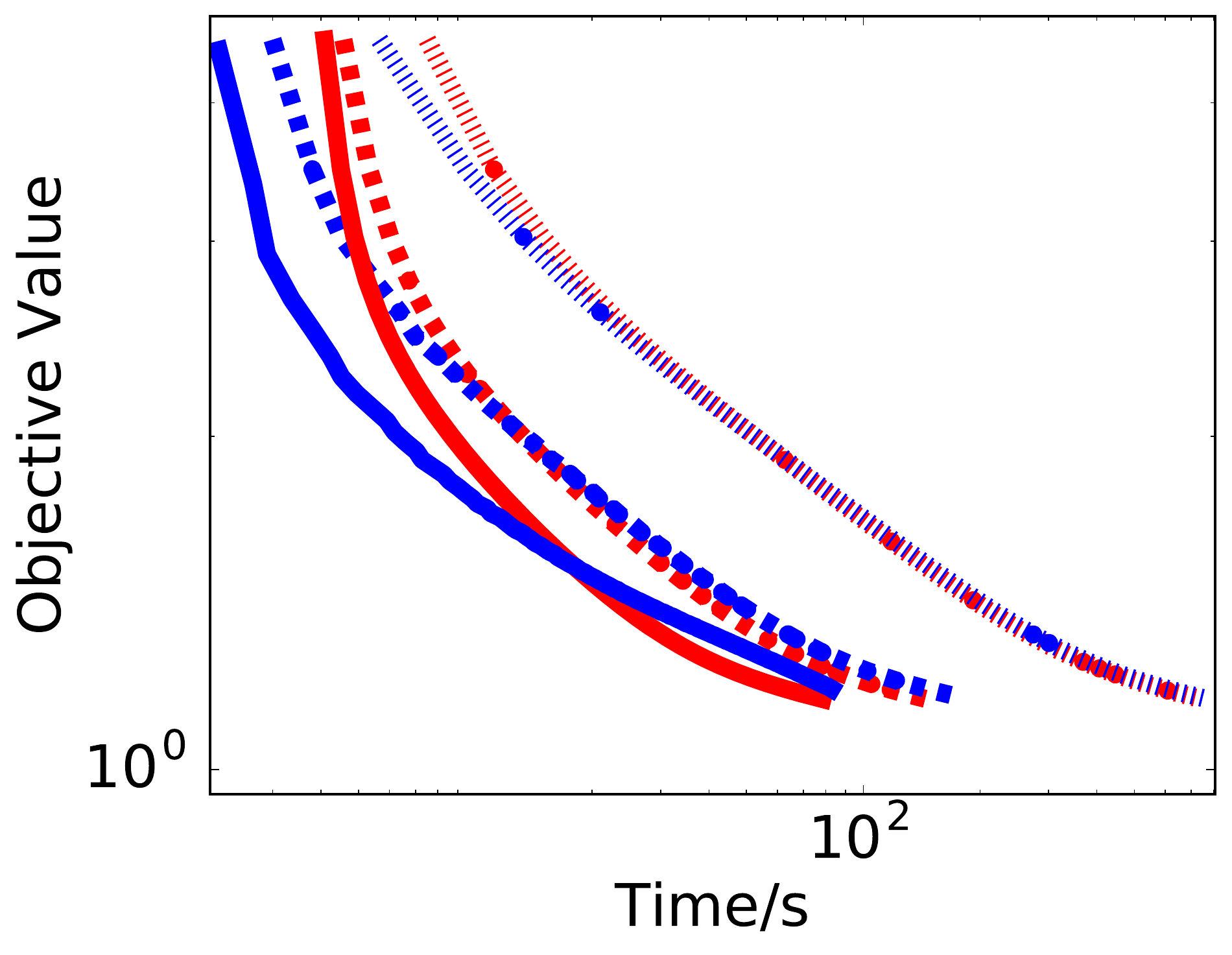}}
    \subfigure[Word2Vec, EN-Wiki, SGD]{\label{fig:we_ewk_may_sgd:converge} \includegraphics[width=0.23\textwidth]{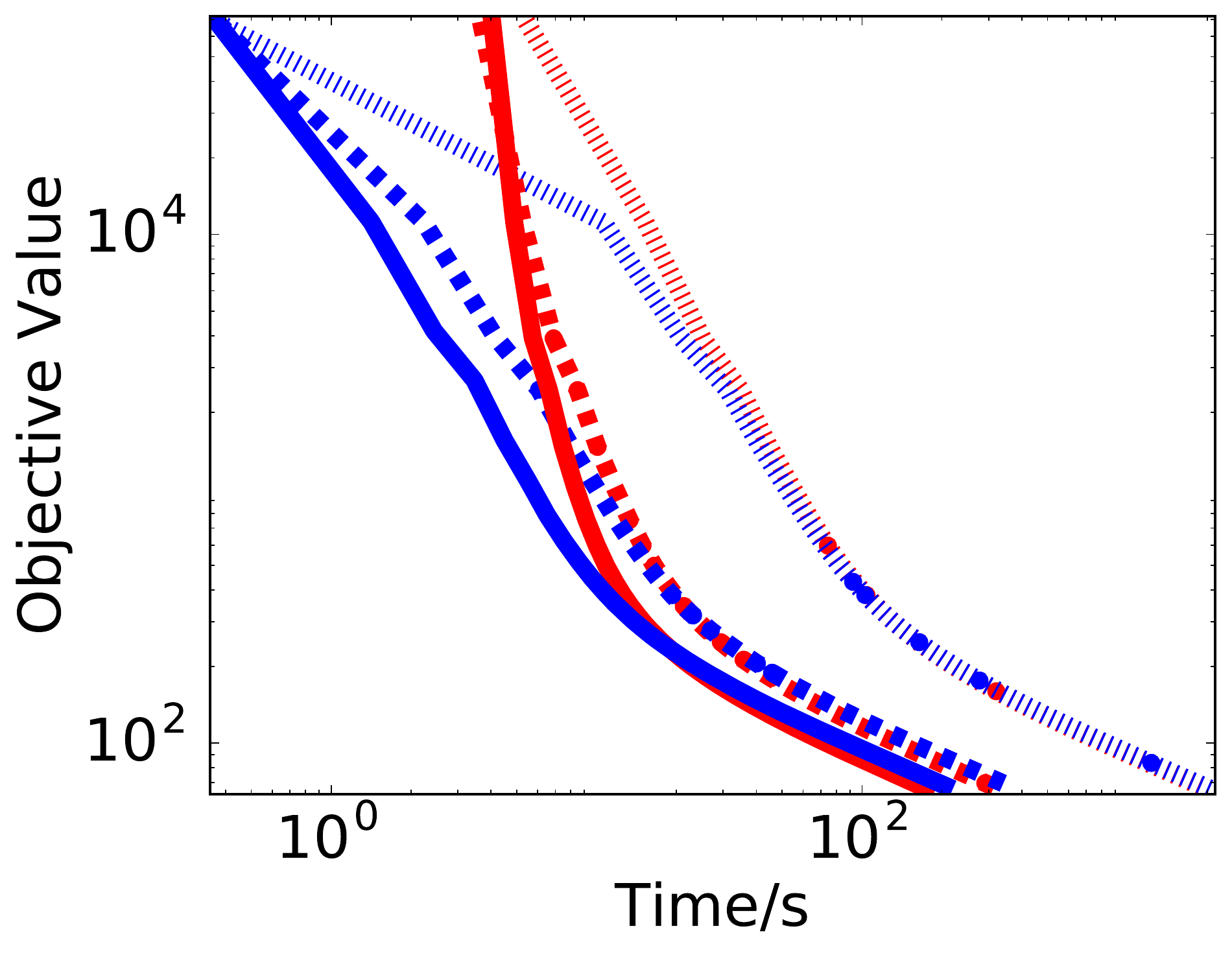}}

  \caption{\small Convergence of \cyc{} and \hog{} in terms of overall running time with 1, 8, 16, 18 threads.
  \cyc{} is initially slower, but ultimately reaches convergence faster than \hog{}.
  }
  \label{fig:expt_converge}
\end{figure}
\begin{figure}[H]
  \centering
    \subfigure[Least squares, DBLP, SAGA]{\label{fig:ls_DBLP_may_saga:speedups} \includegraphics[width=0.23\textwidth]{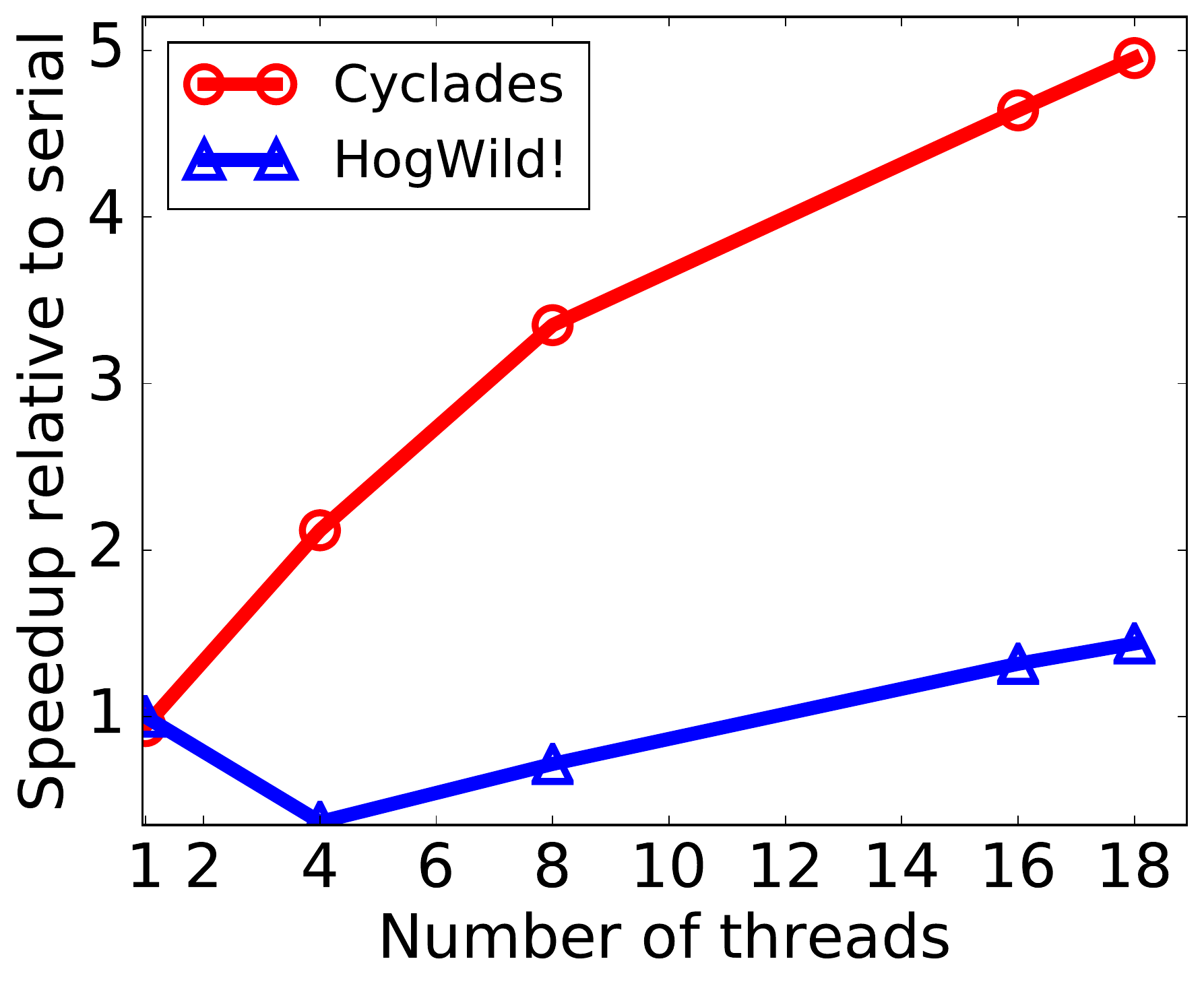}}
      \subfigure[Graph Eig., NH2010, SVRG]{\label{fig:ge_nh2010_may_svrg:speedups} \includegraphics[width=0.23\textwidth]{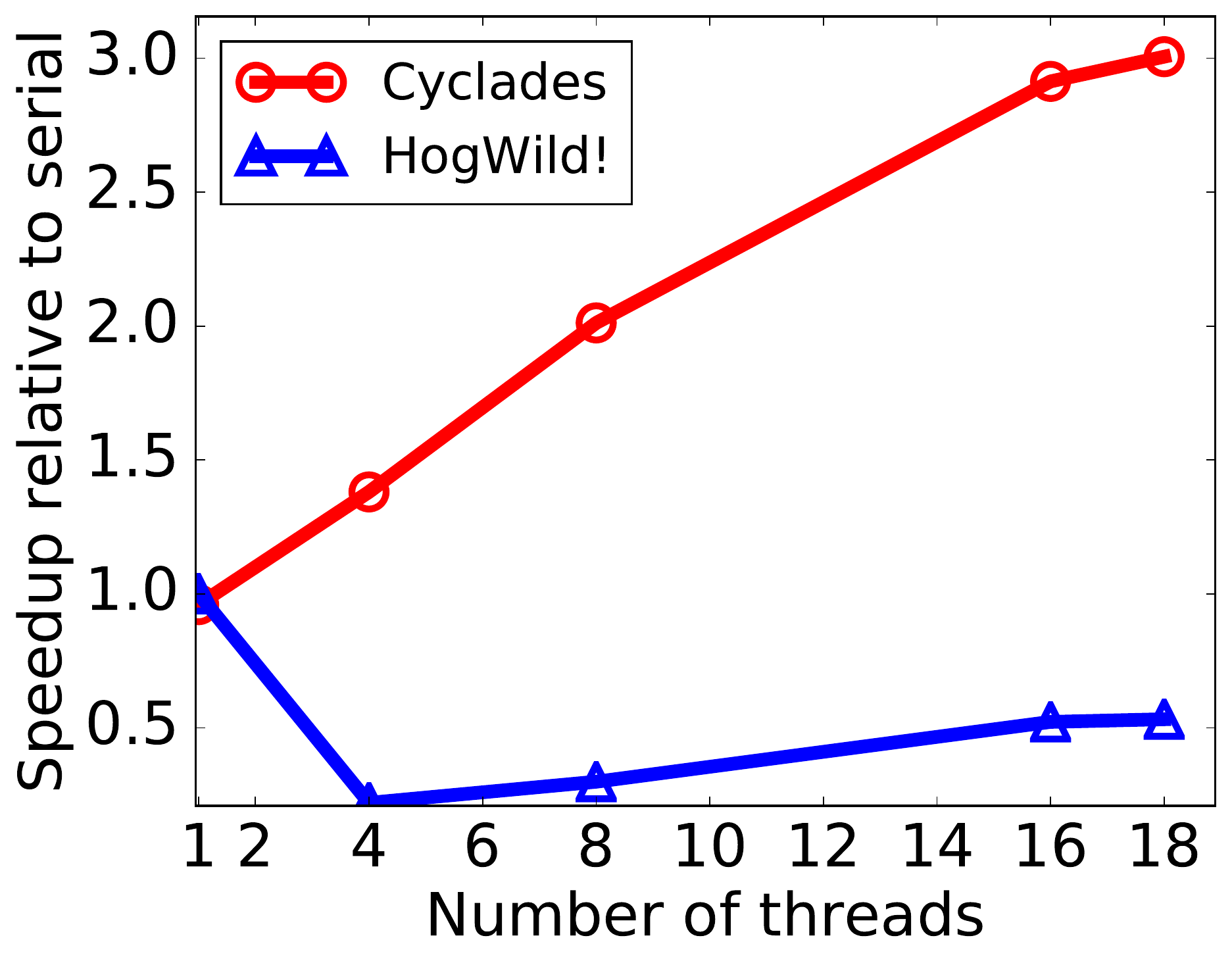}}
    \subfigure[Mat. Comp., 10M, $\ell_2$-SGD]{\label{fig:mc_10m_may_rsg:speedups} \includegraphics[width=0.23\textwidth]{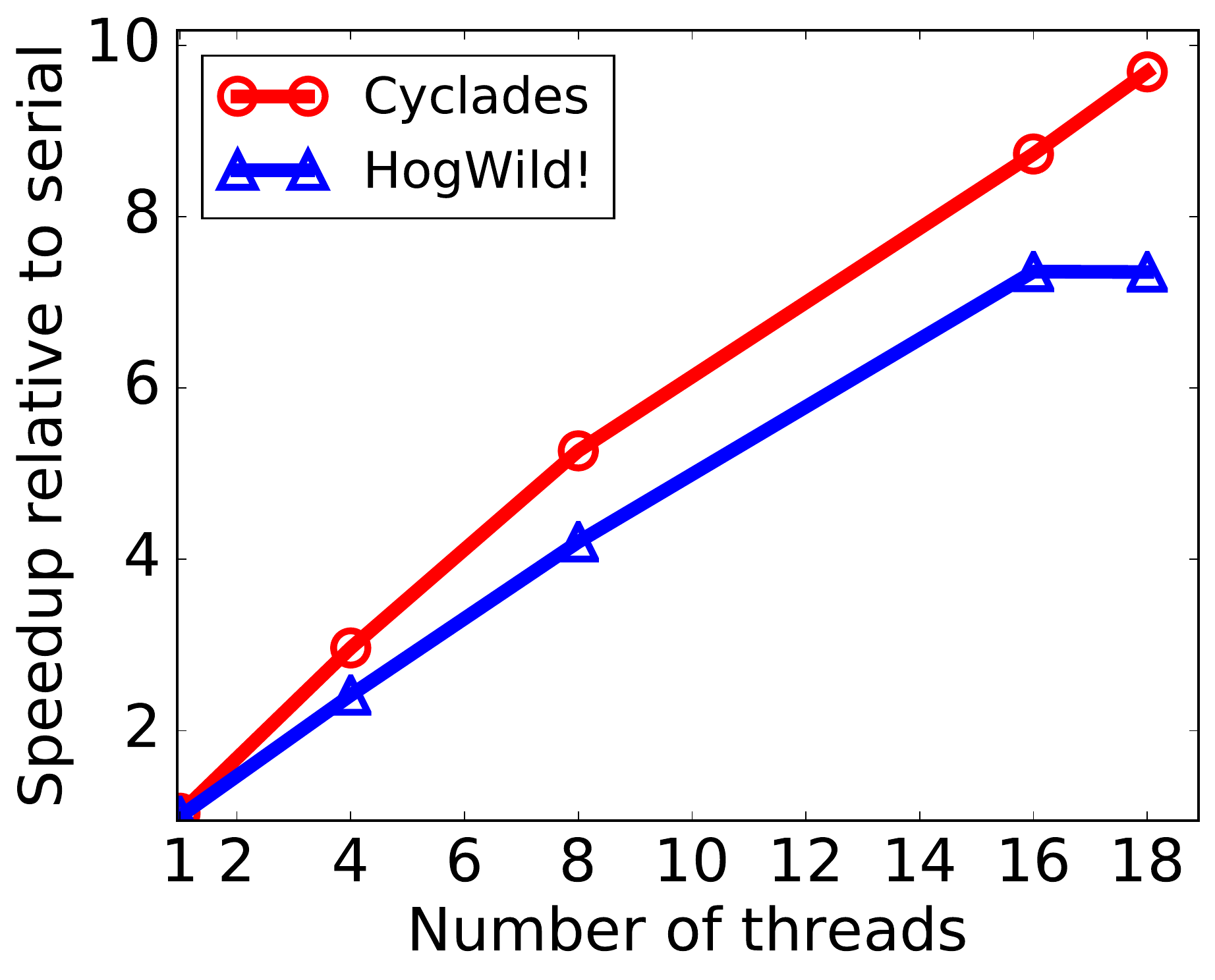}}
    \subfigure[Word2Vec, EN-Wiki, SGD]{\label{fig:we_ewk_may_sgd:speedups} \includegraphics[width=0.23\textwidth]{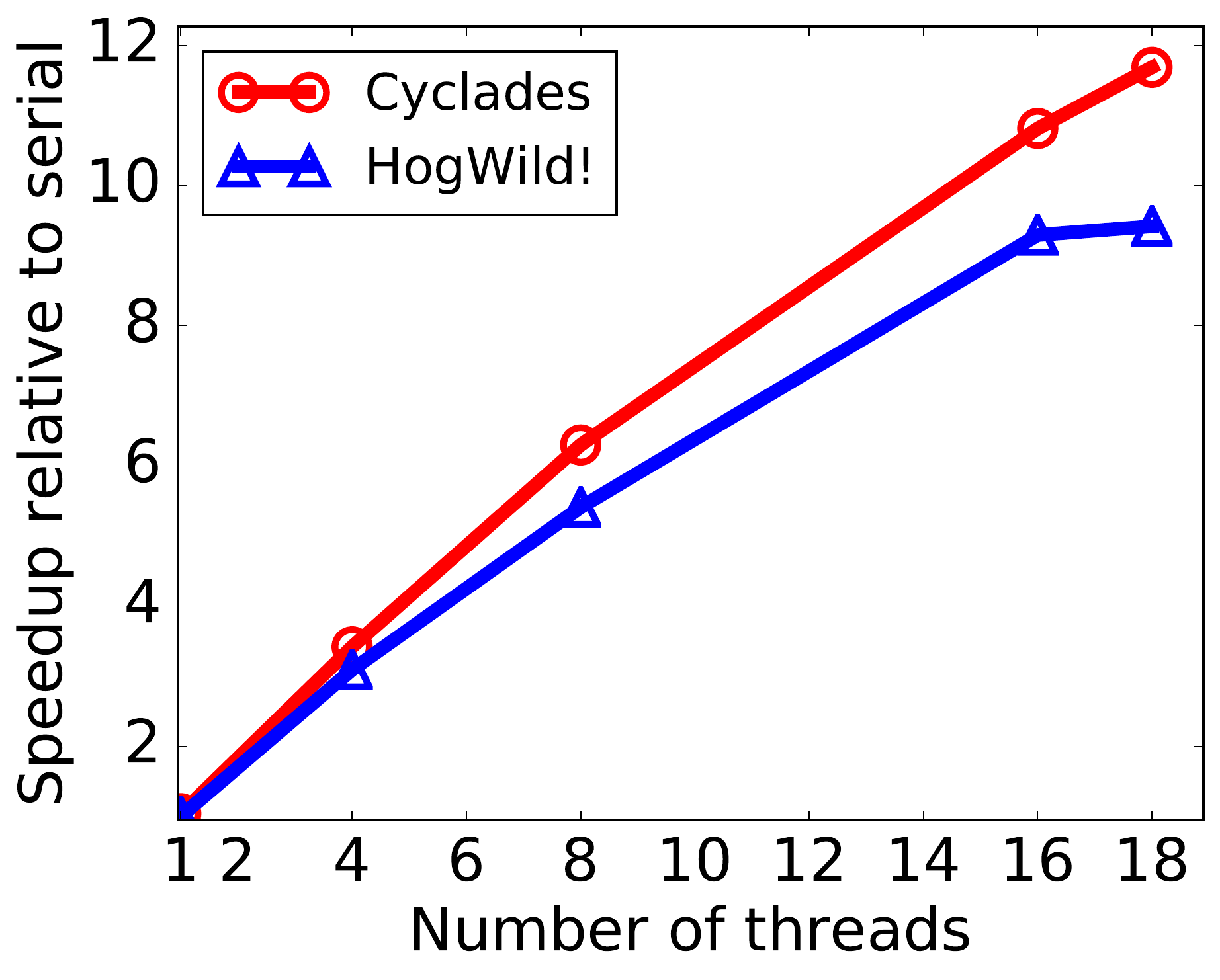}}
  \caption{\small Speedup of \cyc{} and \hog{} versus number of threads. 
  On multiple threads, \cyc{} always reaches $\epsilon$ objective faster than \hog{}.
  In some cases \cyc{} is faster than \hog{} even on 1 thread, due to better cache locality.
  In Figs. \ref{fig:ls_DBLP_may_saga:speedups} and \ref{fig:ge_nh2010_may_svrg:speedups}, \cyc{} exhibits significant gains,
  since \hog{} suffers from asynchrony noise for which we had to use comparatively smaller stepsizes to prevent divergence.
  }
  \label{fig:expt_speedup}
\vspace{-0.2cm}
\end{figure}

}{}

\COND{}{\vspace{-0.4cm}}
\paragraph{Least squares}

When running SAGA for least squares, we found that \hog{} was divergent with the large stepsizes that we were using for \cyc{} (Fig. \ref{fig:mc_10m_may_rsg:hdiverge}).
Thus, in the multi-thread setting, we were only able to use smaller stepsizes for \hog{}, which resulted in slower convergence than \cyc{}, as seen in Fig. \ref{fig:ls_DBLP_may_saga:converge}. 
The effects of a smaller stepsize for \hog{} are also manisfested in terms of speedups in Fig. \ref{fig:ls_DBLP_may_saga:speedups}, since \hog{} takes a longer time to converge to an $\epsilon$ objective value.

\begin{wrapfigure}{r}{0.5\textwidth}
  \centerline{\includegraphics[width=0.35\textwidth]{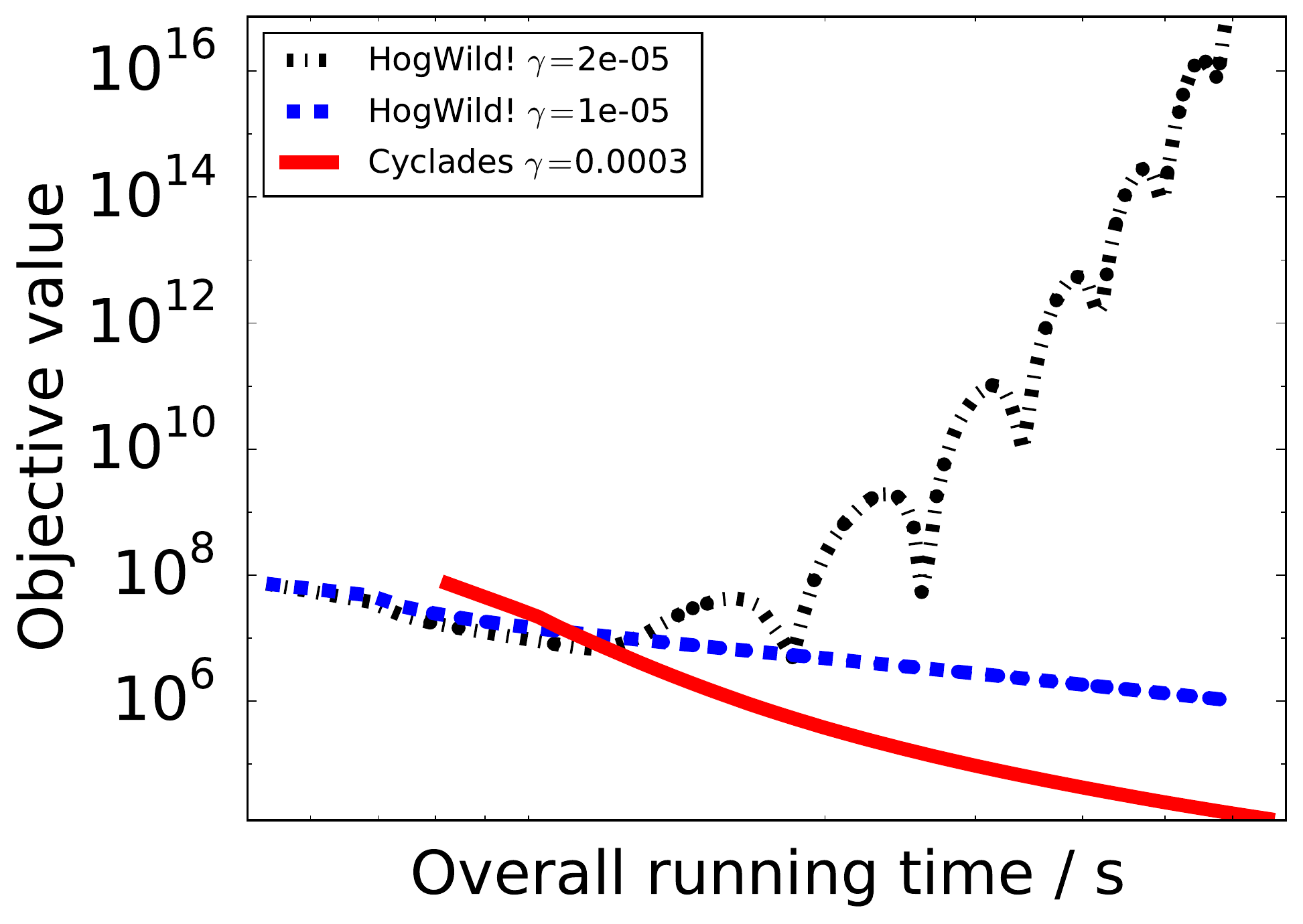}}
  \caption{\small 
    Convergence of \cyc{} and \hog{} on least squares using SAGA, with 16 threads, on DBLP dataset.
    \hog{} diverges with $\gamma > 10^{-5}$;
    thus, we were only able to use a smaller step size $\gamma = 10^{-5}$ for \hog{} on multiple threads.
    For \hog{} on 1 thread (and \cyc{} on any number of threads), we could use a larger stepsize of $\gamma=3 \times 10^{-4}$.}
  \label{fig:mc_10m_may_rsg:hdiverge} 
\vspace{-0.5cm}
\end{wrapfigure}

\COND{}{\vspace{-0.4cm}}
\paragraph{Graph eigenvector}
The convergence of SVRG for graph eigenvectors is shown in Fig. \ref{fig:ge_nh2010_may_svrg:converge}.
\cyc{} starts off slower than \hog{}, but always produces results equivalent to the convergence on a single thread.
Conversely, \hog{} does not exhibit the same behavior on multiple threads as it does serially---in fact, the error due to asynchrony causes \hog{} to converge slower on multiple threads.
This effect is clearly seen on Figs. \ref{fig:ge_nh2010_may_svrg:speedups}, where \hog{}
 fails to converge faster than the serial counterpart,
and \cyc{} attains a significantly better speedup on 16 threads.

\COND{}{\vspace{-0.4cm}}
\paragraph{Matrix completion and word embeddings}
Figures \ref{fig:mc_10m_may_rsg:converge} and \ref{fig:we_ewk_may_sgd:converge} show the convergence for the matrix completion and word embeddings problems.
\cyc{} is initially slower than \hog{} due to the overhead of computing connected components.
However, due to better cache locality and convergence properties, \cyc{} is able to reach a lower objective value in less time than \hog{}.
In fact, we observe that \cyc{} is faster than \hog{} when both are run serially, demonstrating that the gains from (temporal) cache locality outweigh the coordination overhead of \cyc{}.
These results are reflected in the speedups of \cyc{} and \hog{} (Figs. \ref{fig:mc_10m_may_rsg:speedups} and \ref{fig:we_ewk_may_sgd:speedups}).
\cyc{} consistently achieves a better speedup ($9-10\times$ on $18$ threads) compared to that of \hog{} ($7-9\times$  on $18$ threads).

\subsection{Runtime Breakdown}

\paragraph{Partitioning and allocation costs}
The cost of partitioning and allocation for \cyc{} is given in Table \ref{tab:cccost}, relatively to the time that \hog{} takes to complete one epoch of stochastic updates (i.e., a single pass over the dataset).
For matrix completion and the graph eigenvector problem, on 18 threads, \cyc{} takes the equivalent of 4-6 epochs of \hog{} to complete its partitioning, as the problem is either very sparse or the updates are expensive.
For solving least squares using SAGA and word embeddings using SGD, the cost of partitioning is equivalent to 11-14 epochs of \hog{} on 18 threads.
However, we point out that partitioning and allocation is a one-time cost which becomes cheaper with more stochastic update epochs.
Additionally, we note that this cost can become amortized quickly due to the extra experiments one has to run for hyperparameter tuning, since the graph partitioning is identical across different stepsizes one might want to test.
\begin{table}[H]
\begin{center}
\begin{small}
\begin{tabular}{@{}|@{}c@{}|@{}|c|c|c|c|c|c|c|}
\hline
\#     & Least Squares  & Least Squares & Graph Eig. & Graph Eig. & Mat. Comp. & Mat. Comp. & Word2Vec \\
threads      & SAGA & SAGA & SVRG & SVRG & SGD & Weighted SGD & SGD \\
 & NH2010 & DBLP& NH2010 & DBLP & MovieLens & MovieLens & EN-Wiki \\
\hline\hline
1 &   1.9155 &   2.2245&   0.9039 &   0.9862 &    0.7567 &   0.5507 &   0.5299 \\
4 &   4.1461 &   4.6099&   1.6244 &   2.8327 &    1.8832 &   1.4509 &   1.1509 \\
8 &   6.1157 &   7.6151&   2.0694 &   4.3836 &    3.2306 &   2.5697 &   1.9372 \\
16 &  11.7033 &  13.1351&   3.2559 &   6.2161 &   5.5284 &   4.6015 &   3.5561 \\
18 &  11.5580 &  14.1792&   4.7639 &   6.7627 &   6.1663 &   5.5270 &   3.9362 \\
\hline
\end{tabular}
\end{small}
\caption{\small 
  Cost of partitioning and allocation.
  The table shows the ratio of the time that \cyc{} consumes for partition and allocation over the time that \hog{} takes for 1 full pass over the dataset.
  On 18 threads, \cyc{} takes between 4-14 \hog{} epochs to perform partitioning.
  Note however, this computational effort is only required once per dataset.
}
\label{tab:cccost}
\end{center}
\vskip -0.1in
\end{table}

\COND{}{\vspace{-0.4cm}}
\paragraph{Stochastic updates running time}
When performing stochastic updates, \cyc{} has better cache locality and coherence, but requires synchronization after each batch.
Table \ref{tab:sutime} shows the time for each method to complete a single pass over the dataset, only with respect to stochastic updates (i.e., here we factor out the partitioning time).
In most cases, \cyc{} is faster than \hog{}.
In the cases where \cyc{} is not faster, the overheads of synchronizing outweigh the gains from better cache locality and coherency.
However, in some of these cases, synchronization can help by preventing errors due to asynchrony that lead to worse convergence, thus allowing \cyc{} to use larger stepsizes and maximize convergence speed.

\begin{table}[H]
\begin{center}
\begin{small}
\begin{tabular}{@{}|@{}c@{}|cc|cc|cc|cc|cc|cc|c@{}c@{}|@{}@{}}
\hline
\#     & \multicolumn{2}{|c|}{Mat. Comp.} & \multicolumn{2}{|c|}{Mat. Comp.} & \multicolumn{2}{|c|}{Word2Vec} & \multicolumn{2}{|c|}{Graph Eig.} & \multicolumn{2}{|c|}{Graph Eig.} & \multicolumn{2}{|c|}{Least Squares } & \multicolumn{2}{|c|}{Least Squares }\\
threads      & \multicolumn{2}{|c|}{SGD} & \multicolumn{2}{|c|}{$\ell_2$-SGD} & \multicolumn{2}{|c|}{SGD} & \multicolumn{2}{|c|}{SVRG} & \multicolumn{2}{|c|}{SVRG} & \multicolumn{2}{|c|}{SAGA} & \multicolumn{2}{|c|}{SAGA} \\
 & \multicolumn{2}{|c|}{MovieLens} & \multicolumn{2}{|c|}{MovieLens} & \multicolumn{2}{|c|}{EN-Wiki} & \multicolumn{2}{|c|}{NH2010} & \multicolumn{2}{|c|}{DBLP} & \multicolumn{2}{|c|}{NH2010} & \multicolumn{2}{|c|}{DBLP}\\
\cline{2-15}
& Cyc & Hog & Cyc & Hog & Cyc & Hog & Cyc & Hog & Cyc & Hog & Cyc & Hog & Cyc & Hog \\
\hline
1 &    \textbf{2.76} &   2.87 &   \textbf{3.69} &   3.84 &   \textbf{9.85} &  10.25 &   0.07 &   0.07 &           11.54 &  \textbf{11.50} &   0.04 &   0.04 &   \textbf{5.01} &           5.25 \\
4 &    \textbf{1.00} &   1.17 &   \textbf{1.27} &   1.51 &   \textbf{2.98} &   3.35 &   0.04 &   0.04 &   \textbf{4.60} &           4.81  &   0.03 &   0.03 &   \textbf{1.93} &           1.96 \\
8 &    \textbf{0.57} &   0.68 &   \textbf{0.71} &   0.86 &   \textbf{1.61} &   1.89 &   0.03 &   0.03 &   \textbf{2.86} &           3.03  &   0.01 &   0.01 &           1.04  &   \textbf{1.03}\\
16 &   \textbf{0.35} &   0.40 &   \textbf{0.42} &   0.48 &   \textbf{0.93} &   1.11 &   0.02 &   0.02 &   \textbf{2.03} &           2.15  &   0.01 &   0.01 &           0.59  &   \textbf{0.55}\\
18 &   \textbf{0.32} &   0.36 &   \textbf{0.37} &   0.40 &   \textbf{0.86} &   1.03 &   0.02 &   0.02 &   \textbf{1.92} &           2.01  &   0.01 &   0.01 &           0.52  &   \textbf{0.51}\\
\hline
\end{tabular}
\end{small}
\caption{\small Time, in seconds, to complete one epoch (i.e. full pass of stochastic updates over the data) by \cyc{} and \hog{}.
Lower times are highlighted in boldface.
\cyc{} is usually faster than \hog{}, due to its better cache locality and coherence properties.
}
\label{tab:sutime}
\end{center}
\vskip -0.1in
\end{table}

\subsection{Diminishing stepsizes}
In the previous experiments we used constant stepsizes.
Here, we investigate the behavior of \cyc{} and \hog{} in the regime where we decrease the stepsize after each epoch.
In particular, we ran the matrix completion experiments with SGD (with and without regularization), where we multiplicatively updated the stepsize by 0.95 after each epoch.
The convergence and speedup plots are given in Figure \ref{fig:matcom_reducestepsize}.
\cyc{} is able to achieve a speedup of up to $6-7\times$ on $16-18$ threads.
On the other hand, \hog{} is performing worse comparatively to its performance with constant stepsizes (Figure \ref{fig:mc_10m_may_rsg:speedups}).
The difference is more significant on regularized SGD, where we have to perform lazy updates (Appendix \ref{app:lazy}), and \hog{} fails to achieve the same optimum as \cyc{} with multiple threads.
Thus, on 18 threads, \hog{} obtains a maximum speedup of $3\times$, whereas \cyc{} attains a speedup of $6.5\times$.

\begin{figure}[h]
   \hspace{0.1cm} \includegraphics[width=0.5\textwidth]{images/matplotlib_plots/time_loss_legend_48_3}\\
    \subfigure[Convergence, SGD]{\includegraphics[width=0.245\textwidth]{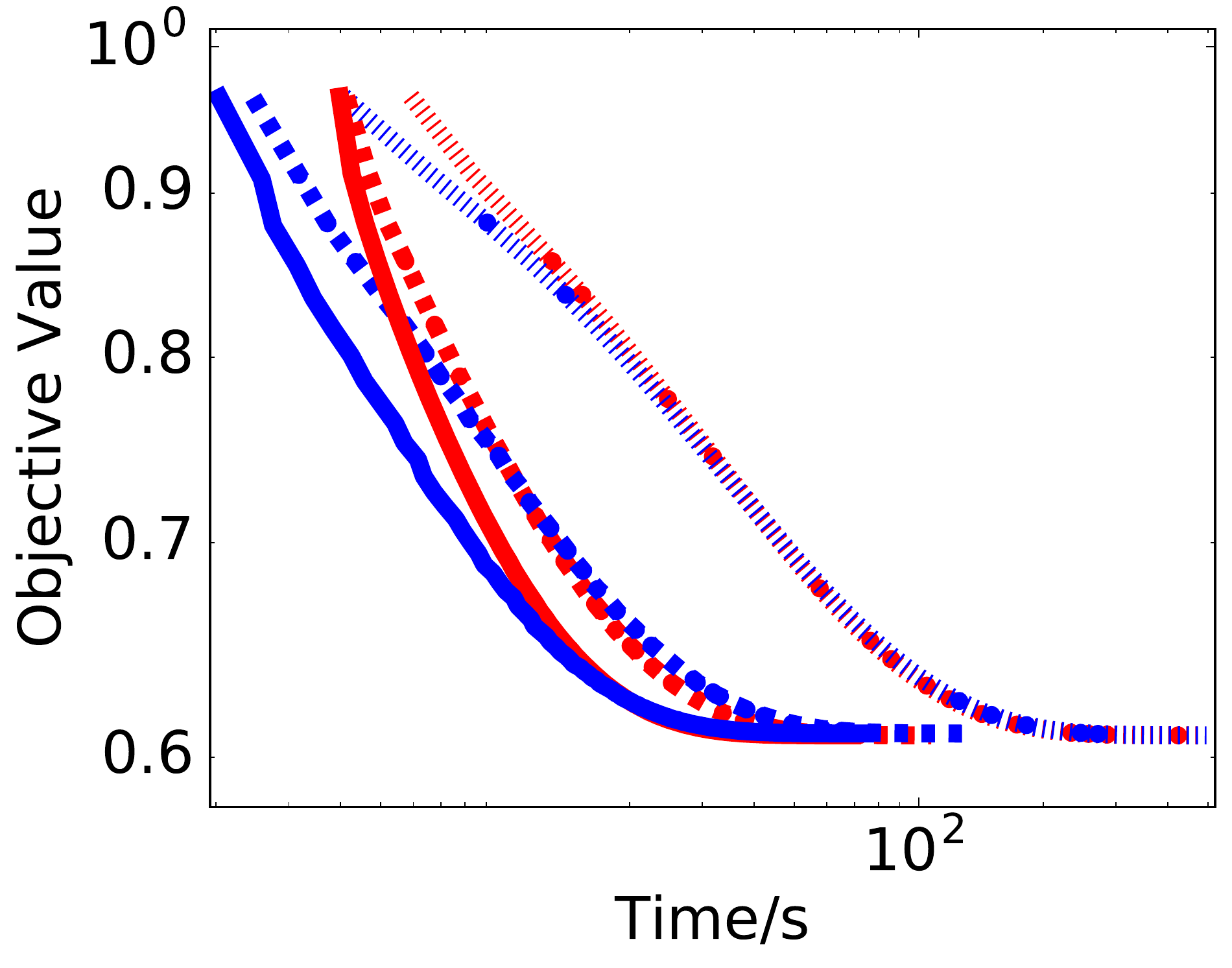}}
    \subfigure[Convergence, $\ell_2$-SGD]{\includegraphics[width=0.245\textwidth]{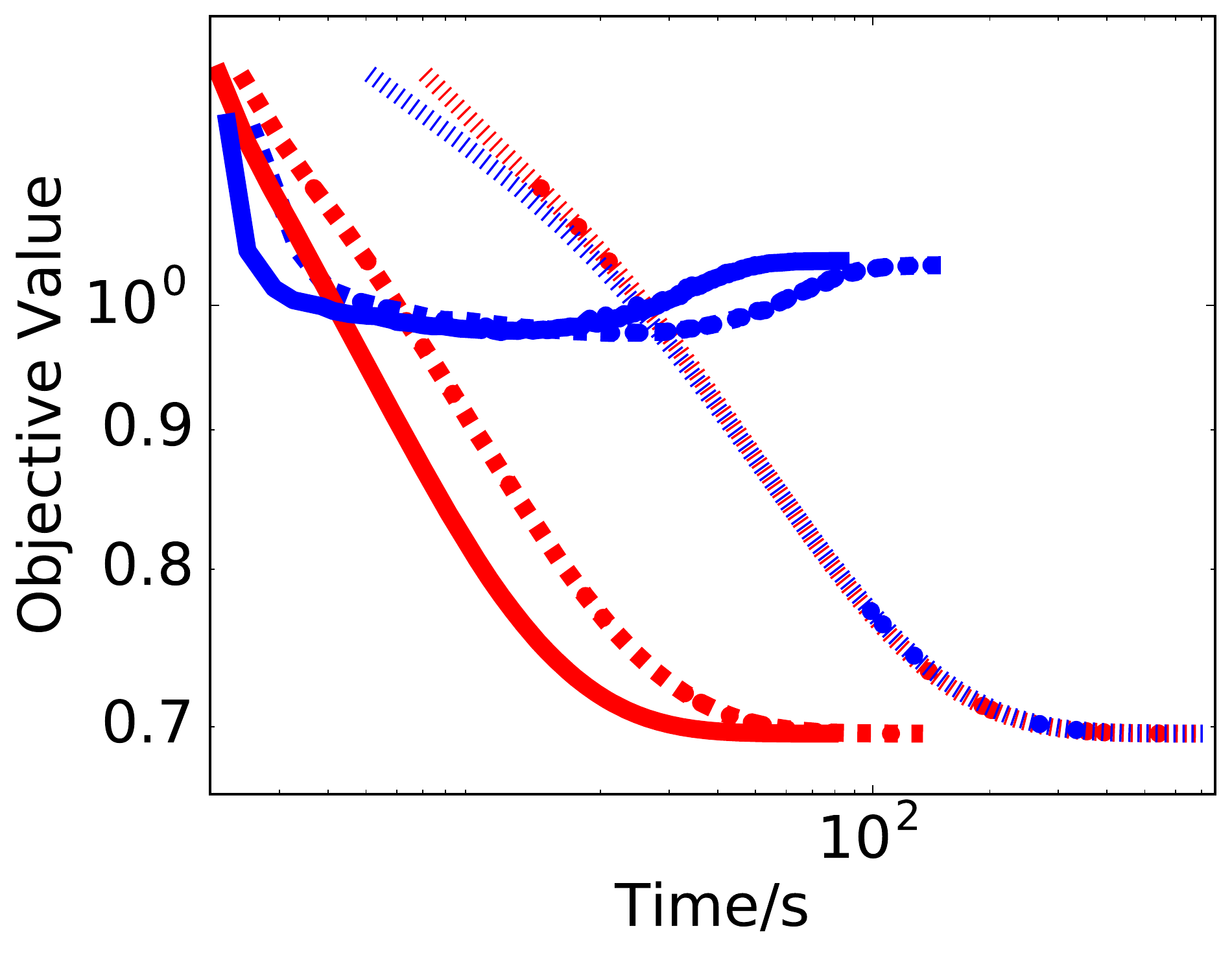}}
    \subfigure[Speedup, SGD]{\includegraphics[width=0.235\textwidth]{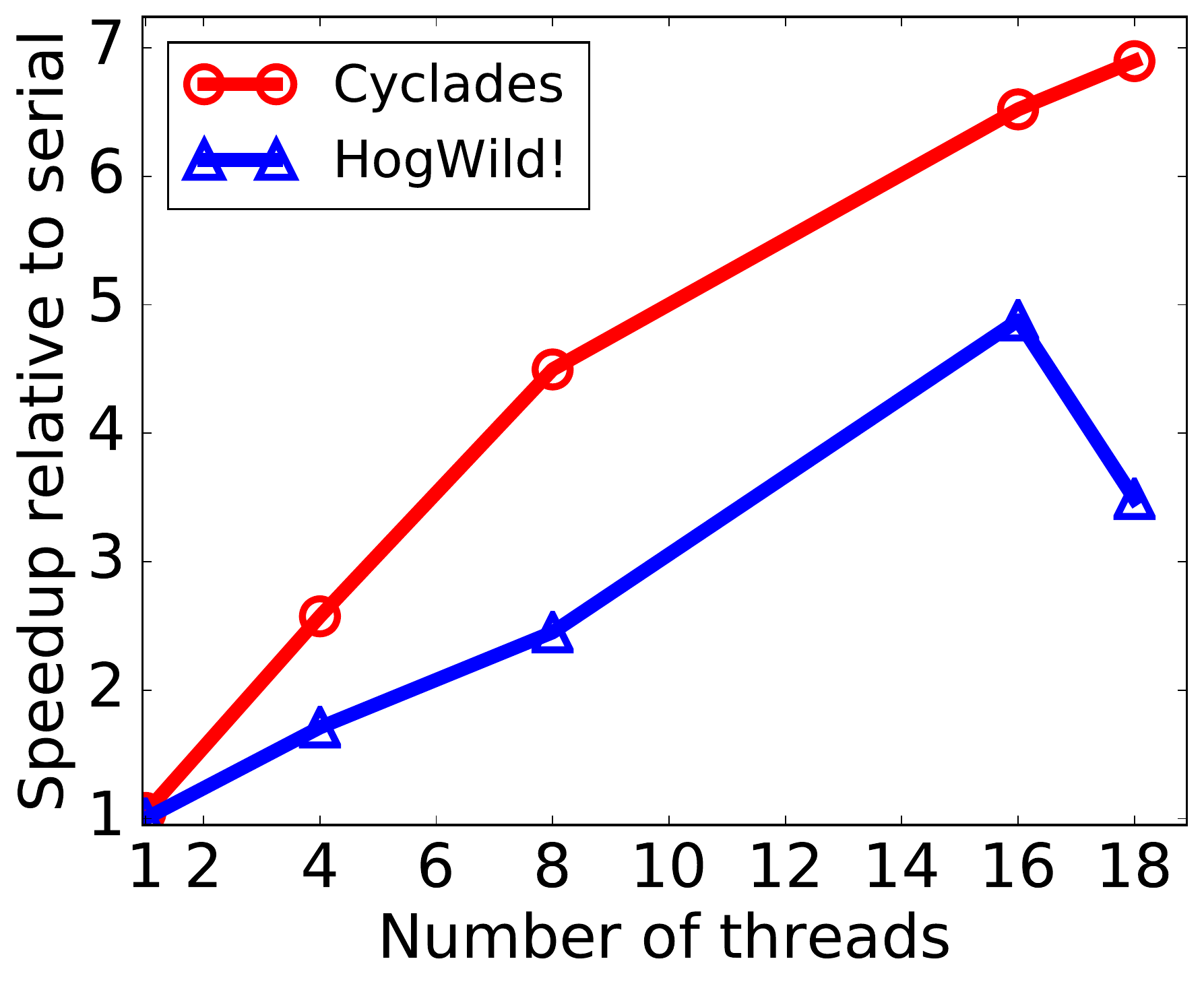}}
    \subfigure[Speedup, $\ell_2$-SGD]{\includegraphics[width=0.235\textwidth]{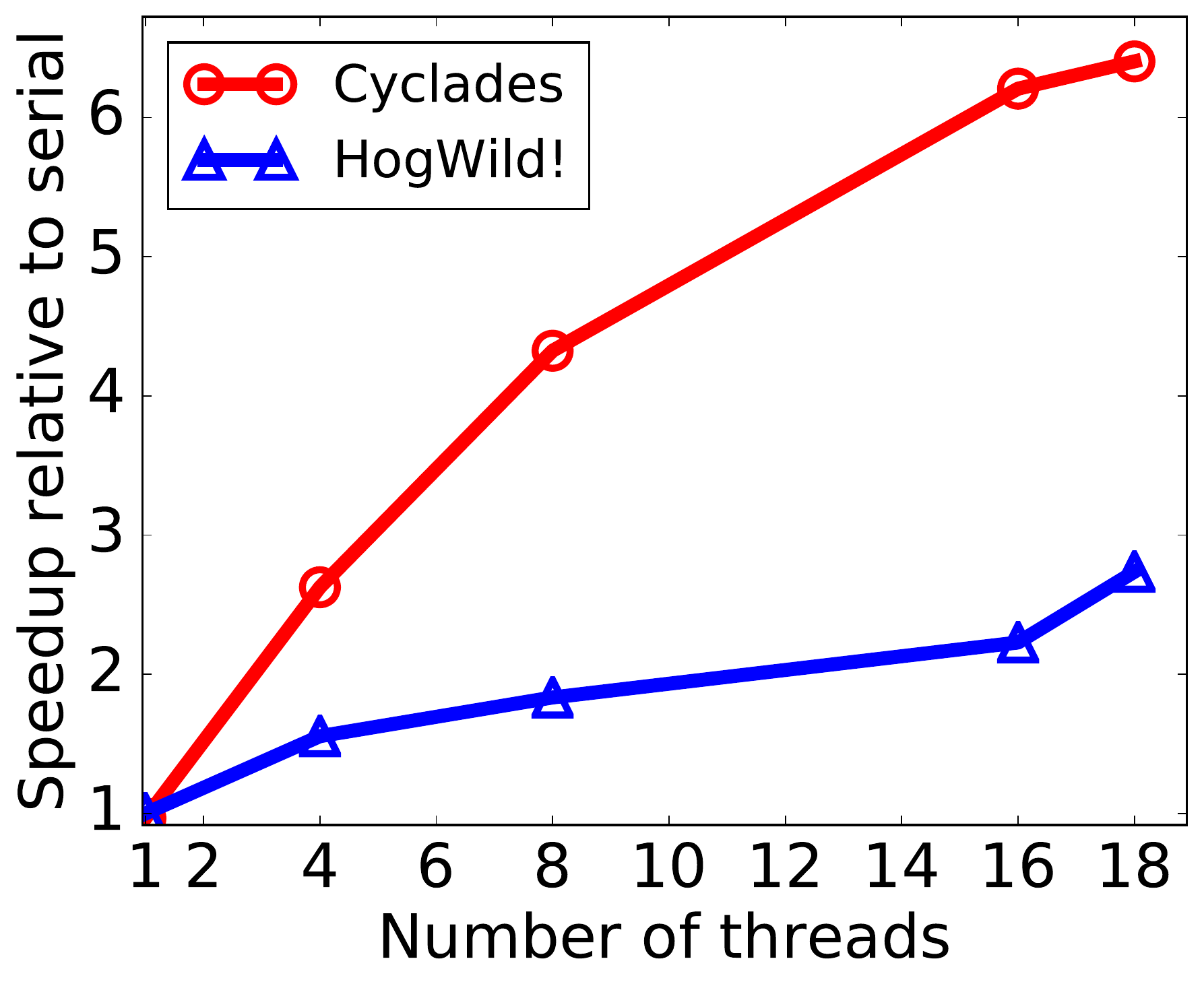}}
  \caption{\small  
 Convergence and speedups for SGD and weighted SGD with diminishing stepsizes for the matrix completion on the MovieLens dataset.
  In this case, \cyc{} outperforms \hog{} by achieving up to 6-7x speedup, when \hog{} achieves at most 5x speedup for 16-18 threads.
  For the weighted SGD algorithm, we used lazy updates (Appendix \ref{app:lazy}), in which case \hog{} on multiple threads gets to a worse optimum.
  }
  \label{fig:matcom_reducestepsize}
\end{figure}

\subsection{Binary Classification and Dense Coordinates}

\begin{wrapfigure}{r}{0.5\textwidth}
\begin{small}
\begin{center}
\begin{tabular}{@{}|@{}c|c|c@{}|@{}}
\hline
Filtering $\%$ & \#  filtered  features &  \#  remaining  features  \\
\hline
0.048\% &  1,555 & 3,228,887\\
0.047\% &  1,535 & 3,228,907\\
0.034\% &  1,120 & 3,229,322\\
0.028\% &    915 & 3,229,527\\
0.016\% &    548 & 3,229,894\\
\hline
\end{tabular}
\caption{\small Filtering of features in URL dataset.
with a total of 3,230,442 features before filtering.
The maximum percentage of features filtered is less than 0.05\%.
}
\label{table:filter}
\end{center}
\end{small}
\vspace{-0.5cm}
\end{wrapfigure}
In addition to the above experiments, here we explore settings where \cyc{} is expected to perform poorly due to the inherent density of updates (i.e., for data sets with dense features).
In particular, we test \cyc{} on a classification problem for text based data, where a few features appear in most data points.
Specifically, we run classification for the URL dataset \cite{ma2009identifying} contains $\sim$ 2.4M URLs, labeled as either benign or malicious, and 3.2M features, including bag-of-words representation of tokens in the URL.

\begin{wrapfigure}{l}{0.45\textwidth}
  \centerline{\includegraphics[width=0.35\textwidth]{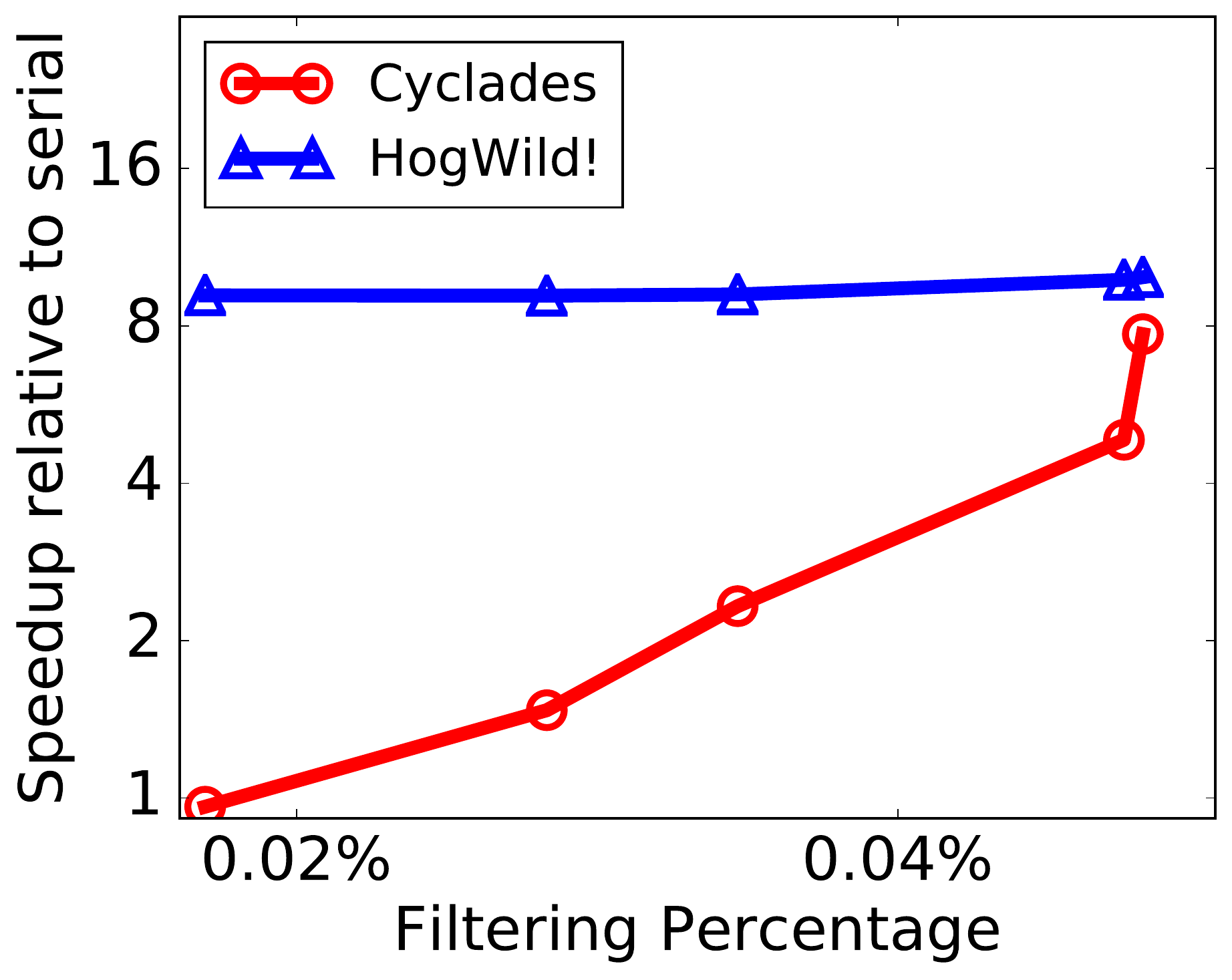}}
  \caption{\small Speedups of \cyc{} and \hog{} on 16 threads, for different percentage of dense features filtered.
  When only a very small number of features are filtered, \cyc{} is almost serial.
  However, as we increase the percentage from $0.016\%$ to $0.048\%$, the speedup of \cyc{} improves and almost catches up with \hog{}.
  }
  \label{fig:urlspeedup_all}
\vspace{-0.7cm}
\end{wrapfigure}

For this classification task, we used a logistic regression model, trained using SGD.
By its power-law nature, the dataset consists of a small number of extremely dense features which occur in nearly all updates.
Since \cyc{} explicitly avoids all conflicts, for these dense cases it will have a schedule of SGD updates that leads to poor speedups.
However, we observe that most conflicts are caused by a small percentage of the densest features.
If these features are removed from the dataset, \cyc{} is able to obtain much better speedups.
To that end, we ran \cyc{} and \hog{} after filtering the densest $0.016\%$ to $0.048\%$ of features.
The number of features that are filtered are shown in Table \ref{table:filter}.

The speedups that are obtained by \cyc{} and \hog{} on 16 threads for different filtering percentages are shown in Figure \ref{fig:urlspeedup_all}.
Full results of the experiment are presented in Figure~\ref{fig:url_all} and in App. \ref{app:exptresults}.
\cyc{} fails to get much speedup when nearly all the features are used,
however, as more dense features are removed, \cyc{} obtains a better speedup, almost equalling \hog{}'s speedup when $0.048\%$ of the densest features are filtered.

\begin{figure*}[t]
  \centering
    \includegraphics[width=0.7\textwidth]{images/matplotlib_plots/time_loss_legend_48_3}\\
    \subfigure[Convergence, 0.016\%]{\includegraphics[width=0.23\textwidth]{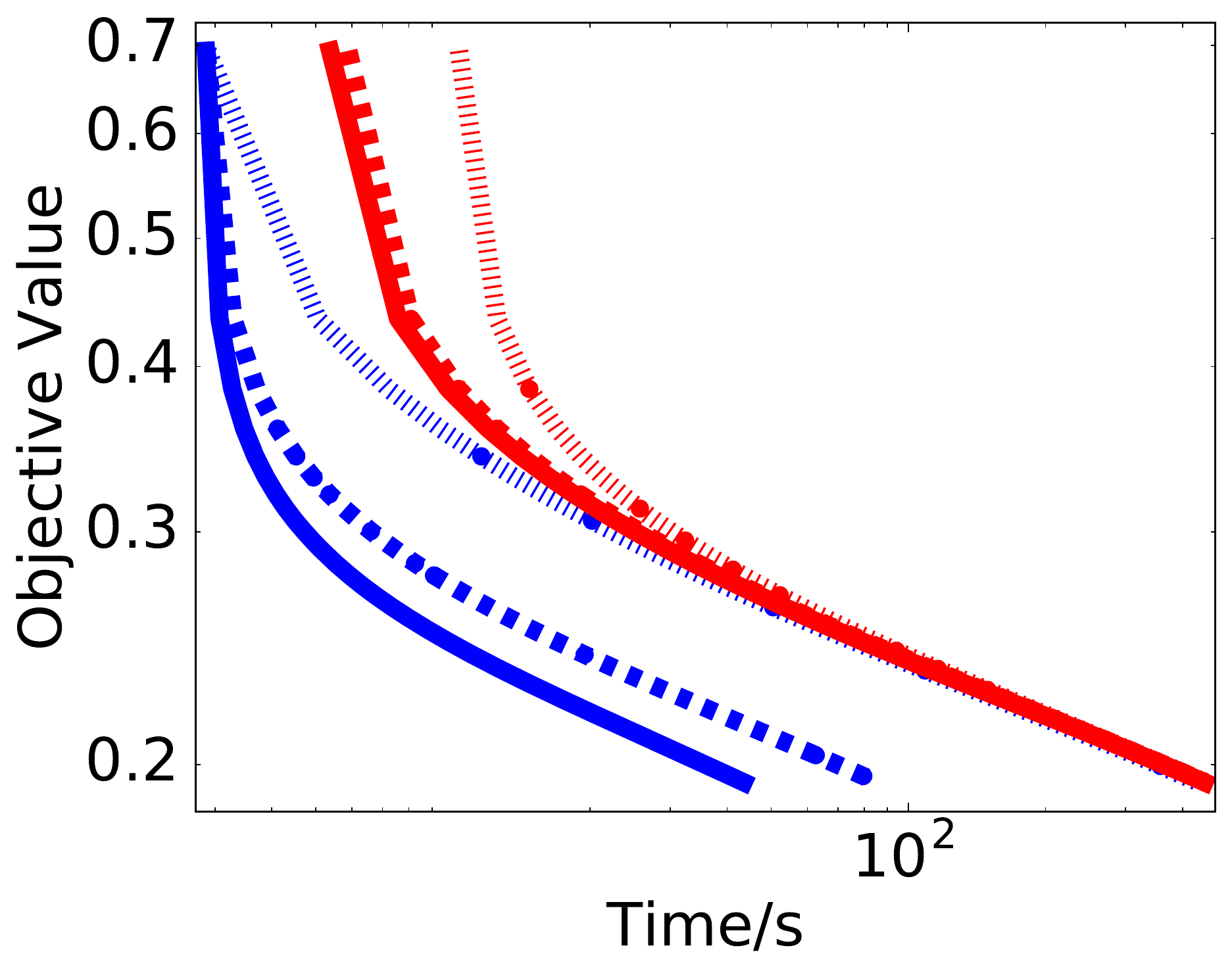}}
    \subfigure[Convergence, 0.034\%]{\includegraphics[width=0.23\textwidth]{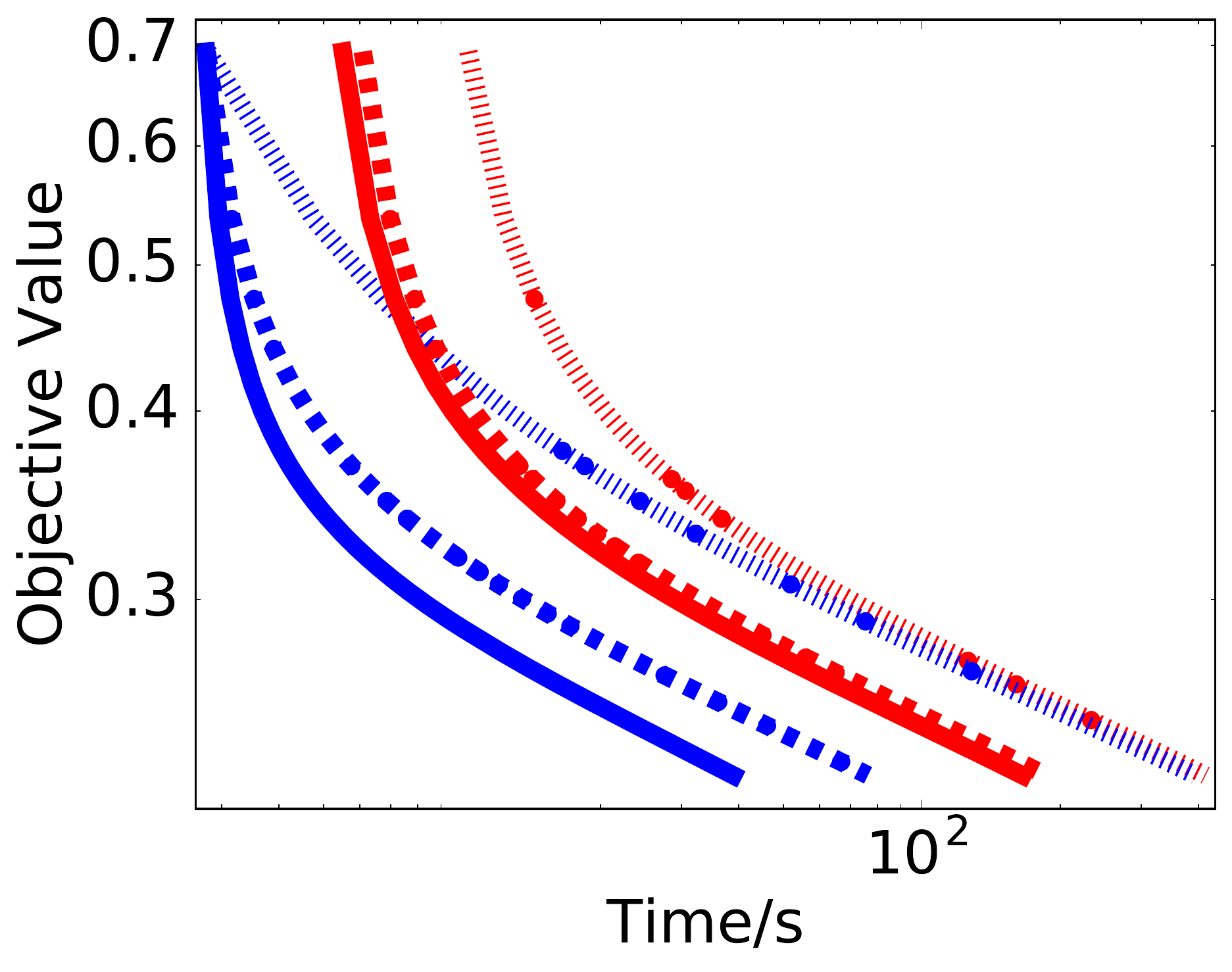}}
    \subfigure[Convergence, 0.047\%]{\includegraphics[width=0.23\textwidth]{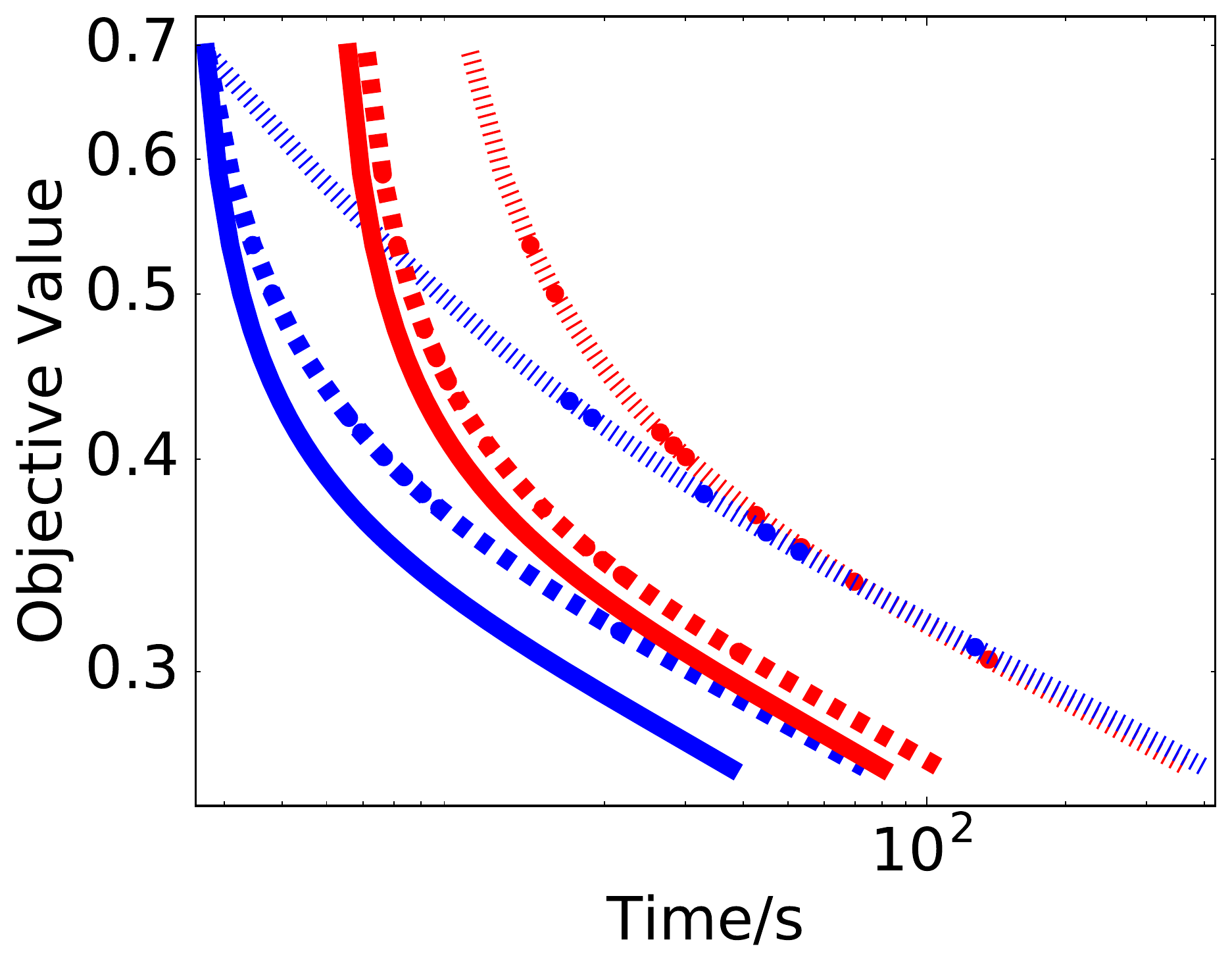}}
    \subfigure[Convergence, 0.048\%]{\includegraphics[width=0.23\textwidth]{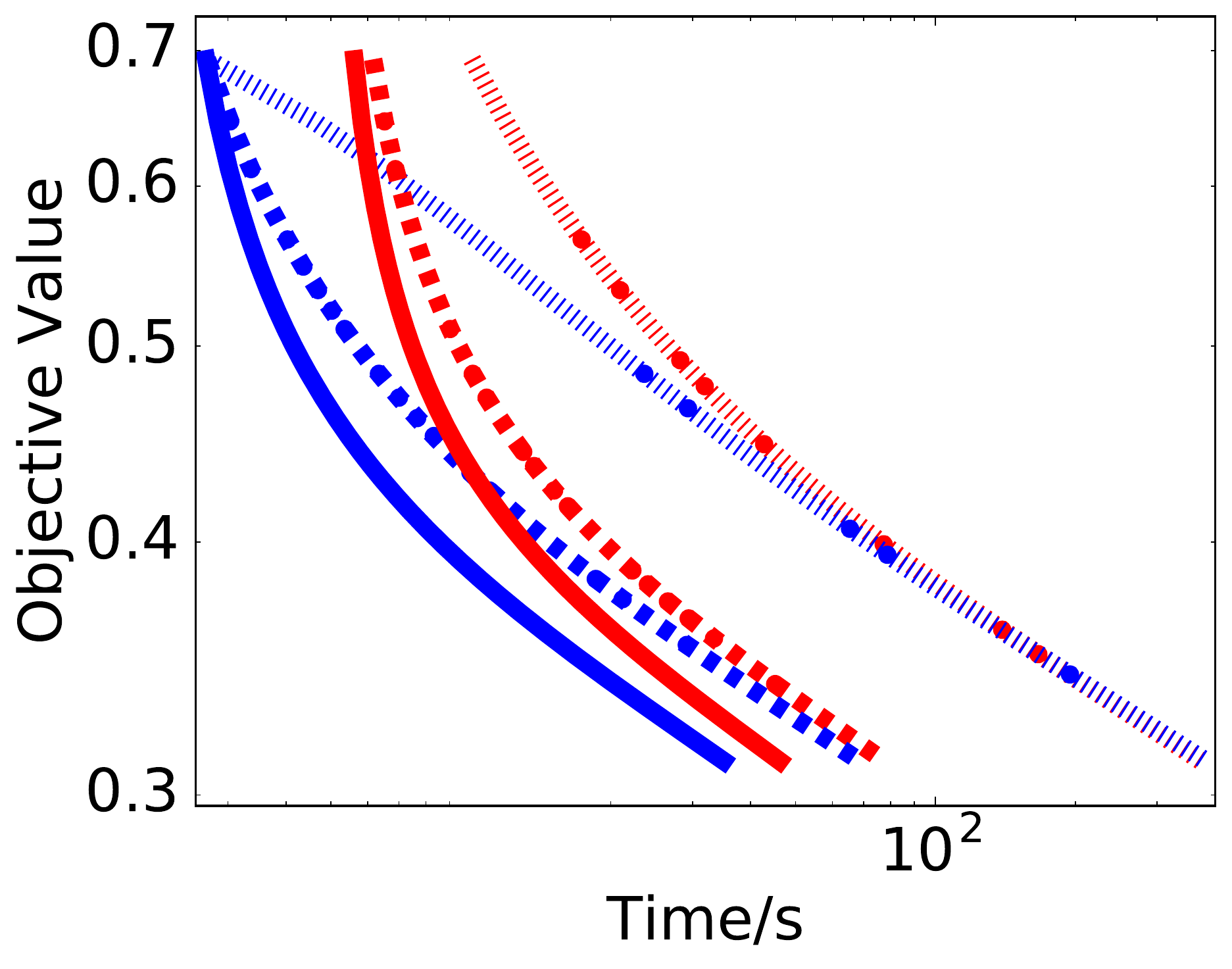}}\\
    \subfigure[Speedup, 0.016\%]{\includegraphics[width=0.23\textwidth]{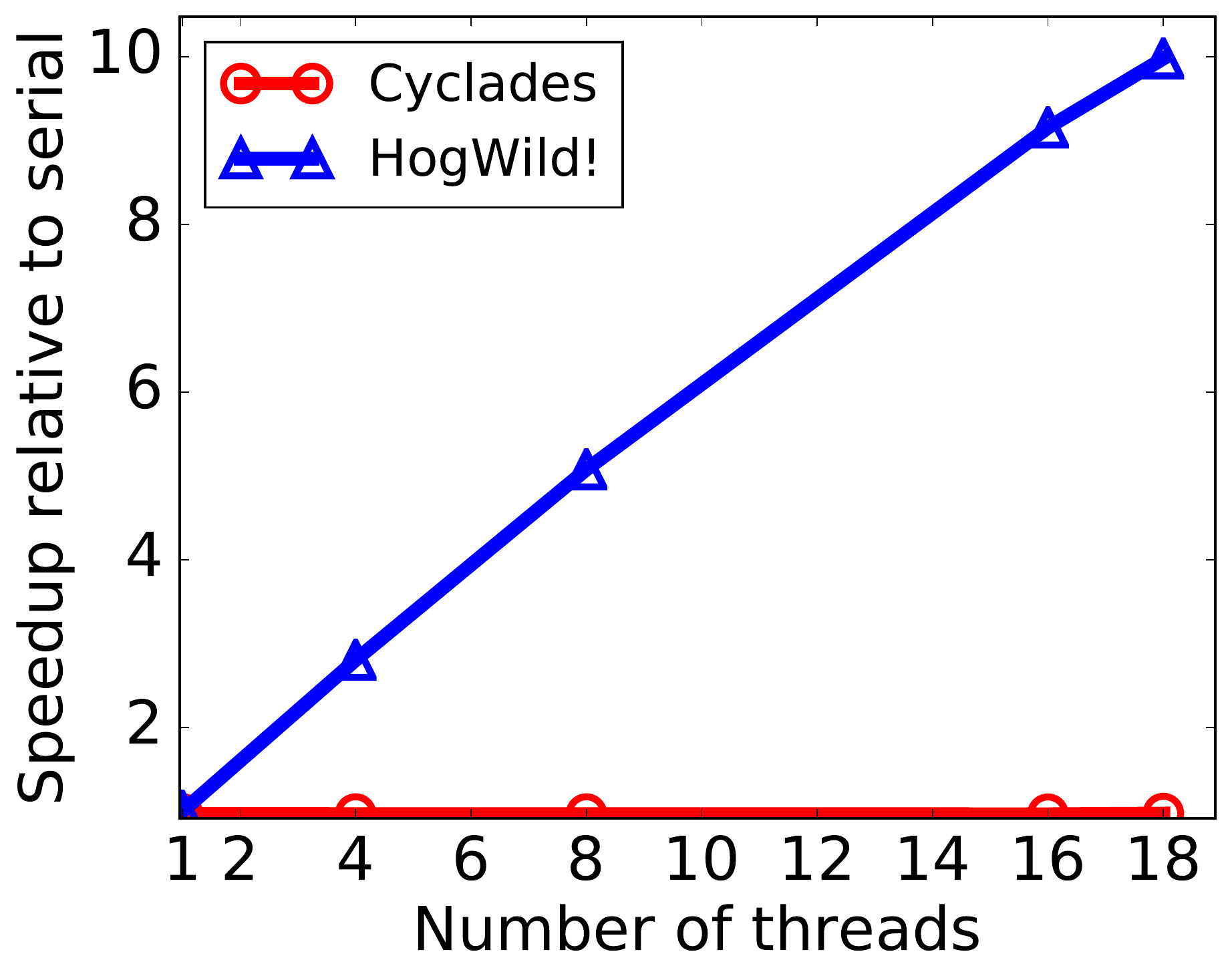}}
    \subfigure[Speedup, 0.034\%]{\includegraphics[width=0.23\textwidth]{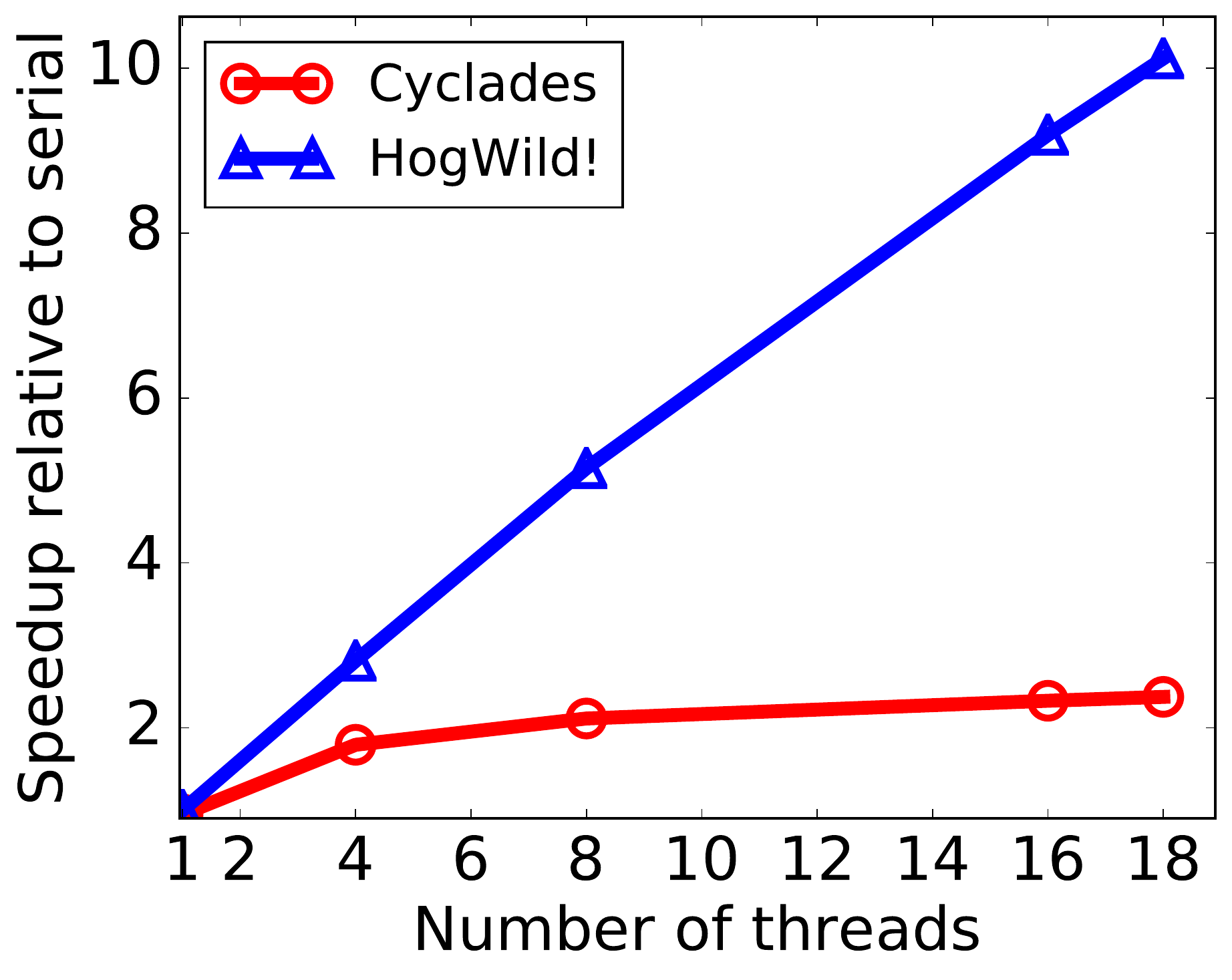}}
    \subfigure[Speedup, 0.047\%]{\includegraphics[width=0.23\textwidth]{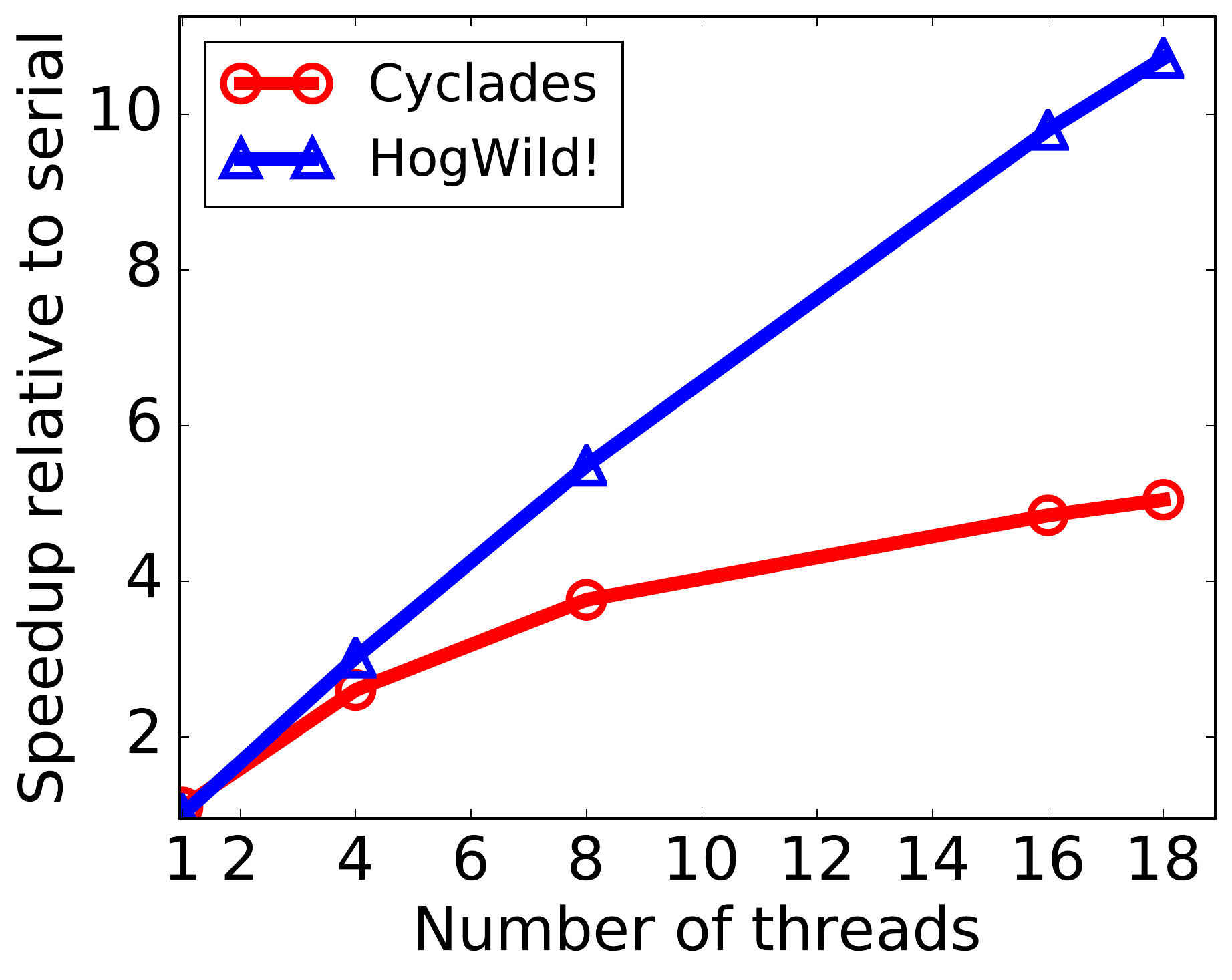}}
    \subfigure[Speedup, 0.048\%]{\includegraphics[width=0.23\textwidth]{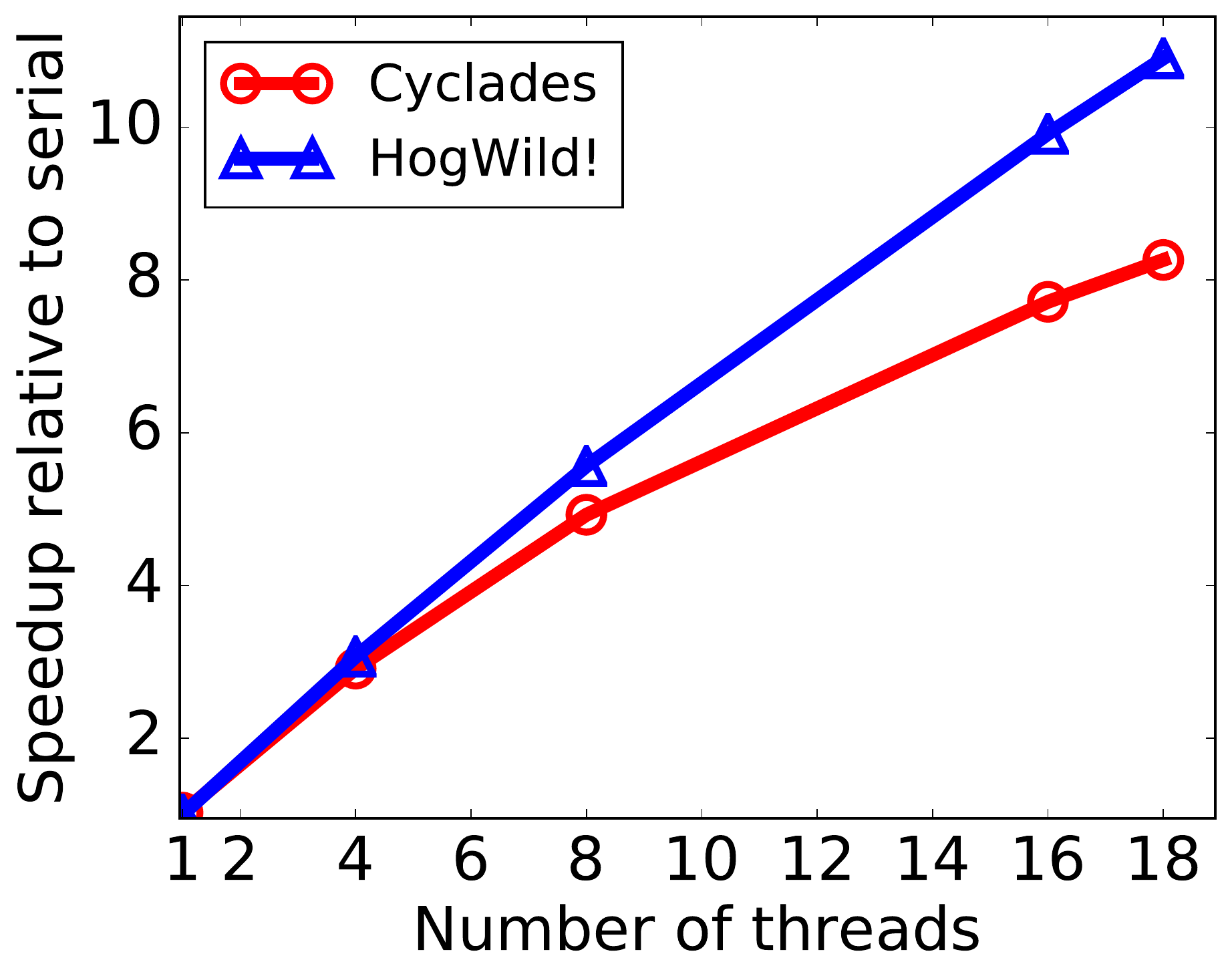}}
  \caption{Convergence and speedups of \cyc{} and \hog{} on 1, 4, 8, 16, 18 threads, for different percentage of dense features filtered.
  }
  \label{fig:url_all}
\vspace{-0.2cm}
\end{figure*}

\section{Related work}
\label{sec:prior}

Parallel stochastic optimization has been studied under various guises, with literature stretching back at least to the late 60s
\cite{chazan1969chaotic}.
The end of Moore's Law coupled with recent advances in parallel and distributed computing technologies have triggered renewed interest
\cite{zinkevich2009slow,
zinkevich2010parallelized,
gemulla2011large,
agarwal2011distributed,
richtarik2012parallel,
jaggi2014communication}
in the theory and practice of this field.
Much of this contemporary work is built upon the foundational work of Bertsekas, Tsitsiklis et al.
\cite{bertsekas1989parallel,
tsitsiklis1986distributed}.

\cite{niu2011hogwild} introduced \HW{}, a completely lock-free and asynchronous parallel stochastic gradient descent (SGD), in shared-memory multicore systems.
Inspired by \HW{}'s success at achieving nearly linear speedups for a variety of machine learning tasks, several authors developed other lock-free and asynchronous optimization algorithms, such as parallel stochastic coordinate descent \cite{liu2014asynchronous1,liu2015asynchronous}.
Additional work in first order optimization and beyond
\cite{duchi2013estimation,
wang2014asynchronous,
hong2014distributed,
hsieh2015passcode,
feyzmahdavian2015asynchronous},
extending to parallel iterative linear solvers \cite{liu2014asynchronous2,avron2014revisiting}, has further demonstrated that linear speedups are generically possible in the asynchronous shared-memory setting.
Moreover, \cite{reddi2015variance} proposes an analysis for asynchronous parallel, but dense, SVRG, under assumptions similar to those found in \cite{niu2011hogwild}.
The authors of
\cite{de2015taming} offer a new analysis for the ``coordinate-wise'' update version of \HW{} using martingales, with similar assumptions to \cite{niu2011hogwild}, that however can be applied to some non-convex problems. 
Furthermore, \cite{lian2015asynchronous} presents an analysis for stochastic gradient methods on smooth, potentially nonconvex functions.
Finally, \cite{peng2015arock} introduces a new framework for analyzing coordinate-wise fixed point stochastic iterations.

Recently, \cite{mania2015perturbed} provided a new analysis for asynchronous stochastic optimization by interpreting the effects of asynchrony as noise in the iterates.
This perspective of asycnhrony as noise in the iterates was used in \cite{pan2015parallel} to analyze a combinatorial graph clustering algorithm.

Parallel optimization algorithms have also been developed for specific subclasses of problems, including L1-regularized loss minimization \cite{bradley2011parallel} and matrix factorization \cite{recht2013parallel}.

Other machine learning algorithms that have been parallelized using coordination and concurrency control, including non-parametric clustering \cite{pan2013optimistic}, submodular maximization \cite{pan2014parallel}, and correlation clustering \cite{pan2015parallel}.

Sparse, graph-based parallel computation are supported by systems like GraphLab \cite{low2014graphlab} and PowerGraph \cite{gonzalez2012powergraph}.
These frameworks require computation to be written in a specific programming model with associative, commutative operations.
GraphLab and PowerGraph support serializable execution via locking mechanisms
This is in contrast to our partition-and-allocate coordination which allows us to provide guarantees on speedup.

Parameter servers \cite{ho2013more, li2014scaling} are frameworks supporting distributed, asynchronous computation.
Convergence is only guaranteed for some specific algorithms, namely SGD, but not in general as execution is not serializable.
The focus of parameter servers is in the distributed setting, where the challenges are different than in the multicore setting.

\section{Conclusion}
\label{sec:conclusion}
We presented \cyc{}, a general framework for lock-free parallelization of stochastic optimization algorithms, while maintaining serial equivalence.
Our framework can be used to parallelize a large family of stochastic updates algorithms in a conflict-free manner, thereby ensuring the parallelized algorithm produces the same result as its serial counterpart.
Theoretical properties, such as convergence rates, are therefore preserved by the \cyc{}-parallelized algorithm, and
we provide a single unified theoretical analysis that guarantees near linear speedups.

By eliminating conflicts across processors within each batch of updates, \cyc{} is able to avoid all asynchrony errors and conflicts, and leads to better cache locality and cache coherence than \hog{}.
These features of \cyc{} translate to near linear speedups in practice, where it can outperform \HW{}-type of implementations by up to a factor of $5\times$, in terms of speedups

In the future, we intend to explore hybrids of \cyc{} with \hog{}, pushing the boundaries of what is possible in a shared-memory setting.
We are also considering solutions for scaling \emph{out} in a distributed setting, where the cost of communication is significantly higher.

\section*{Acknowledgements}
BR is generously supported by ONR awards  N00014-14-1-0024, N00014-15-1-2620,
and N00014-13-1-0129, and NSF awards CCF-1148243 and CCF-1217058.
XP's work is also supported by a DSO National Laboratories Postgraduate Scholarship.
This research
is supported in part by NSF CISE Expeditions Award CCF-1139158, LBNL Award
7076018, and DARPA XData Award FA8750-12-2-0331, and gifts from Amazon Web
Services, Google, SAP, The Thomas and Stacey Siebel Foundation, Adatao, Adobe,
Apple, Inc., Blue Goji, Bosch, C3Energy, Cisco, Cray, Cloudera, EMC2, Ericsson,
Facebook, Guavus, HP, Huawei, Informatica, Intel, Microsoft, NetApp, Pivotal,
Samsung, Schlumberger, Splunk, Virdata and VMware.

\bibliography{cyclades}
\bibliographystyle{alpha}

\appendix
\section{Algorithms in the Stochastic Updates family}
\label{sec:su}

Here we show that several algorithms belong to the Stochastic Updates (SU) family.
These include the well-known stochastic gradient descent, iterative linear solvers, stochastic PCA and others, as well as combinations of weight decay updates, variance reduction methods, and more.
Interestingly, even some combinatorial graph algorithms fit under the SU umbrella, such as the maximal independent set, greedy correlation clustering, and others.
We visit some of these algorithms below.

\paragraph{Stochastic Gradient Descent (SGD)}

Given $n$ functions $f_1, \ldots, f_n$, one often wishes to minimize the average of these functions:
$$\min_{\vecx} \frac{1}{n} \sum_{i=1}^n f_i(\vecx).$$
A popular algorithm to do so ---even in the case of non-convex losses--- is the stochastic gradient descent:
\begin{equation*}
\vecx_{k+1} = \vecx_{k} - \gamma_k\cdot \nabla f_{i_k}(\vecx_k).\nonumber
\end{equation*}
In this case, the distribution $\mathcal{D}$ for each sample $i_k$ is usually a with or without replacement uniform sampling among the $n$ functions.
For this algorithm the conflict graph between the $n$ possible different updates is completely determined by the support of the gradient vector $\nabla f_{i_k}(\vecx_k)$.

\paragraph{Weight decay and regularization}
Similar to SGD, in some cases we might wish to regularize the objective with an $\ell_2$ term and solve instead the following optimization:
$$\min_{\vecx} \frac{1}{n} \sum_{i=1}^n f_i(\vecx) + \frac{\eta}{2} ||\vecx||_2^2.$$ 
In this case, the update is a weighted version of the ``non-regularized" SGD:
\begin{equation*}
\vecx_{k+1} = (1 - \gamma_k \eta)\cdot \vecx_{k} - \gamma_k\cdot \nabla f_{i_k}(\vecx_k). \nonumber
\end{equation*}
The above stochastic update algorithm can be also be written in the SU language.
Although here for each single update the entire model has to be updated with the new weights, we show below that with a simple technique it can be equivalently expressed so that each update is sparse and the support is again determined by the gradient vector $\nabla f_{i_k}(\vecx_k)$.

\paragraph{First order techniques with variance reduction}
Variance reduction is a technique that is usually employed for (strongly) convex problems, where we wish to minimize the variance of SGD in order to achieve better  rates.
A popular way to do variance reduction is either through SVRG or SAGA, where a ``memory'' factor is used in the computation of each gradient update rule.
For SAGA we have
\begin{align*}
&{\bf x}_{k+1} = {\bf x}_k - \gamma \cdot \left(\nabla f_{s_k} ({\bf x}_k) - {\bf g}_{s_k} + \frac{1}{n} \sum_{i=1}^n {\bf g}_i\right)\\
&{\bf g}_{s_k} = \nabla f_{s_k} ({\bf x}_k).
\end{align*}
For SVRG the update rule is 
\begin{equation*}
\vecx_{k+1} = \vecx_{k} - \gamma_k \cdot (\nabla f_{i_k}(\vecx_k)-\nabla f_{i_k}(\vecy)+\vecg) \nonumber
\end{equation*}
where ${\bf g} = \nabla f(\vecy)$, and ${\bf y}$ is updated every $T$ iterations of the previous form to be equal to the last $\vecx_k$ iterate.
Again, although at first sight the above updates seem to be dense, we show below how we can equivalently rewrite them so that the update-conflict graph is completely determined by the support of the gradient vector $\nabla f_{i_k}(\vecx_k)$.

\paragraph{Combinatorial graph algorithms}
Interestingly, even some combinatorial graph problems can be phrased in the SU language: greedy correlation clustering  (The Pivot Algorithm~\cite{ailon2008aggregating}) and the maximal independent set (these are in fact identical algorithms).
In the case of correlation clustering, we are given a graph $G$ vertices joined with either positive or negative edges.
Here the objective is to create a number of clusters so that the number of vertex pairs that are sharing negative edges within clusters, plus the number of pairs that are sharing positive edges across clusters, is minimized.
For these cases, there exists a very simple algorithm that obtains a $3$ approximation for the above objective: randomly sample a vertex, create a cluster with that vertex and its neighborhood, remove that cluster from the graph, and repeat.
The above procedure is amenable to the following update rule:
\ST{Can we put some time indices below?}
\begin{align*}
[\vecx_{k+1}]_{N(v)} = \min([\vecx_{k}]_{N(v)}, v ) \nonumber
\end{align*}
where $\vecx$ is intialized to $\infty$, and at each iteration we sample $v$ uniformly among those with label $x_v=\infty$, and $N(v)$ denotes the neighborhood of a vertex in $G$.
Interestingly, we can directly apply the same guarantees of the main \cyc{} theorem here.
An optimized implementation of \cyc{} for correlation clustering was developed in \cite{pan2015parallel}.

To reiterate, all of the above algorithms, various combinations of them, and further extensions can be written in the language of SU, as presented in Alg. ~\ref{alg:SU}.

\subsection{Lazy Updates}
\label{app:lazy}

For the cases of weight decay/regularization, and variance reduction, we can reinterpret their inherently dense updates in an equivalent sparse form.
Let us consider the following generic form of updates:
\begin{align}
x_j \leftarrow (1-\mu_j) x_j - \nu_j + h_{ij}({\bf x}_{\mathcal{S}_i}) \label{eq:lazy}
\end{align}
where $h_{ij}({\bf x}_{\mathcal{S}_i}) = 0$ for all $j \not\in \mathcal{S}_i$.
Each stochastic update therefore reads from the set $\mathcal{S}_i$ but writes to every coordinate.
However, it is possible to make updates lazily only when they are required.
Observe that if $\tau_j$ updates are made, each of which have $h_{ij}({\bf x}_{\mathcal{S}_i}) = 0$, then we could rewrite these $\tau_j$ updates in closed form as
\begin{align}
x_j
&= (1-\mu_j)^{\tau_j} x_j - \nu_j \sum_{k=1}^{\tau_j} (1-\mu_j)^k\label{eq:lazy_catchup1}\\
&= (1-\mu_j)^{\tau_j} x_j - \frac{\nu_j}{\mu_j}(1-\mu_j) \left(1-(1-\mu_j)^{\tau_j}\right).\label{eq:lazy_catchup2}
\end{align}
This allows the stochastic updates to only write to coordinates in $\mathcal{S}_i$ and defer writes to other coordinates.
This procedure is described in Algorithm \ref{alg:lazy}.
With \cyc{} it is easy to keep track of $\tau_j$, since we know the serially equivalent order of each stochastic update.
On the other hand, it is unclear how a \hog{} approach would behave with additional noise in $\tau_j$ due to asynchrony.
In fact, \hog{} could possibly result in negative values of $\tau_j$, and in practice, we find that it is often useful to threshold $\tau_j$ by $\max(0, \tau_j)$.

\begin{wrapfigure}{R}{0.5\columnwidth}
\vspace{-0.3cm}
\begin{minipage}{0.46\columnwidth}
\begin{algorithm}[H]
   \caption{Lazy Stochastic Updates pseudo-algorithm}
\begin{algorithmic}[1]
\STATE Input: ${\bf x};\: f_1,\ldots, f_n;\: u_1,\ldots, u_n;\: g_1,\ldots, g_n;\: \mathcal{D};\:T$.
\STATE Initialize $\rho(j) = 0$.
\FOR {$t=1:T$}
  \STATE sample $i\sim \mathcal{D}$
  \STATE ${\bf x}_{\mathcal{S}_i}$ =  read coordinates $\setS_i$ from $\vecx$
  \FOR {$j \in \mathcal{S}_i$}
    \STATE $\tau_j = t - \rho(j) - 1$.
    \STATE $x_j \leftarrow (1-\mu_j)^{\tau_j} x_j - \nu_j \sum_{k=1}^{\tau_j} (1-\mu_j)^k$.
    \STATE $x_j \leftarrow (1-\mu_j) x_j - \nu_j + h_{ij}({\bf x}_{\mathcal{S}_i})$.
    \STATE $\rho(j) \leftarrow t$.
  \ENDFOR
\ENDFOR
\STATE {\bf Output:} ${\bf x}$
\end{algorithmic}
   \label{alg:lazy}
 \end{algorithm}
\end{minipage}
 \end{wrapfigure}

\paragraph{Weight decay and regularization}
The weighted decay SGD update is a special case of Eq \ref{eq:lazy}, with $\mu_j = \eta\gamma$ and $\nu_j = 0$.
Eq \ref{eq:lazy_catchup2} becomes $x_j \leftarrow (1-\eta\gamma)^{\tau_j} x_j$.

\paragraph{Variance reduction with sparse gradients}
Suppose $\nabla f_i({\bf x})$ is sparse, such that $[\nabla f_i({\bf x})]_j = 0$ for all $\bf x$ and $j \not\in \mathcal{S}_i$.
Then we can perform  SVRG and SAGA using lazy updates, with $\mu_j = 0$.
Just-in-time updates for SAG (a similar algorithm to SAGA) were introduced in \cite{schmidt2013minimizing}.
For SAGA, the update Eq \ref{eq:lazy_catchup1} becomes
\[
x_j \leftarrow x_j - \gamma\tau_j g_j
\]
where $g_j = \left[\frac{1}{n}\sum_{i=1}^n \mathbf{y}_{k,i}\right]_j$ is the $j$th coordinate of the average gradient.
For SVRG, we instead use $g_j = [\nabla f(\mathbf{y})]_j$.

\paragraph{SVRG with dense linear gradients}
Suppose instead that the gradient is dense, but has linear form $[\nabla f_i(\mathbf{x})]_j = \lambda_j x_j - \kappa_j + \tilde{h}_{ij}(\mathbf{x}_{\mathcal{S}_i})$, where $\tilde{h}_{ij}(\mathbf{x}_{\mathcal{S}_i}) = 0$ for $j \not\in \mathcal{S}_i$.
The SVRG stochastic update on the $j$th coordinate is then
\begin{align*}
x_j
&\leftarrow x_j - \gamma \left(
\lambda_j x_j - \kappa_j + \tilde{h}_{ij}(\mathbf{x}_{\mathcal{S}_i})
- \lambda_j y_j + \kappa_j - \tilde{h}_{ij}(\mathbf{y}_{\mathcal{S}_i})
+ g_j
\right)\\
&= (1-\gamma\lambda_j)x_j - \gamma g_j - \gamma \left(\tilde{h}_{ij}(\mathbf{x}_{\mathcal{S}_i}) - \tilde{h}_{ij}(\mathbf{y}_{\mathcal{S}_i})\right)
\end{align*}
where $g_j = [\nabla f(\mathbf{y})]_j$ as above.
This fits into our framework with $\mu_j = \gamma\lambda_j$, $\nu_j = \gamma g_j$, and $h_{ij}(\mathbf{x}_{\mathcal{S}_i}) = - \gamma \left(\tilde{h}_{ij}(\mathbf{x}_{\mathcal{S}_i}) - \tilde{h}_{ij}(\mathbf{y}_{\mathcal{S}_i})\right)$.
\section{With and Without Replacement Proofs}
\label{app:replacement}
In this Appendix, we show how the sampling and shattering Theorem~\ref{theo:Gsample1} can be restated for sampling with, or without replacement to establish Theorem~\ref{theo:ourGsample}.

Let us define three sequences of binary random variables $\{X_i\}_{i=1}^n, \{Y_i\}_{i=1}^n,$ and $\{Z_i\}_{i=1}^n$.
$\{X_i\}_{i=1}^n$ consists of $n$ i.i.d. Bernoulli random variables, each with probability $p$.
In the second sequence $ \{Y_i\}_{i=1}^n$, a random subset of $B$ random variables is set to $1$ without replacement.
Finally, in the third sequence $\{Z_i\}_{i=1}^n$, we draw $B$ variables with replacement, and we set them to $1$.
Here, $B$ is integer that satisfies the following bounds
$$(n+1)\cdot p-1\le B < (n+1)\cdot p.$$

Now, consider any  function $f$, that has a ``monotonicity" property:
$$f(x_1,\ldots,x_i,\ldots,  x_n) \ge f(x_1,\ldots,0,\ldots,  x_n), \text{ for all } i=1,\ldots,n.$$
Let us now define
\begin{align*}
\rho_X &= \Pr \left( f(X_1,\ldots, X_n) > C \right)\\
\rho_Y &= \Pr \left( f(Y_1,\ldots, Y_n) > C \right)\\
\rho_Z &= \Pr \left( f(Z_1,\ldots, Z_n) > C \right)
\end{align*}
for some number $C$, and
let us further assume that we have an upper bound on the above probability 
$$\rho_X\le \delta.$$ 
Our goal is to bound $\rho_Y$ and $\rho_Z$.
By expanding $\rho_X$ using the law of total probability we have
\begin{align}
\rho_X =\sum_{b=0}^n \Pr \left( f(X_1,\ldots, X_n ) > C \left| \sum_{i=1}^n X_i = b\right.\right) \cdot \Pr\left( \sum_{i=1}^n X_i = b\right)\nonumber=\sum_{b=0}^n q_b \cdot \Pr\left( \sum_{i=1}^n X_i = b\right)
\end{align}
where $q_b=\Pr \left( f(X_1,\ldots, X_n ) > C \left| \sum_{i=1}^n X_i = b\right.\right)$, denotes the probability that $f(X_1,\ldots, X_n ) > C$ given that a uniformly random subset of $b$ variables was set to $1$.
Moreover, we have 
\begin{align}
\rho_Y& =\sum_{b=0}^{n} \Pr \left( f(Y_1,\ldots, Y_n) > C \left| \sum_{i=1}^n Y_i = b\right.\right) \cdot \Pr\left( \sum_{i=1}^n Y_i =b\right )\nonumber\\
&\overset{(i)}{=}\sum_{b=0}^{n} q_b \cdot \Pr\left( \sum_{i=1}^n Y_i =b\right )\nonumber\\
&\overset{(ii)}{=} q_{B} \cdot 1
\end{align}
where $(i)$ comes form the fact that $\Pr \left( f(Y_1,\ldots, Y_n) > C \left| \sum_{i=1}^n Y_i = b\right.\right) $ is the same as the probability that that $f(X_1,\ldots, X_n ) > C$ given that a uniformly random subset of $b$ variables where set to $1$, and 
$(ii)$ comes from the fact that since we sample without replacement in $Y$, we have that $\sum_{i}^n Y_i = B $ always.

In the expansion of $\rho_X$, we can keep the $b=B$ term, and lower bound the probability to obtain:
\begin{align}
\rho_X &=\sum_{b=0}^n q_b \cdot \Pr\left( \sum_{i=1}^n X_i = b\right)\nonumber \\
&\ge q_{B} \cdot \Pr\left( \sum_{i=1}^n X_i = B\right) = \rho_Y \cdot \Pr\left( \sum_{i=1}^n X_i = B\right)
\label{eq:rho_X_LB}
\end{align}
since all terms in the sum are non-negative numbers.
Moreover, since $X_i$s are Bernoulli random variables, their sum $\sum_{i=1}^n X_i$ is Binomially distributed with parameters $n$ and $p$.
We know that the maximum of the Binomial pmf with parameters $n$ and $p$ occurs at $\Pr\left(\sum_i X_i = B\right)$ where $B$ is the integer that satisfies the upper bound mentioned above: 
$(n+1)\cdot p-1\le B < (n+1)\cdot p$.
Furthermore, the maximum value of the Binomial pmf, with parameters $n$ and $p$, cannot be less than the corresponding probability of a uniform element:
\begin{equation}
\Pr\left( \sum_{i=1}^n X_i =B\right) \ge \frac{1}{n}.
\label{eq:maxBinomial}
\end{equation}
If we combine \eqref{eq:rho_X_LB} and \eqref{eq:maxBinomial} we get
\begin{equation}
\rho_X\ge \rho_Y / n \Leftrightarrow \rho_Y \le  n\cdot \delta. 
\end{equation}
The above establish a relation between the without replacement sampling sequence $\{Y_i\}_{i=1}^n$, and the i.i.d. uniform sampling sequence $\{X_i\}_{i=1}^n$.

Then, for the last sequence $\{Z_i\}_{i=1}^n$ we have 
\begin{align}
\rho_Z& =\sum_{b=0}^{n} \Pr \left( f(Z_1,\ldots, Z_n) > C \left| \sum_{i=1}^n Z_i = b\right.\right) \cdot \Pr\left( \sum_{i=1}^n Z_i =b\right )\nonumber\\
& \overset{(i)}{=}\sum_{b=1}^{B} q_b \cdot \Pr\left( \sum_{i=1}^n Z_i =b\right )\\
& \overset{(ii)}{\le}  \left(\max_{1\le b\le B} q_b\right) \cdot \sum_{b=1}^{B}\Pr\left( \sum_{i=1}^n Z_i =b\right )\nonumber\\
& \overset{(iii)}{=} q_{B} = \rho_Y \le  n\cdot \delta,\nonumber
\end{align}
where $(i)$ comes from the fact that $\Pr\left( \sum_{i=1}^n Z_i =b\right )$ is zero for $b=0$ and $b>B$,
$(ii)$ comes by applying H\"{o}lder's Inequality, 
and $(iii)$ holds since $f$ is assumed to have the monotonicity property:
$$
f(x_1,\ldots,x_i,\ldots,  x_n) \ge f(x_1,\ldots,0,\ldots,  x_n),
$$
for any sequence of variables $x_1,\ldots, x_n$.
Hence, for any $b_1\ge b_2$
\begin{equation}
\Pr \left( f(Z_1,\ldots, Z_n) > C \left| \sum_{i=1}^n Z_i = b_1\right.\right) \ge\Pr \left( f(Z_1,\ldots, Z_n) > C \left| \sum_{i=1}^n Z_i = b_2\right.\right).
\end{equation}

In conclusion, we have upper bounded $\rho_Z$ and $\rho_Y$ by
\begin{align}
\rho_Z \leq \rho_Y \leq n \cdot \rho_X \leq n \cdot \delta. \label{eq:replacementbound}
\end{align}

\textbf{Application to Theorem \ref{theo:ourGsample}:}
For our purposes, the above bound Eq. \eqref{eq:replacementbound} allows us to assert Theorem \ref{theo:ourGsample} for with replacement, without replacement, and i.i.d. sampling, with different constants.
Specifically, for any graph $G$, the size of the largest connected component in the sampled subgraph can be expressed as a function $f_G(x_1, \dots, x_n)$, where each $x_i$ is an indicator for whether the $i$th vertex was chosen in the sampling process.
Note that $f_G$ is a monotone function, i.e., $f_G(x_1, \dots, x_i, \dots, x_n) \geq f_G(x_1, \dots, 0, \dots, x_n)$ since adding vertices to the sampled subgraph may only increase (or keep constant) the size of the largest connected component. 
We note that the high probability statement of Theorem~\ref{theo:Gsample1}, can be restated so that the constants in front of the size of the connected components accomodate for a statement that is true with probability $1-1/n^\zeta$, for any constant $\zeta>1$.
This is required to take care of the extra $n$ factor that appears in the bound of Eq.~\ref{eq:replacementbound}, and to obtain Theorem~\ref{theo:ourGsample}.

\section{Parallel Connected Components Computation}
\label{sec:cc}

As we will see in the following, the cost of computing CCs in parallel will depend on the number of cores so that uniform allocation across them is possible, and the number of edges that are induced by the sampled updates on the bipartite update-variable graph $G_u$ is bounded.
As a reminder we denote as $G_u^i$ the bipartite subgraphs of the update-variable graph $G_u$, that is induced by the sampled updates of the $i$-th batch.
Let us denote as $E_u^i$ the number of edges in $G_u^i$.

Following the sampling recipe of our main text (i.e., sampling each update per batch uniformly and with replacement), let us assume here that we are sampling $c\cdot n$ updates in total, for some constant $c\ge1$.
Assuming that the size of each batch is $B = (1-\epsilon)\frac{n}{\Delta}$, the total number of sampled batches will be $n_b = \frac{c}{1-\epsilon} \Delta$.
The total number of edges in the induced sampled bipartite graphs is a random variable that we denote as
$$ Z = \sum_{i=1}^{n_b} E_u^i.$$
Observe that $\E Z = c\cdot E_u$.
Using a simple Hoeffding concentration bound we can see that 
\begin{align*}
 \Pr\left\{ |Z- c E_u| >  (1+\delta) c\cdot E_u \right\} &\le 2 e^{-\frac{2 c^2\cdot (1+\delta)^2 E_u^2}{c\cdot n \Delta_{\text{L}}^2}}\le 2 e^{-2 c\cdot (1+\delta)^2 \cdot \frac{ n\overline{\Delta}^2_{\text{L}}}{\Delta_{\text{L}}^2}}
\end{align*}
where $\Delta_{\text{L}}$ is the max left degree of the bipartite graph $G_u$ and $\overline{\Delta}_{\text{L}}$ is its average left degree.
Now assuming that 
$$ \frac{\Delta_{\text{L}}}{\overline{\Delta}_{\text{L}}} \le \sqrt{n}$$
we obtain
$$ \Pr\left\{ |Z- c E_u| > \log n \cdot c\cdot  E_u \right\} \le  2 e^{- c\cdot \log^2n}.$$
Hence, we get the following simple lemma:
\begin{lem}
Let $ \frac{\Delta_{\text{L}}}{\overline{\Delta}_{\text{L}}} \le \sqrt{n}$.
Then, the total number of edges $Z = \sum_{i=1}^{n_b} E_u^i$ across the $n_b=\frac{c}{1-\epsilon} \Delta$ sampled subgraphs $G_u^1,\ldots, G_u^{n_b}$ is less than $ O( E_u \log n)$ with probability $1-n^{-\Omega(\log n)}$.
\end{lem}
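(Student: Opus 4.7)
The plan is to view $Z$ as a sum of independent contributions, one per sampled update, and then apply a standard Hoeffding-type concentration bound. Since the sampling is uniform with replacement across all batches, the $c\cdot n$ total sampled updates (namely, $n_b\cdot B = \frac{c}{1-\epsilon}\Delta\cdot (1-\epsilon)\frac{n}{\Delta} = c\cdot n$) are i.i.d.\ from the uniform distribution on $\{1,\dots,n\}$. If $j_t$ denotes the $t$-th sampled update and $d_{L,j_t}$ its left degree in $G_u$, then $Z = \sum_{t=1}^{cn} d_{L,j_t}$, since every edge of $G_u^i$ is incident to exactly one sampled update (the ``left'' side of the bipartite graph).

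First I would compute the expectation. Each term satisfies $\mathbb{E}[d_{L,j_t}] = \tfrac{1}{n}\sum_{j=1}^n d_{L,j} = \overline{\Delta}_L$, so $\mathbb{E}[Z] = cn\cdot\overline{\Delta}_L = c\cdot E_u$, as already asserted in the excerpt. Next I would apply Hoeffding's inequality to the sum of $cn$ independent random variables each bounded in $[0,\Delta_L]$, which yields
\[
\Pr\{|Z - c\,E_u| > t\}\;\le\;2\exp\!\left(-\tfrac{2t^2}{cn\,\Delta_L^2}\right).
\]
Setting $t = \log n\cdot c\cdot E_u$ and using $E_u = n\,\overline{\Delta}_L$, the exponent becomes $-2c\log^2 n\cdot\tfrac{n\,\overline{\Delta}_L^2}{\Delta_L^2}$.

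Here is where the hypothesis $\Delta_L/\overline{\Delta}_L\le\sqrt{n}$ comes in: it is exactly the condition $\tfrac{n\,\overline{\Delta}_L^2}{\Delta_L^2}\ge 1$, so the exponent is at most $-2c\log^2 n$. Therefore $\Pr\{Z > (1+\log n)\,c\,E_u\}\le 2e^{-2c\log^2 n} = n^{-\Omega(\log n)}$, which gives $Z = O(E_u\log n)$ with the claimed probability.

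There is no real obstacle in this argument; the only place that requires any thought is to notice that the left-degree-based bound on each summand is precisely what the assumption $\Delta_L/\overline{\Delta}_L\le\sqrt{n}$ is designed to control. Had we not enforced this ratio bound, the Hoeffding exponent could degrade below $\log^2 n$ and we would lose the desired polylogarithmic-tail guarantee. (One could alternatively invoke a Bernstein bound to sharpen constants, but Hoeffding suffices for the $\tilde{O}$ statement needed downstream.)
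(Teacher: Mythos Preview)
Your proposal is correct and follows essentially the same argument as the paper: both decompose $Z$ into the sum of $cn$ i.i.d.\ left degrees bounded by $\Delta_L$, compute $\mathbb{E}[Z]=cE_u$, apply Hoeffding with deviation $t=c\,E_u\log n$, and then invoke $\Delta_L/\overline{\Delta}_L\le\sqrt n$ to force the exponent below $-\Omega(\log^2 n)$. The only difference is cosmetic---the paper writes the Hoeffding bound directly in terms of $E_u$ without first spelling out that $Z=\sum_t d_{L,j_t}$, whereas you make that step explicit.
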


Now that we have a bound on the number of edges in the sampled subgraphs, we can derive the complexity bounds for computing CCs in parallel.
We will break the discussion into the not-too-many- and many-core regime.

\paragraph{The not-too-many cores regime.} 
In this case, we sample $n_b$ subgraphs, allocate them across $P$ cores, and let each core compute CCs on its allocated subgraphs using BFS or DFS.
Since each batch is of size $B = (1-\epsilon)\frac{n}{\Delta}$, we need $n_b = \lfloor \sfrac{c \cdot n}{B} \rfloor= c\cdot  \lfloor \frac{\Delta}{1-\epsilon} \rfloor$ batches to cover $c\cdot n$ updates in total.
If the number of cores is 
$$ P = O\left( \frac{\overline{\Delta}_{\text{L}}}{\Delta_{\text{L}}} \cdot \Delta \right),$$
then the max cost of a single CC computation on a subgraph (which is $O(B\Delta_{\text{L}})$) is smaller than the average cost across $P$ cores, which is $O(Z/P)$.
This implies that a uniform allocation is possible, so that $P$ cores can share the computational effort of computing CCs.
Hence, we can get the following lemma:

\begin{lem}
Let the number of cores be $ P = O\left( \frac{\overline{\Delta}_{\text{L}}}{\Delta_{\text{L}}} \cdot \Delta \right)$, and let us sample $O(\Delta)$ batches, where each batch is of size $O(\frac{n}{\Delta})$.
Then, each core will not spend more than $O\left(\frac{E_u \log n}{P}\right)$ time in computing CCs, with high probability.
\end{lem}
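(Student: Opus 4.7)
The plan is to combine the edge-count concentration result from the preceding lemma with a standard load-balancing argument for parallel scheduling of independent jobs. The overall strategy: (i) bound the cost of a single CC computation on a sampled subgraph, (ii) bound the total work across all $n_b$ subgraphs, (iii) show that when $P = O(\overline{\Delta}_L \Delta / \Delta_L)$, the largest individual job is no more than the average load per core, and (iv) invoke a greedy allocation (e.g.\ list scheduling or round-robin) to deduce that each core's load is at most a constant times the average.

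First I would compute the per-subgraph cost. Running BFS or DFS on the induced bipartite subgraph $G_u^i$ costs $O(V_i + E_u^i)$, and since each of the $B$ sampled updates has at most $\Delta_L$ neighbors, $V_i = O(B \Delta_L)$. Thus the cost for a single subgraph is at most $O(B \Delta_L + E_u^i) = O(B \Delta_L)$ in the worst case (the edge count can only be as large as $B \Delta_L$). Next, I would aggregate: the total work is $W = \sum_{i=1}^{n_b}(V_i + E_u^i) = O(n_b \cdot B\, \Delta_L / \Delta_L + Z) = O(n + Z)$ deterministically, and by the preceding lemma $Z = O(E_u \log n)$ with probability $1 - n^{-\Omega(\log n)}$. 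Since $E_u = n\, \overline{\Delta}_L \ge n$, this simplifies to $W = O(E_u \log n)$ w.h.p.

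The load-balancing step is where care is needed. Using Graham's classical list-scheduling bound, assigning $n_b$ jobs of sizes $w_1, \ldots, w_{n_b}$ to $P$ identical machines gives a makespan of at most $W/P + \max_i w_i$. Here $\max_i w_i = O(B\,\Delta_L) = O(n \Delta_L / \Delta)$. The hypothesis $P = O(\overline{\Delta}_L \Delta / \Delta_L)$ gives
\begin{equation*}
\max_i w_i \;=\; O\!\left(\frac{n \Delta_L}{\Delta}\right) \;=\; O\!\left(\frac{n\, \overline{\Delta}_L}{P}\right) \;=\; O\!\left(\frac{E_u}{P}\right) \;\le\; O\!\left(\frac{E_u \log n}{P}\right),
\end{equation*}
so $\max_i w_i$ is absorbed into the average term. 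Combining, the per-core running time is $O(W/P + \max_i w_i) = O(E_u \log n / P)$ w.h.p.

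The main obstacle is really just making the boundary condition on $P$ precise: one must check that the largest single-subgraph job does not by itself dominate the makespan, which reduces exactly to the stated inequality $P \le O(\overline{\Delta}_L \Delta / \Delta_L)$ (via $E_u = n\,\overline{\Delta}_L$). The assumption $\Delta_L / \overline{\Delta}_L \le \sqrt{n}$ is not used directly in this lemma except to invoke the high-probability edge-count bound from the previous lemma. Everything else is routine: the sampling independence gives the concentration, BFS/DFS gives the linear-time per-subgraph cost, and greedy list scheduling delivers the balanced allocation.
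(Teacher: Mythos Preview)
Your proposal is correct and follows essentially the same approach as the paper: bound the per-subgraph BFS/DFS cost by $O(B\Delta_L)$, invoke the preceding lemma to get total work $Z=O(E_u\log n)$ w.h.p., and then argue that under the hypothesis $P=O(\overline{\Delta}_L\Delta/\Delta_L)$ the largest single job is dominated by the average load $O(E_u/P)$, so greedy allocation balances. The only difference is cosmetic: you make the balancing step explicit via Graham's list-scheduling bound, whereas the paper just asserts that ``the max cost of a single CC computation \ldots\ is smaller than the average cost across $P$ cores \ldots\ this implies that a uniform allocation is possible.'' (One small notational hiccup: your expression $O(n_b\cdot B\,\Delta_L/\Delta_L + Z)$ is odd---presumably you mean $\sum_i V_i \le n_b B + Z$ via $V_i\le B+E_u^i$---but the conclusion $W=O(n+Z)=O(E_u\log n)$ is correct.)
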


\paragraph{The many-cores regime.} 
When $P >> \frac{\overline{\Delta}_{\text{L}}}{\Delta_{\text{L}}}$ the uniform balancing of the above method will break, leaving no room for further parallelization.
In that case, we can use a very simple ``push-label'' CC algorithm, whose cost on $P$ cores and arbitrary graphs with $E$ edges and max degree $\Delta$ is $O(\max\{\frac{E}{P}, \Delta\}\cdot C_{\max})$, where $C_{\max}$ is the size of the longest-shortest path, or the diameter of the graph \cite{kang2009pegasus}.
This parallel CC algorithm is given below, where each core is allocated with a number of vertices

\begin{wrapfigure}{R}{0.5\columnwidth}
\vspace{-0.3cm}
\begin{minipage}{0.46\columnwidth}
\begin{center}
\begin{algorithm}[H]
 \caption{\cyc{}}
 {\small
   \caption{\texttt{push-label}}
\begin{algorithmic}[1]
\STATE Initialize shared $\mathsf{cc}(v)$ variables to vertexIDs
\FOR{$i = 1:$ length of longest shortest path}
\FOR{ $v$ in the allocated vertex set}
\FOR{ all $u$ that are neighbors of $v$}
\STATE Read $cc(v)$ from shared memory
\IF{$cc(u)>cc(v)$}
\STATE Update shared $cc(u) \leftarrow \min(\mathsf{cc}(u),\mathsf{cc}(v))$
\ENDIF
\ENDFOR
\ENDFOR
\ENDFOR
\end{algorithmic}
   \label{alg:CC}
   }
 \end{algorithm}
 \end{center}
\end{minipage}
 \end{wrapfigure}

The above simple algorithm can be significantly slow for graphs where the longest-shortest path is large.
Observe, that in the sampled subgraphs $G_u^i$ the size of the shortest-longest path is always bounded by the size of the largest connected component.
By Theorem~\ref{theo:ourGsample} that is bounded by $O\left(\frac{\log n}{\epsilon^2}\right)$. 
Hence, we obtain the following lemma.

\begin{lem}
For any number of cores $P = O(\frac{n}{\Delta\cdot \Delta_{\text{L}}} )$, computing the connected component of a single sampled graph $G_u^i$ can be performed in time
$\mathcal{O}(\frac{E^i_u \log n}{P})$, with high probability.
\end{lem}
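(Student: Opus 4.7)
The plan is to invoke the \texttt{push-label} parallel CC algorithm described just before the statement, whose running time on a graph with $E$ edges, max degree $\Delta_{\max}$, and longest shortest-path $C_{\max}$ is $O(\max\{E/P,\Delta_{\max}\}\cdot C_{\max})$, and then to verify that for the sampled bipartite subgraph $G_u^i$ each of the three quantities behaves as required. Apply this algorithm directly to $G_u^i$ on all $P$ cores by partitioning the vertex set as evenly as possible across cores.

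First, I would control $C_{\max}$. Since $G_u^i$ is the bipartite subgraph induced by a batch of $B=(1-\epsilon)n/\Delta$ sampled updates, every connected component of $G_u^i$ corresponds (by collapsing variable vertices) to a connected component in the sampled conflict subgraph of $G_c$, with at most a factor-$2$ blow-up in diameter. By Theorem~\ref{theo:ourGsample}, the largest connected component of the sampled conflict subgraph has size $O(\log n/\epsilon^2)$ with high probability, so the diameter of each component of $G_u^i$---and therefore $C_{\max}$---is $O(\log n/\epsilon^2)$.

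Next, I would bound the max-degree term. The left-max-degree of $G_u^i$ is trivially at most $\Delta_L$, and the right-max-degree of $G_u^i$ is at most the right-max-degree of $G_u$, which is controlled by $\Delta$ (two updates sharing a variable are in conflict). Since the stated range $P=O(n/(\Delta\cdot\Delta_L))$ is exactly calibrated to the average-load argument used in the not-too-many-cores lemma, standard Chernoff/Hoeffding concentration applied to $E_u^i = \sum_{j\in\text{batch}} d_{L,j}$ gives $E_u^i = \Theta(B\cdot\overline{\Delta}_L)$ with high probability, hence
\[
\frac{E_u^i}{P} \;=\; \Omega\!\left(\frac{(1-\epsilon)(n/\Delta)\overline{\Delta}_L}{n/(\Delta\,\Delta_L)}\right) \;=\; \Omega(\overline{\Delta}_L\cdot\Delta_L),
\]
which dominates both $\Delta_L$ and (via the hypothesis $\Delta_L/\overline{\Delta}_L\le\sqrt{n}$ together with the structure of $G_u$) the right-max-degree bound. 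Thus the $\max\{\cdot,\cdot\}$ collapses to $E_u^i/P$.

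Putting these together yields a per-subgraph cost of $O((E_u^i/P)\cdot\log n/\epsilon^2)=O(E_u^i\log n/P)$ with high probability, as claimed. The main technical obstacle is the degree-domination step: one must simultaneously argue that sampling concentrates $E_u^i$ near its mean and that no right vertex of $G_u^i$ receives too many incident sampled updates, so that the ``$\max$'' inside the push-label cost is actually realized by the edge term rather than by a degree spike. Both facts follow from standard concentration under the hypothesis $\Delta_L/\overline{\Delta}_L\le\sqrt{n}$, mirroring the calculation already carried out earlier in this appendix for the total-edge bound.
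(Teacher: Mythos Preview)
Your proposal is correct and follows essentially the same route as the paper: invoke the \texttt{push-label} cost $O(\max\{E/P,\text{maxdeg}\}\cdot C_{\max})$ and bound $C_{\max}$ by the largest-component size from Theorem~\ref{theo:ourGsample}. The paper's own argument is in fact terser than yours---it states only the $C_{\max}=O(\log n/\epsilon^2)$ step and leaves the collapse of the $\max$ implicit---so your degree-domination paragraph is extra detail rather than a different idea; one small caution is that your per-batch concentration claim $E_u^i=\Theta(B\overline\Delta_L)$ is stronger than what the earlier Hoeffding lemma actually proves (that bound is for the \emph{sum} over all batches), but you do not really need it, since the deterministic bound $E_u^i\ge B$ already yields $E_u^i/P=\Omega(\Delta_L)$ under $P=O(n/(\Delta\Delta_L))$.
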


Since, we are interested in the overall running time for $n_b$ batches of the CC algorithm, we can see that the above lemma simply boils down to the following:
\begin{cor}
For any number of cores $P = O(\frac{n}{\Delta\cdot \Delta_{\text{L}}} )$, computing the connected component for all sampled graph $G_u^1, \ldots, G_u^{n_b}$ can be performed in time
$\mathcal{O}(\frac{E\log^2 n}{P})$.
\end{cor}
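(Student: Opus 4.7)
My plan is to get the corollary by summing the single-batch bound from the preceding lemma over all $n_b$ sampled subgraphs, and then using the high-probability total-edge-count bound already established earlier in this appendix.

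First, I would invoke the preceding lemma on each individual sampled graph $G_u^i$. Since the diameter of $G_u^i$ is at most the size of its largest connected component, Theorem~\ref{theo:ourGsample} caps that diameter at $O(\log n / \epsilon^2)$ with high probability, so the \texttt{push-label} routine on $P = O(n/(\Delta \cdot \Delta_L))$ cores finishes in time $O(E_u^i \log n / P)$. A union bound over the $n_b = O(\Delta)$ batches preserves this high-probability guarantee (each failure probability is $n^{-\Omega(\log n)}$, which easily absorbs a polynomial factor of $\Delta \le n$).

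Next, I would sum these per-batch costs:
\[
\sum_{i=1}^{n_b} O\!\left(\frac{E_u^i \log n}{P}\right) = O\!\left(\frac{\log n}{P} \cdot \sum_{i=1}^{n_b} E_u^i\right) = O\!\left(\frac{Z \log n}{P}\right),
\]
where $Z = \sum_i E_u^i$ is the total number of edges across all sampled subgraphs. The edge-count lemma proved earlier in this appendix (using a Hoeffding bound together with the assumption $\Delta_L / \overline{\Delta}_L \le \sqrt{n}$) gives $Z = O(E_u \log n)$ with probability $1 - n^{-\Omega(\log n)}$. Substituting yields the claimed bound
\[
O\!\left(\frac{E_u \log^2 n}{P}\right).
\]

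The steps are all routine; the only subtlety I anticipate is bookkeeping the high-probability events. Specifically, I need to make sure the failure probability from the diameter bound (Theorem~\ref{theo:ourGsample}) applied to each of the $n_b$ sampled subgraphs, combined with the single failure event of the edge-count lemma, all survive a union bound. Since Theorem~\ref{theo:ourGsample} can be stated with failure probability $n^{-\zeta}$ for any constant $\zeta$, picking $\zeta$ large enough relative to the (at most polynomial in $n$) number of batches is enough, and this is the only place where I would need to be careful. Everything else is a direct composition of the two previously established lemmas.
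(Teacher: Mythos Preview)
Your proposal is correct and mirrors the paper's (largely implicit) argument: the paper simply notes that the per-batch lemma ``boils down to'' the corollary when summed over the $n_b$ batches, using the earlier bound $Z=\sum_i E_u^i = O(E_u\log n)$ to pick up the second $\log n$ factor. Your explicit handling of the union bound over batches is more careful than what the paper spells out, but the route is the same.
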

\begin{rem}
In practice it seems to be that parallelizing the CC computation using the
not-too-many core regime policy is significantly more scalable.
\end{rem}

\section{Allocating the Conflict Groups}
\label{sec:allocation}

After we have sampled a single batch (i.e., a subgraph $G_u^i$), and computed the CCs for it, we have to allocate the connected components of that sampled subgraph across cores.
Observe that each connected component will contain at most $\log n$ updates, each ordered according to the a serial predetermined order.
Once a core has been assigned all the CCs, it will process all the updates included in the CCs according to the order that each update has been labeled with.

Now assuming that the cost of the $i$-th update is $w_i$, the cost of a single connected component $\setC$ will be $w_{\setC} = \sum_{i\in \setC} w_i$. 
We can now allocate the CCs accross cores so that the maximum core load is minimized, using the following $4/3$-approximation algorithm (i.e., an allocation with max load that is at most $4/3$ times the maximum between the max weight, and the sum of weights divided by $P$):
\begin{wrapfigure}{R}{0.5\columnwidth}
\vspace{-0.3cm}
\begin{minipage}{0.465\columnwidth}
\begin{algorithm}[H]
 \caption{Greedy allocation}
 {\small
\begin{algorithmic}[1]
\STATE {\bf Input} $\{w_1,\ldots, w_m\}$ \hfill {\color{gray}\%  weights to be allocated}
\STATE $b_1=0, \ldots, b_P=0$ \hfill {\color{gray}\% empty buckets}
\STATE ${\bf w}$ = sorted stack of the weights (descending order)
\FOR{$i = 1:m$}
\STATE $w = \text{pop}({\bf w})$
\STATE{add $w$ to bucket $b_i$ with least sum of weights}
\ENDFOR
\end{algorithmic}
 }
   \label{alg:greedyalloc}
 \end{algorithm}
\end{minipage}
 \end{wrapfigure}

To proceed with characterizing the maximum load among the $P$ cores, we assume that the cost of a single update $U_i$ is proportional to the out-degree of that update ---according to the update-variable graph $G_u$--- times a constant cost which we shall refer to as $\kappa$. Hence, $w_i = O(d_{L,i} \cdot \kappa)$, where $d_{L,i}$ is the  degree of the $i$-th left vertex of $G_u$.

Observe that the total cost of computing the updates in a single sampled subgraph $G_u^i$ is proportional to $\mathcal{O}(E_u^i \cdot \kappa)$.
Moreover, observe that the maximum weight among all CCs cannot be more than $O(\Delta_L  \log n \kappa)$ where $\Delta_L$ is the max left degree of the bipartite update-variable graph $G_u$.

\begin{lem}
We can allocate CCs such that the maximum load among cores is 
$\mathcal{O}\left(
\max\left\{ \frac{E^i_u}{P},  \Delta_L \log n\right\} \cdot \kappa  \right), $
with high probability, where $\kappa$ is the per edge cost for computing one of the $n$ updates defined on $G_u$.
\end{lem}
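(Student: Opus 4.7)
The plan is to view the CC allocation as an instance of the classical makespan minimization problem on $P$ identical machines, where Algorithm~\ref{alg:greedyalloc} is exactly the \emph{Longest Processing Time} (LPT) heuristic of Graham, and then plug in the structural bounds given by Theorem~\ref{theo:ourGsample} and the assumption $w_i = O(d_{L,i}\kappa)$.

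First, I would establish two crucial quantities. The total load across all CCs in the sampled subgraph $G_u^i$ is
\[
W_{\text{tot}} \;=\; \sum_{\setC} w_{\setC} \;=\; \sum_{j \in G_u^i} w_j \;=\; O\!\left(\sum_{j \in G_u^i} d_{L,j}\right) \cdot \kappa \;=\; O(E_u^i \cdot \kappa),
\]
since the sum of left-degrees over the sampled updates is precisely the edge count $E_u^i$. Next, the maximum CC weight is bounded by combining Theorem~\ref{theo:ourGsample}, which guarantees every connected component contains at most $O(\log n /\epsilon^2)$ updates with high probability, with the per-update cost bound $w_j = O(\Delta_L \kappa)$. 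Hence
\[
w_{\max} \;=\; \max_{\setC} w_{\setC} \;=\; O(\Delta_L \log n \cdot \kappa),
\]
again with high probability over the sampling.

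The main step is then to invoke Graham's classical analysis of LPT: when assigning $m$ jobs with weights $w_1 \ge w_2 \ge \cdots$ to $P$ identical machines by repeatedly placing the next job on the least loaded machine, the resulting makespan $L^*$ satisfies $L^* \le \tfrac{4}{3}\cdot \mathrm{OPT}$, and trivially $\mathrm{OPT} \ge \max\{w_{\max},\, W_{\text{tot}}/P\}$. Applied here this yields
\[
L^* \;=\; O\!\left(\max\!\left\{\tfrac{W_{\text{tot}}}{P},\, w_{\max}\right\}\right) \;=\; O\!\left(\max\!\left\{\tfrac{E_u^i}{P},\, \Delta_L \log n\right\} \cdot \kappa\right),
\]
which is exactly the claimed bound. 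The high-probability qualifier is inherited solely from the CC-size bound of Theorem~\ref{theo:ourGsample}; the deterministic Graham inequality introduces no additional failure probability.

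I do not expect any serious obstacle here, since both ingredients are off-the-shelf: the LPT approximation guarantee is a standard one-page argument, and the structural bounds on $W_{\text{tot}}$ and $w_{\max}$ are immediate from definitions plus the already-proved Theorem~\ref{theo:ourGsample}. The only subtle point to be careful about is to explicitly tie the event ``every CC has size $O(\log n/\epsilon^2)$'' to the high-probability clause in the statement, and to note that the bound holds simultaneously for a single batch $G_u^i$ (a union bound over all $n_b$ batches would give the aggregate version used later, but is not needed for this per-batch lemma).
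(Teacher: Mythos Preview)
Your approach is exactly the paper's: compute the total weight $W_{\text{tot}}=O(E_u^i\kappa)$ from the degree sum, bound $w_{\max}=O(\Delta_L\log n\cdot\kappa)$ via Theorem~\ref{theo:ourGsample}, and then invoke the greedy/LPT allocation of Algorithm~\ref{alg:greedyalloc}. The paper states the lemma with essentially no further argument beyond those two observations and a pointer to the $4/3$-approximation guarantee, so you have recovered its reasoning.

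There is, however, one logical slip in your write-up that you should fix before this becomes a proof. You write $L^*\le\tfrac{4}{3}\,\mathrm{OPT}$ and $\mathrm{OPT}\ge\max\{w_{\max},W_{\text{tot}}/P\}$, and conclude $L^*=O(\max\{\cdots\})$. But that inference goes the wrong way: a \emph{lower} bound on $\mathrm{OPT}$ combined with $L^*\le\tfrac{4}{3}\,\mathrm{OPT}$ gives you nothing about $L^*$ in terms of $\max\{\cdots\}$. What you actually need is Graham's direct list-scheduling bound
\[
L^* \;\le\; \frac{W_{\text{tot}}}{P} + \Bigl(1-\tfrac{1}{P}\Bigr)\,w_{\max}
\;\le\; \frac{W_{\text{tot}}}{P} + w_{\max}
\;\le\; 2\max\!\left\{\frac{W_{\text{tot}}}{P},\,w_{\max}\right\},
\]
which holds for any list schedule (sorted or not) and yields the claimed $O(\cdot)$ immediately. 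The paper's parenthetical ``max load that is at most $4/3$ times the maximum between the max weight and the sum of weights divided by $P$'' is itself a loose paraphrase, but for the big-$O$ conclusion the simple additive Graham bound above is the correct tool. Once you replace your inequality chain with this one, the argument is complete and matches the paper.
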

If $ P = O\left(\frac{n}{\Delta\cdot  \Delta_{\text{L}}}\right) $
then the average weight will be larger than the maximum divided by a $\log n$ factor, and a near-uniform allocation of CCs according to their weights possible.
Since, we are interested in the overall running time for $n_b$ batches, we can see that the above lemma simply boils down to the following:
\begin{cor}
For any number of cores $P = O(\frac{n}{\Delta\cdot \Delta_{\text{L}}} )$, computing the stochastic updates of the allocated connected component for all sampled graphs (i.e., batches) $G_u^1, \ldots, G_u^{n_b}$ can be performed in time
$\mathcal{O}(\frac{E\log^2 n}{P} \cdot \kappa )$.
\end{cor}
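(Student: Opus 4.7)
} The plan is to stitch together the three lemmas already established in Section \ref{sec:cyc}: the shattering guarantee from Theorem \ref{theo:ourGsample}, the CC-computation cost lemma, and the allocation lemma. The overall running time of Algorithm \ref{alg:SU} decomposes into (i) the time to sample and compute CCs on the $n_b$ sampled subgraphs $G_u^1,\dots,G_u^{n_b}$, and (ii) the time to perform the SU on the CCs once they are allocated across cores. I plan to bound these two contributions separately and then sum them.

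First I would fix the parameters: with $T=cn$ and $B=(1-\epsilon)\frac{n}{\Delta}$, the number of batches is $n_b = T/B = \frac{c}{1-\epsilon}\,\Delta$, i.e.\ $n_b=\Theta(\Delta)$. By Theorem \ref{theo:ourGsample}, each sampled subgraph $G_u^i$ has all connected components of size at most $O(\log n/\epsilon^2)$ with probability at least $1-n^{-\zeta}$ for a chosen constant $\zeta>1$; a union bound over the $n_b=O(\Delta)\le O(n)$ batches preserves a high-probability statement. Under the assumption $\Delta_L/\overline{\Delta}_L\le\sqrt{n}$, the Hoeffding-style concentration argument of Appendix \ref{sec:cc} shows that the total number of induced edges across all batches, $Z=\sum_i E_u^i$, is $O(E_u\log n)$ with high probability. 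This is the key ``average-over-batches'' bound that will eventually produce one of the two $\log n$ factors.

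Next, given $P=O(n/(\Delta\Delta_L))$, the CC-computation lemma (Appendix \ref{sec:cc}) gives a total parallel cost of $O(E_u\log^2 n / P)$ for processing all $n_b$ sampled subgraphs. Independently, the allocation lemma (Appendix \ref{sec:allocation}) shows that once CCs are formed, the greedy load-balancing scheme of Algorithm \ref{alg:greedyalloc} assigns work to cores so that the per-batch maximum load is $O(\max\{E_u^i/P,\Delta_L\log n\}\cdot\kappa)$; summing over the $n_b$ batches and using the high-probability bound on $\sum_i E_u^i$ together with $P=O(n/(\Delta\Delta_L))$ (which ensures the average term dominates up to a $\log n$ factor) yields a total SU cost of $O(E_u\kappa\log^2 n/P)$. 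Adding the two contributions, the $\kappa$-weighted term dominates (since each edge is read at least once during an update), giving the claimed $\mathcal{O}(E_u\kappa\log^2 n/P)$ total runtime.

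The main obstacle I anticipate is the careful bookkeeping of the ``high probability'' qualifier across all three probabilistic ingredients simultaneously: the shattering bound (one bad event per batch), the concentration of $Z$ around $cE_u$, and the implicit randomness in how edges of $G_u$ land across the $n_b$ batches that drives the allocation argument. I would handle this by choosing the constants in each underlying tail bound (in particular, the $\zeta$ in Theorem \ref{theo:ourGsample} and the deviation parameter in the Hoeffding bound on $Z$) large enough that a single union bound over at most $O(n)$ bad events still leaves failure probability $n^{-\Omega(\log n)}$. A secondary subtlety is verifying that the regime $P=O(n/(\Delta\Delta_L))$ really does place us in the ``not-too-many cores'' branch of Appendix \ref{sec:cc} for each batch individually, so that the uniform allocation of CCs yields the claimed per-batch cost rather than the worst-case $\Delta_L\log n$ term dominating; this reduces to checking $E_u^i/P \gtrsim \Delta_L\log n$ on average, which follows from $\overline{\Delta}_L\ge \Delta_L/\sqrt{n}$ and the choice of $P$.
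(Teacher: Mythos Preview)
Your proposal is labeled and structured as a proof of Theorem~\ref{theo:main}, but the statement you were asked to prove is only the Corollary in Appendix~\ref{sec:allocation}, which concerns the stochastic-update cost alone (your part (ii)), not the connected-components cost (your part (i)) or their combination. The paper's argument for this Corollary is immediate from the preceding per-batch allocation lemma: with $P=O(n/(\Delta\Delta_L))$ the term $E_u^i/P$ dominates $\Delta_L\log n$ up to a $\log n$ factor, and summing the per-batch bound $O(\max\{E_u^i/P,\Delta_L\log n\}\cdot\kappa)$ over the $n_b$ batches together with the concentration bound $\sum_i E_u^i=O(E_u\log n)$ from Appendix~\ref{sec:cc} gives $O(E_u\kappa\log^2 n/P)$. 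This is exactly what you wrote in part (ii), so on the portion that actually addresses the Corollary your approach coincides with the paper's.

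Everything else in your proposal---the CC-cost bound, the union bound over shattering events across batches, and the final addition of the two contributions---is material for Theorem~\ref{theo:main}, which the paper treats by invoking this Corollary and its CC-cost counterpart (the Corollary at the end of Appendix~\ref{sec:cc}) as separate black boxes. Your write-up is therefore not incorrect, just over-scoped: for the Corollary itself you could keep only the allocation-lemma summation, the edge-count concentration, and the check that $P=O(n/(\Delta\Delta_L))$ puts you in the regime where the average-load term controls, and discard the rest.
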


\section{Robustness against High-degree Outliers}
\label{app:outliers}
Here, we discuss how \cyc{} can guarantee nearly linear speedups when there is a sublinear $O(n^\delta)$ number of high-conflict updates, as long as the remaining updates have small degree.

Assume that our conflict graph $G_c$ defined between the $n$ update functions has a very high maximum degree $\Delta_o$.
However, consider the case where there are only $O(n^\delta)$ nodes that are of that high-degree, while the rest of the vertices have degree much smaller (on the induced subgraph by the latter vertices), say $\Delta$.
According to our main analysis, our prescribed batch sizes cannot be greater than $B=(1-\epsilon)\frac{(1-\epsilon)n}{\Delta_o}$. However, if say $\Delta_o = \Theta(n)$, then that would imply that $B=O(1)$, hence there is not room for parallelization by \cyc{}.
What we will show, is that by sampling according to $B=(1-\epsilon)\frac{n-O(n^\delta)}{\Delta}$, we can on average expect a parallelization that is similar to the case where the outliers are not present in the conflict graph. For a toy example see Figure~\ref{fig:outliers}.

 \begin{figure}[h] 
\centerline{ \includegraphics[width=0.37\columnwidth]{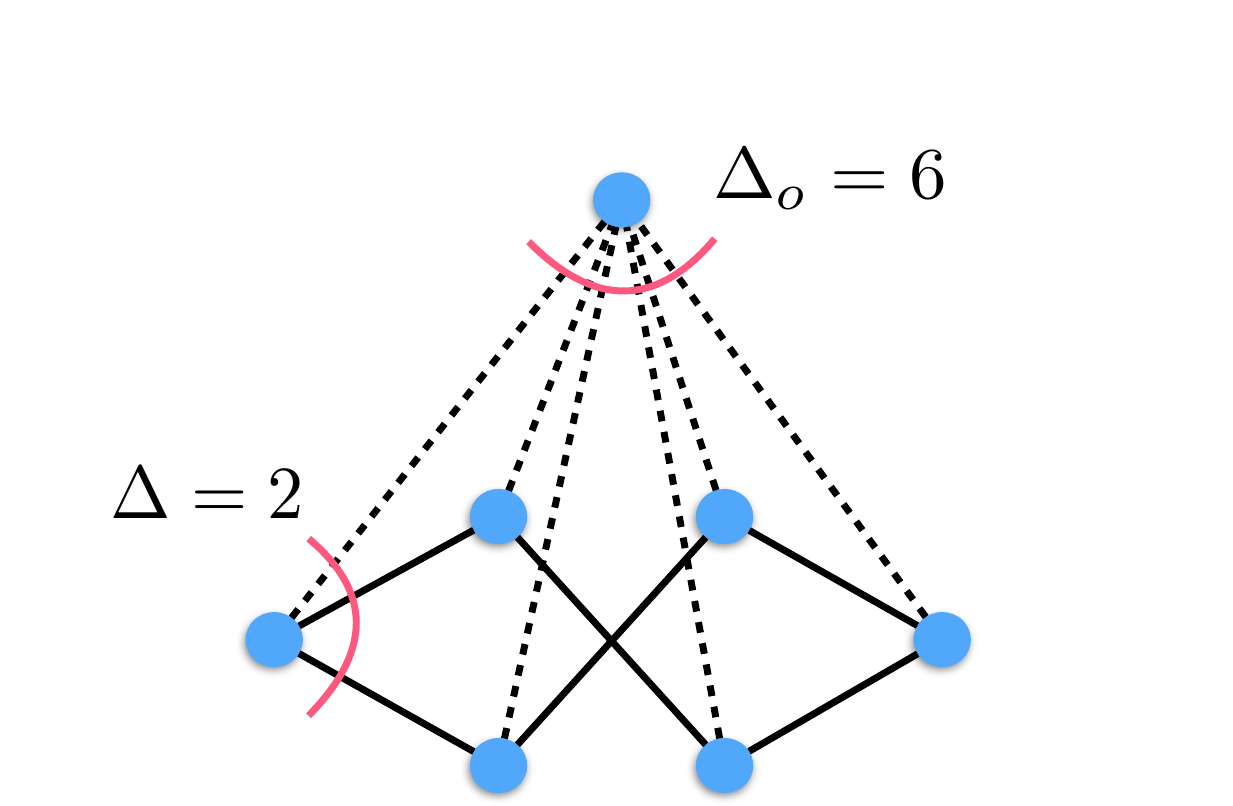}}
\caption{{\small 
The above conflict graph has a vertex with high degree (i.e., $\Delta_o = 6$), and the remaining of the graph has maximum induced degree $\Delta=2$.
In this toy case, when we sample roughly $\frac{n-1}{\Delta} = 3$ vertices, more often than not, the large degree vertex will not be part of the sampled batch.
This implies that when parallelizing with \cyc{} these cases will be as much parallelizable as if the high degree vertex was not part of the graph.
Each time we happen to sample a batch that includes the max. degree vertex, then essentially we lose all flexibility to parallelize, and we have to run the serial algorithm.
What we establish rigorously is that ``on average" the parallelization will be as good as one would hope for even in the case where the outliers are not present.}}
\label{fig:outliers}
\end{figure}

Our main result for the outlier case follows:

\begin{lem}
Let us assume that there are $O(n^\delta)$ outlier vertices in the original conflict graph $G$ with degree at most $\Delta_o$, and let the remaining vertices have degree (induced on the remaining graph) at most $\Delta$.
Let the induced update-variable graph on these low degree vertices abide to the same graph assumptions as those of Theorem \ref{theo:main}.
Moreover, let the batch size be bounded as
$$B   \le \min\left\{(1-\epsilon)\frac{n-O(n^\delta)}{\Delta},\;\;  O\left(\frac{n^{1-\delta}}{P}\right) \right\} \nonumber. $$
Then, the expected runtime of \cyc{} will be
$ O\left( \frac{E_u \cdot \kappa}{P}\cdot  \log^2 n \right).$ 
\label{lem:outliers}
\end{lem}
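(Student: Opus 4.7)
The plan is to split each sampled batch into two cases --- ``good'' batches containing no outlier and ``bad'' batches containing at least one outlier --- and to control the expected runtime in each case separately, then combine via the law of total expectation. The high-level identity is
\[
\mathbb{E}[T_{\text{total}}] \;=\; \sum_{i=1}^{n_b} \Big( \Pr(G_i)\,\mathbb{E}[T_i \mid G_i] \,+\, \Pr(B_i)\,\mathbb{E}[T_i \mid B_i] \Big),
\]
where $G_i, B_i$ denote the good/bad events for batch $i$, and I want the right-hand side to match the bound of Theorem \ref{theo:main} for the non-outlier subgraph.

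First, by a union bound over the $B$ sampled indices and the $O(n^\delta)$ outlier vertices, together with the constraint $B = O(n^{1-\delta}/P)$, a single batch is bad with probability $\Pr(B_i) \le O(B n^{\delta-1}) = O(1/P)$. Second, on good batches I would directly invoke Theorem \ref{theo:main} applied to the conflict subgraph induced on the non-outlier vertices: this subgraph has $n - O(n^\delta)$ vertices, max conflict degree $\Delta$, and by assumption satisfies $\Delta_L/\overline{\Delta}_L \le \sqrt{n}$. The constraint $B \le (1-\epsilon)(n - O(n^\delta))/\Delta$ is exactly the shattering threshold required by Theorem \ref{theo:ourGsample} for this subgraph, so a good batch breaks into CCs of size $O(\log n/\epsilon^2)$ and costs $O(E_u^i \kappa \log n / P)$; summed across $n_b$ batches this yields the desired $O(E_u \kappa \log^2 n / P)$, exactly as in the main theorem.

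Third, on bad batches I would use a serial-fallback bound: since a giant component containing the outlier may force most updates onto one core, $T_i \le O(E_u^i \kappa)$. Writing $E_u^i = \sum_{k=1}^B d_{L,j_k}$ and decomposing each draw $j_k$ into the outlier and non-outlier cases, one obtains $\mathbb{E}[d_{L,j_k}\mathbb{1}_{B_i}] = O(n^{\delta-1}\Delta_L + \overline{\Delta}_L/P)$. Summing over $k=1,\ldots,B$ and over $n_b = O(n/B)$ batches, and using $B = O(n^{1-\delta}/P)$, the bad-case contribution telescopes to $O(E_u\kappa/P)$, which is absorbed into the leading $O(E_u \kappa \log^2 n / P)$ term.

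The main obstacle is precisely this last bookkeeping step. A naive per-batch worst-case bound $E_u^i \le B\Delta_L$ multiplied by $\Pr(B_i) = O(1/P)$ and summed over $n_b$ batches gives a total of order $n\Delta_L\kappa/P$, which under the degree-ratio assumption $\Delta_L/\overline{\Delta}_L \le \sqrt n$ can be as large as $\sqrt{n}\cdot E_u\kappa/P$ --- too large to merge into the target. The key delicate point is therefore to show that, in \emph{expectation} rather than worst case, the extra edge mass introduced by conditioning on the event $B_i$ is only $O(\Delta_L)$ (coming from the typically single outlier in the batch) beyond the usual $O(B\overline{\Delta}_L)$ average; it is this per-draw decomposition, together with the two batch-size constraints acting in tandem, that makes the serial fallback cheap enough to be absorbed into the good-case rate.
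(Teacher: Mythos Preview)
Your decomposition into good and bad batches is exactly the paper's approach, and your union-bound computation $\Pr(B_i)\le O(Bn^{\delta-1})=O(1/P)$ matches the paper's bound $\Pr\{\text{batch contains outliers}\}\le O(B/n^{1-\delta})$ obtained from $(1-n^{\delta-1})^B$.

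The difference lies in how the bad-case contribution is controlled. The paper does \emph{not} do a per-draw decomposition of $\mathbb{E}[d_{L,j_k}\mathbb{1}_{B_i}]$. Instead it directly bounds the \emph{total} serial work across all batches, $w_o=\sum_i w_o^i=O(E_u\kappa\log^2 n)$ --- essentially the same concentration $\sum_i E_u^i=O(E_u\log n)$ from Appendix~\ref{sec:cc} that already underlies Theorem~\ref{theo:main} --- and then writes the expected runtime as $w_s\cdot\Pr(\text{good})+w_o\cdot\Pr(\text{bad})$. Plugging $\Pr(\text{bad})=O(B/n^{1-\delta})$ and $B=O(n^{1-\delta}/P)$ gives $w_o\cdot\Pr(\text{bad})=O(E_u\kappa\log^2 n/P)$ in one line. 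So the paper never confronts the ``obstacle'' you flag (the per-batch worst case $E_u^i\le B\Delta_L$ being too loose): it bounds the \emph{sum} of sampled edges over all batches, not a per-batch maximum.

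Your worry about correlation between $\mathbb{1}_{B_i}$ and $E_u^i$ is legitimate --- the paper's factoring $\mathbb{E}\big[\sum_i\mathbb{1}_{B_i}w_o^i\big]\le \Pr(\text{bad})\cdot w_o$ tacitly treats $w_o^i$ as if it were independent of the outlier event, which is informal. Your per-draw split (separating the at-most-$O(1)$ outlier draws, each contributing $\le\Delta_L$, from the non-outlier draws whose degrees are genuinely independent of $B_i^{-k}$) is a cleaner justification of precisely this step. In short: same skeleton, but you are supplying rigor that the paper's two-line argument leaves implicit, at the cost of heavier bookkeeping.
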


\begin{proof}
Let $w^i_s$ denote the total work required for batch $i$ if that batch contains no outlier notes, and $w^i_o$ otherwise.
It is not hard to see that 
$w_s=\sum_i w^i_s = O\left( \frac{E_u \cdot \kappa}{P}\cdot  \log^2 n \right)$
and 
$w_o=\sum_i w^i_s = O\left( E_u \cdot \kappa\cdot  \log^2 n \right)$
Hence, the expected computational effort by \cyc{} will be
\begin{align*}
w_s\cdot \Pr\{\text{a random batch contains no outliers}\} + w_o \Pr\{\text{a random batch contains outliers}\}
\end{align*}
where
\begin{equation}
\Pr\{\text{a random  batch contains no outliers}\} = \Omega\left( \left(1-\frac{1}{n^{1-\delta}}\right)^{B}\right) \ge 1-O\left(\frac{B}{n^{1-\delta}}\right)
\end{equation}
Hence the expected running time will be proportional to 
$ O\left( \frac{E_u \cdot \kappa}{P}\cdot  \log^2 n \right)$, if 
$O( \frac{E_u \cdot \kappa}{P}\cdot  \log^2 n ) = O( E_u \cdot \kappa\cdot  \log^2 n\cdot \frac{B}{n^{1-\delta}}),$
which holds when
$B= O\left(\frac{n^{1-\delta}}{P}\right).$
\end{proof}
\section{Complete Experiment Results}
\label{app:exptresults}
In this section, we present the remaining experimental results that were left out for brevity from our main experimental section.
In Figures~\ref{appfig:expt_converge} and \ref{appfig:expt_convergegd}, we show the convergence behaviour of our algorithms, as a function of the overall time, and then as a function of the time that it takes to perform only the stochastic updates (i.e., in Fig.~\ref{appfig:expt_convergegd} we factor out the graph partitioning, and allocation time).
In Figure~\ref{appfig:expt_speedups}, we provide the complete set of speedup results for all algorithms and data sets we tested, in terms of the number of cores.
In Figure~\ref{appfig:expt_speedupsgd}, we provide the speedups in terms of the the computation of the stochastic updates, as a function of the number of cores.
Then, in Figures~\ref{appfig:expt_url_converge}\;--\;\ref{appfig:expt_url_speedupsgd}, we present the convergence, and speedups of the overal computation, and then of the stochastic updates part, for our dense feature URL data set.
Finally, in Figure \ref{appfig:expt_hdiverge} we show the divergent behavior of \HW{} for the least square experiments with SAGA, on  the NH2010 and  DBLP datasets.

Our overall observations here are similar to the main text.
One additional note to make is that when we take a closer look to the figures relative to the times and speedups of the stochastic updates part of \cyc{} (i.e., when we factor out the time of the graph partitioning part), we see that \cyc{} is able to perform stochastic updates faster than \hog{} due to its superior spatial and temporal access locality patterns.
If the coordination overheads for \cyc{} are excluded, we are able to improve speedups, in some cases by up to 20-70\% (Table \ref{tab:speedups}).
This suggests that by further optimizing the computation of connected components, we can hope for better overall speedups of \cyc{}.

\begin{table*}[h]
\vskip 0.15in
\begin{center}
\begin{small}
\begin{tabular}{|l|c|c|c|c|c|c|c|}
\hline
& Mat. Comp. & Mat. comp & Word2Vec & Graph Eig. & Graph Eig. & Least Squares & Least Squares\\
& SGD & $\ell_2$-SGD & SGD & SVRG & SVRG & SAGA & SAGA \\
& MovieLens & MovieLens & EN-Wiki & NH2010 & DBLP & NH2010 & DBLP\\
\hline\hline
Overall &&& &&&&
\\Speedup &    8.8010 &    7.6876 &   10.4299 &    2.9134 &    4.7927 &    4.4790 & 4.6405 \\\hline
Speedup of &&& &&&&\\
Updates &    9.0453 &    7.9226 &   11.4610 &    3.4802 &    5.5533 &    4.6998 & 8.1133 \\
\hline\hline
\% change & 2.7759\% &    3.0580\% &    9.8866\% &   19.4551\% &   15.8718\% &    4.9285\% & 74.8408\%\\
\hline
\end{tabular}
\end{small}
\end{center}
\caption{
  Speedups of \cyc{} at 16 threads.
  Two versions speedups are given for each problem: (1) with the overall running time, including the coordination overheads, and (2) using only the running time for stochastic updates.
  Speedups using only stochastic updates are up to 20\% better, which suggests we could potentially observe larger speedups by further optimizing the computation of connected components.}
  \label{tab:speedups}
\vskip -0.1in
\end{table*}

\newpage
\begin{figure}[H]
\hrule
\vspace{0.1cm}
\begin{center}
\includegraphics[width=0.5\textwidth]{images/matplotlib_plots/time_loss_legend_48_3}\\
    \subfigure[LS, NH2010, SAGA]{\label{appfig:ls_nh2010_may_saga:converge} \includegraphics[width=0.24\textwidth]{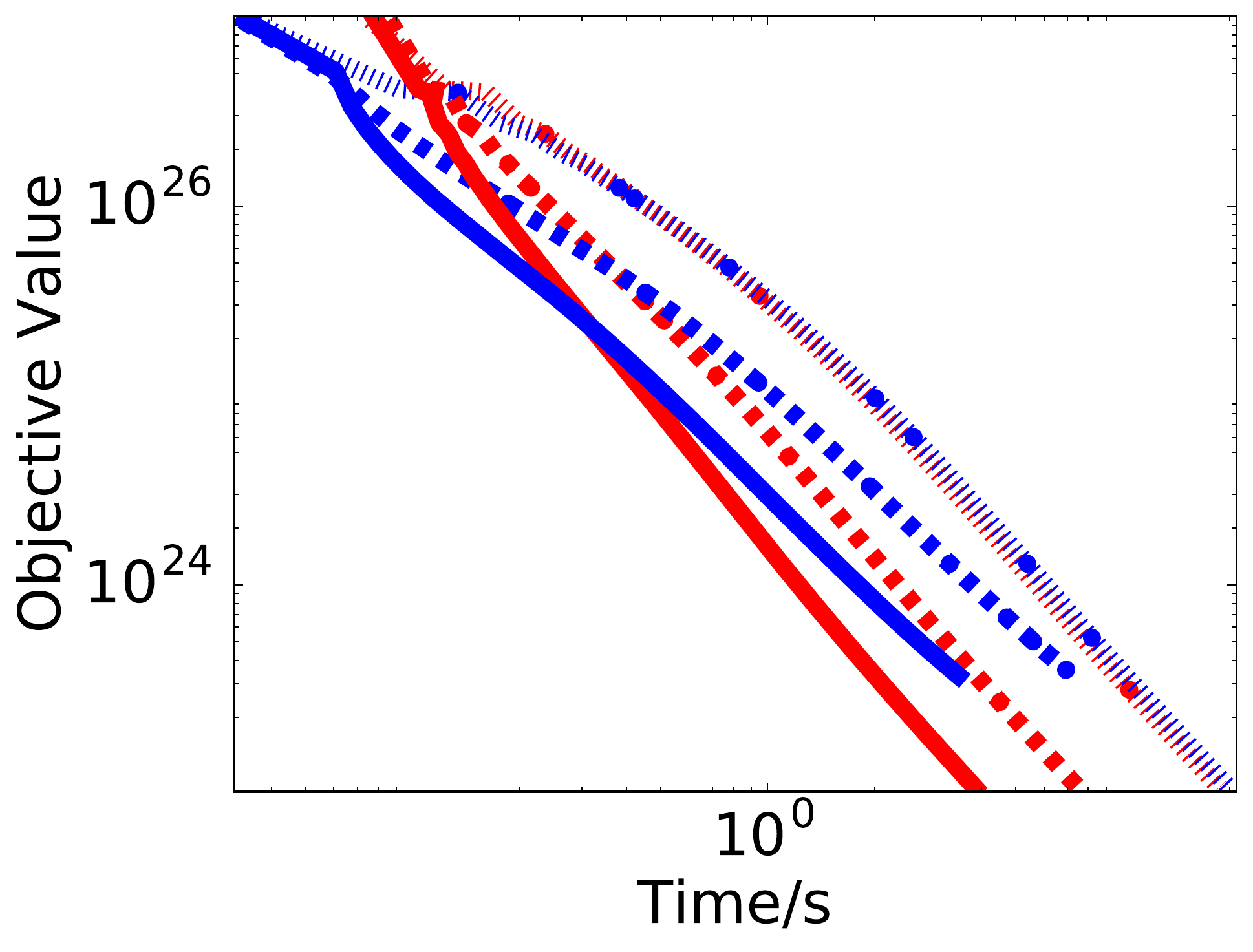}}
    \subfigure[LS, DBLP, SAGA]{\label{appfig:ls_DBLP_may_saga:converge} \includegraphics[width=0.24\textwidth]{images/matplotlib_plots/cyc_least_squares_dblp__saga_100_10000_0_converge}}
    \subfigure[Graph Eig., NH2010, SVRG]{\label{appfig:ge_nh2010_may_svrg:converge} \includegraphics[width=0.24\textwidth]{images/matplotlib_plots/cyc_matrix_inverse_nh2010__svrg_100_1000_0_converge}}  
    \subfigure[Graph Eig., DBLP, SVRG]{\label{appfig:ge_dblp_may_svrg:converge} \includegraphics[width=0.24\textwidth]{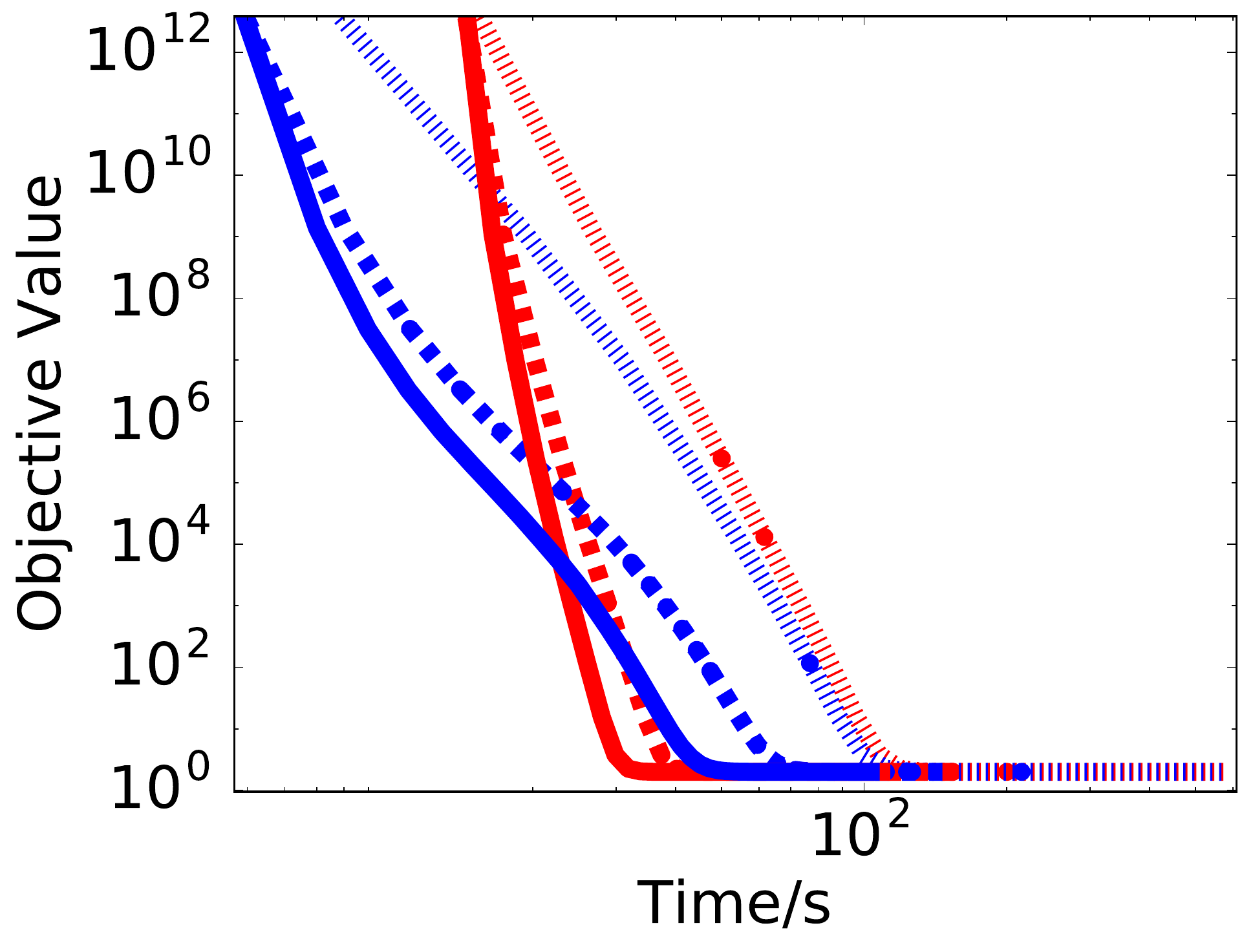}}\\

    \subfigure[Mat. Comp., 10M, $\ell_2$-SGD]{\label{appfig:mc_10m_may_rsg:converge} \includegraphics[width=0.24\textwidth]{images/matplotlib_plots/cyc_movielens__regularize_200_5000_100_converge}}
    \subfigure[Mat. Comp., 10M, SGD]{\label{appfig:mc_10m_may_sgd:converge} \includegraphics[width=0.24\textwidth]{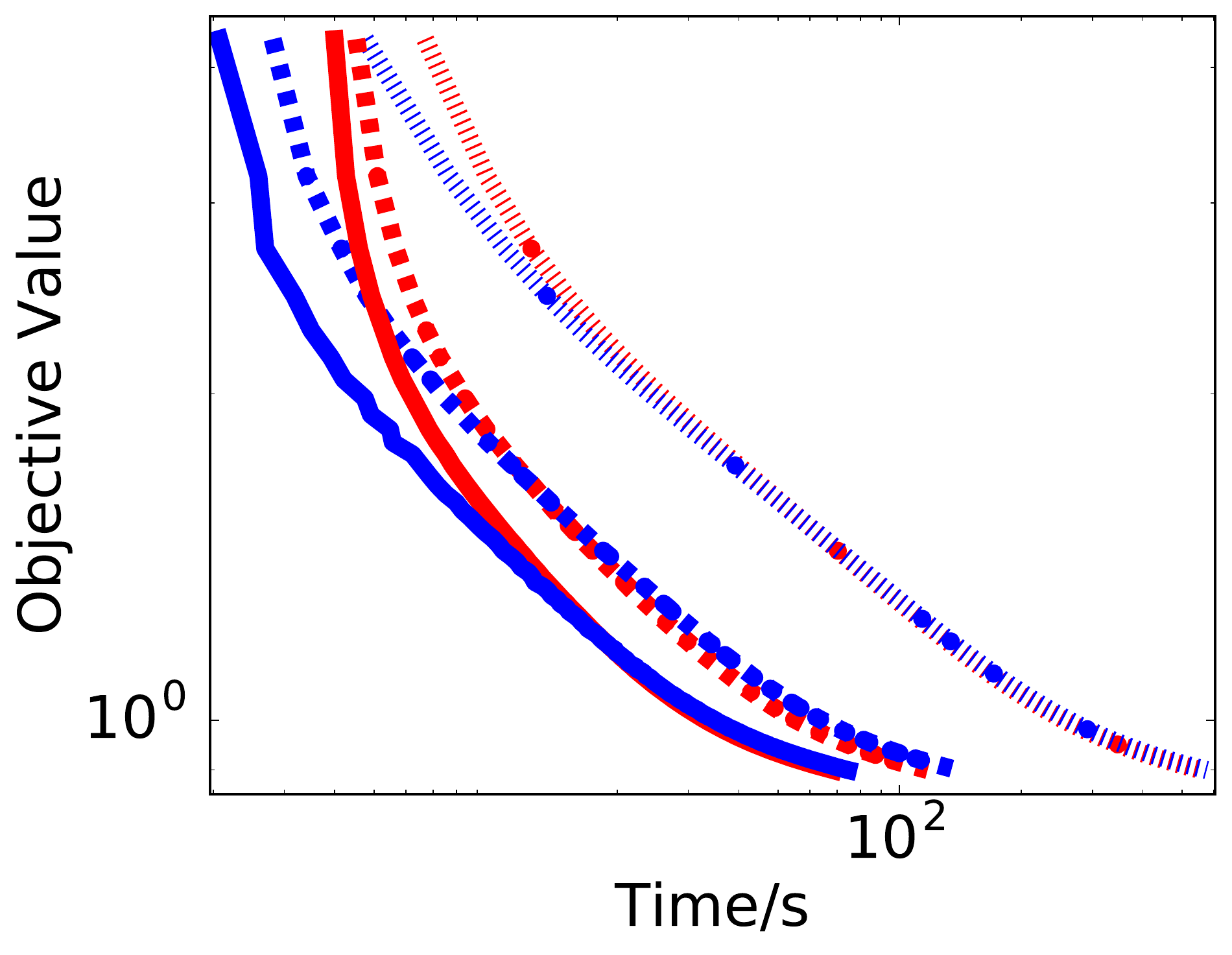}}
    \subfigure[Word2Vec, EN-Wiki, SGD]{\label{appfig:we_ewk_may_sgd:converge} \includegraphics[width=0.24\textwidth]{images/matplotlib_plots/cyc_word_embeddings_copt__sgd_200_4250_100_converge}}\\
\end{center}
      \caption{Convergence of \cyc{} and \hog{} on various problems, using 1, 8, 16 threads, in terms of overall running time.
  \cyc{} is initially slower, but ultimately reaches convergence faster than \hog{}.
  }
  \label{appfig:expt_converge}
\end{figure}

\vspace{-0.7cm}

\begin{figure}[H]
\hrule
\vspace{0.1cm}
\begin{center}
\includegraphics[width=0.5\textwidth]{images/matplotlib_plots/time_loss_legend_48_3}\\
    \subfigure[LS, NH2010, SAGA]{\label{appfig:ls_nh2010_may_saga:convergegd} \includegraphics[width=0.24\textwidth]{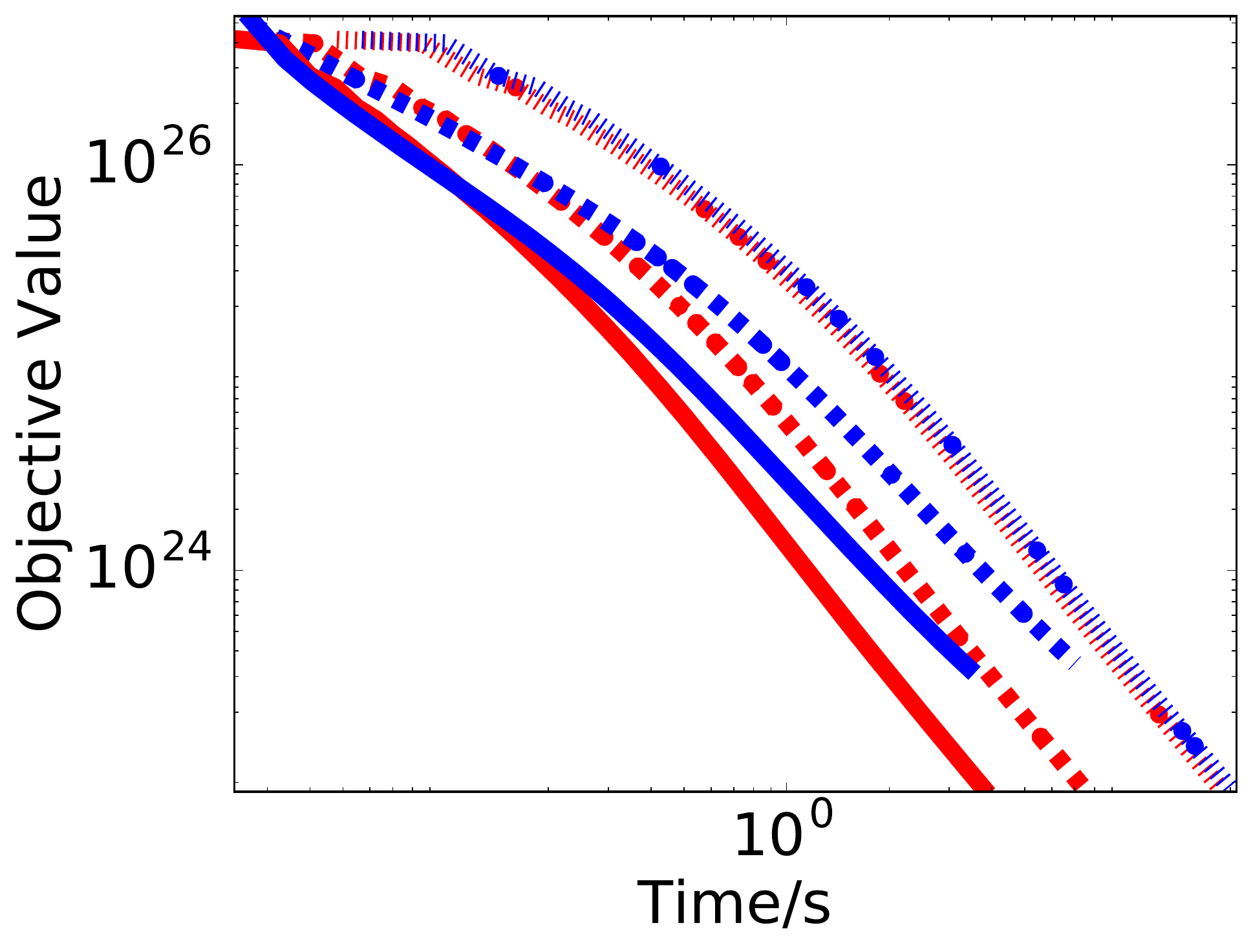}}
    \subfigure[LS, DBLP, SAGA]{\label{appfig:ls_DBLP_may_saga:convergegd} \includegraphics[width=0.24\textwidth]{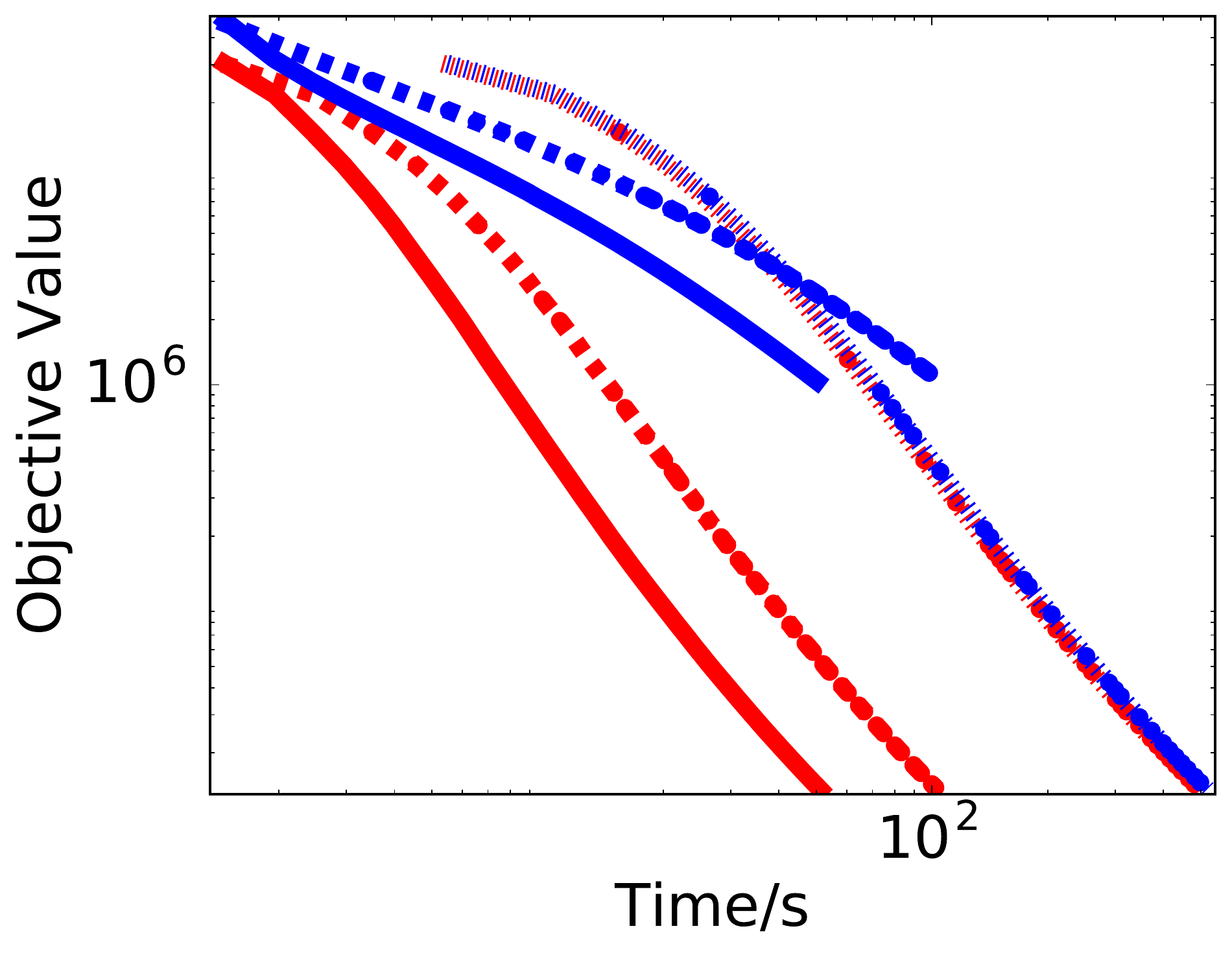}}
    \subfigure[Graph eigenvector, NH2010, SVRG]{\label{appfig:ge_nh2010_may_svrg:convergegd} \includegraphics[width=0.24\textwidth]{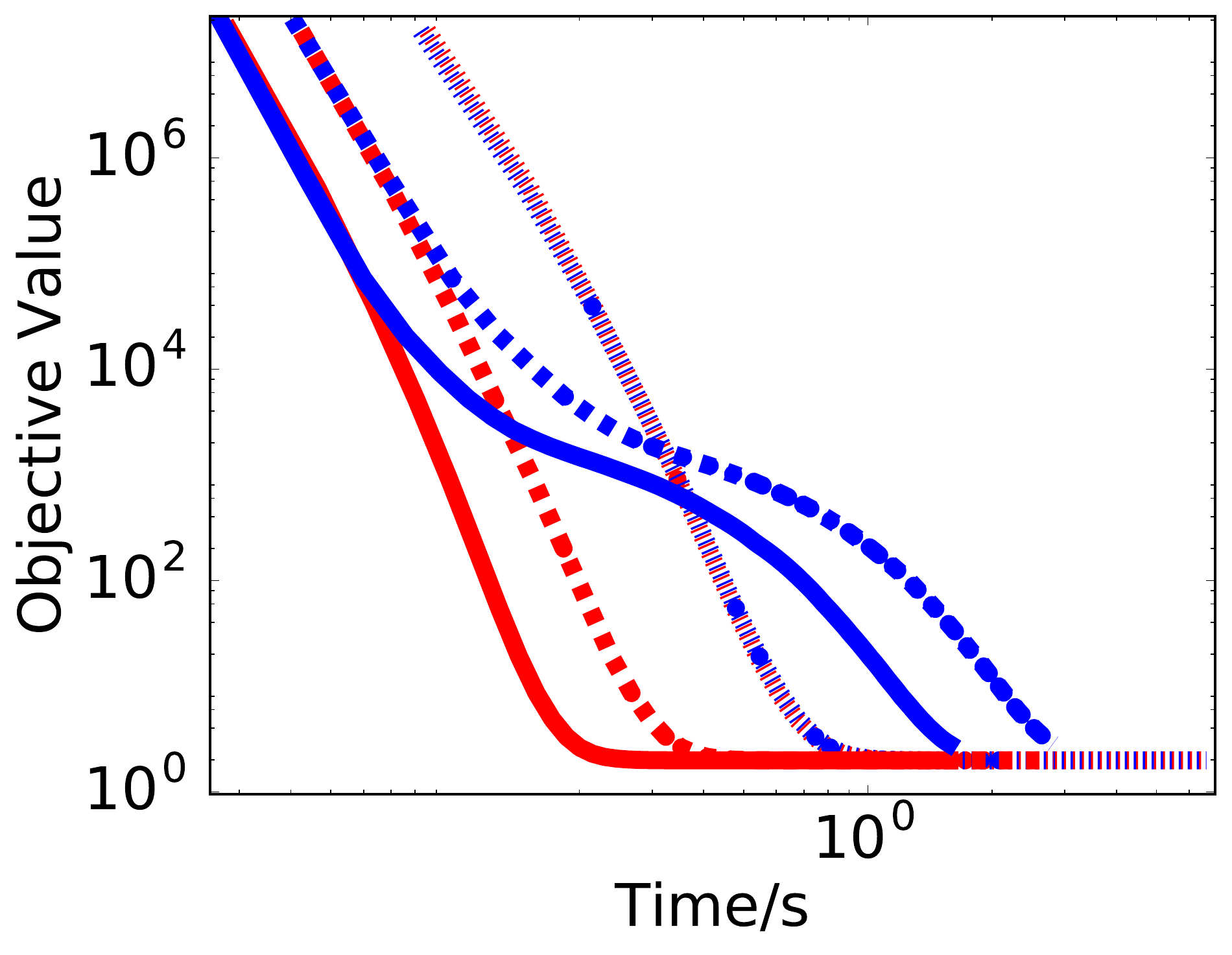}}
    \subfigure[Graph eigenvector, DBLP, SVRG]{\label{appfig:ge_dblp_may_svrg:convergegd} \includegraphics[width=0.24\textwidth]{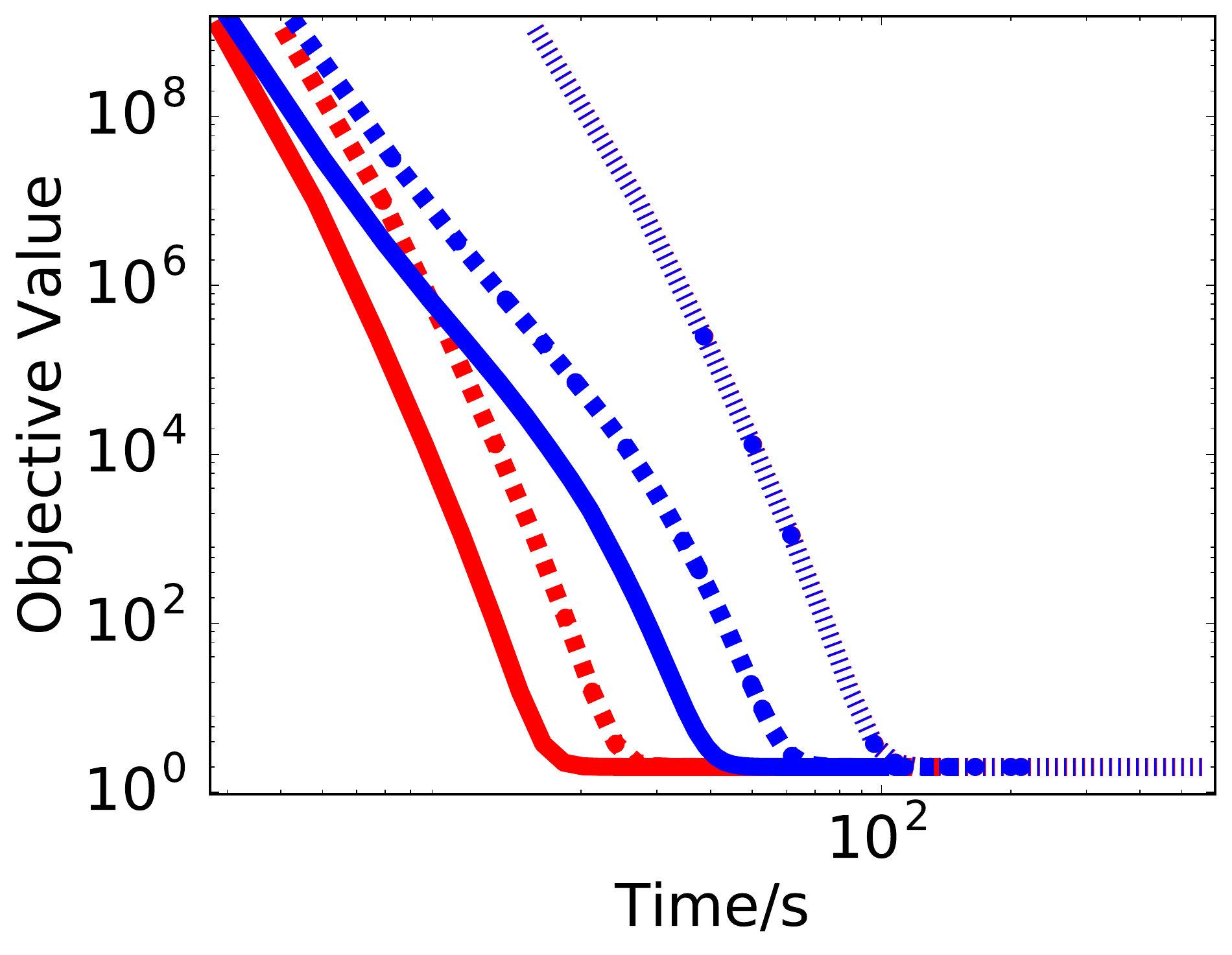}}\\

    \subfigure[Matrix completion, MovieLens 10M, weighted SGD]{\label{appfig:mc_10m_may_rsg:convergegd} \includegraphics[width=0.24\textwidth]{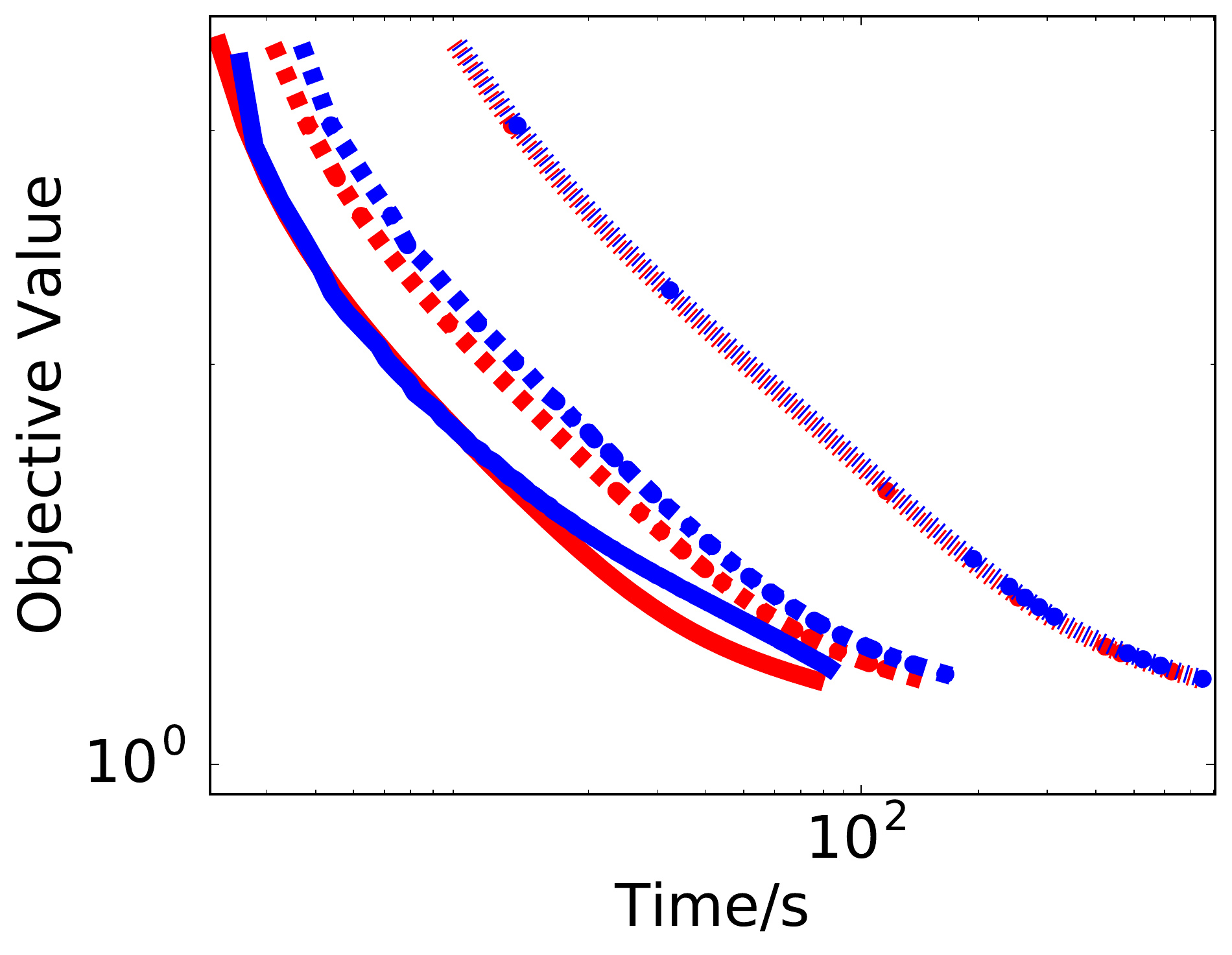}}
    \subfigure[Matrix completion, MovieLens 10M, SGD]{\label{appfig:mc_10m_may_sgd:convergegd} \includegraphics[width=0.24\textwidth]{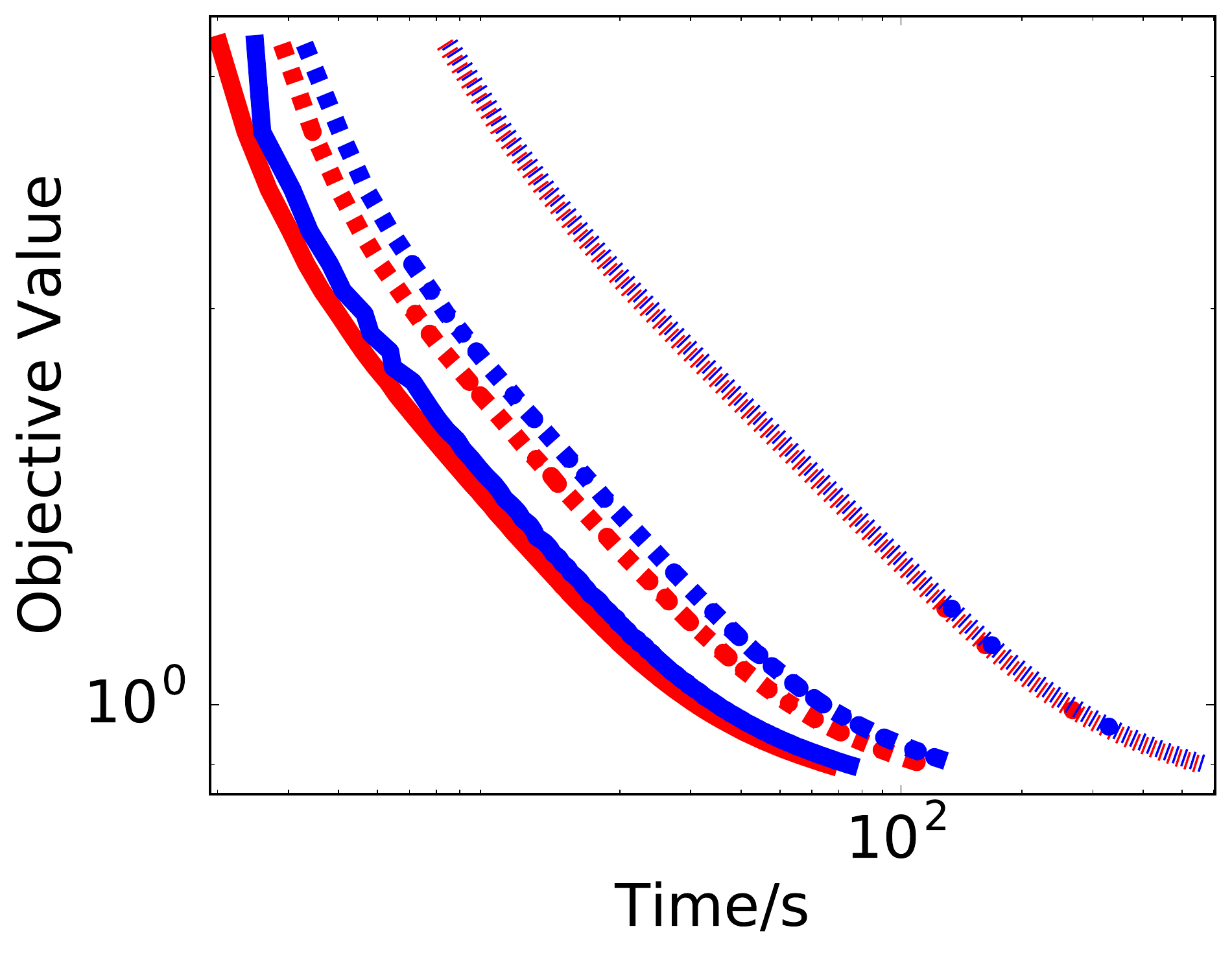}}
    \subfigure[Word embeddings, EN-Wiki, SGD]{\label{appfig:we_ewk_may_sgd:convergegd} \includegraphics[width=0.24\textwidth]{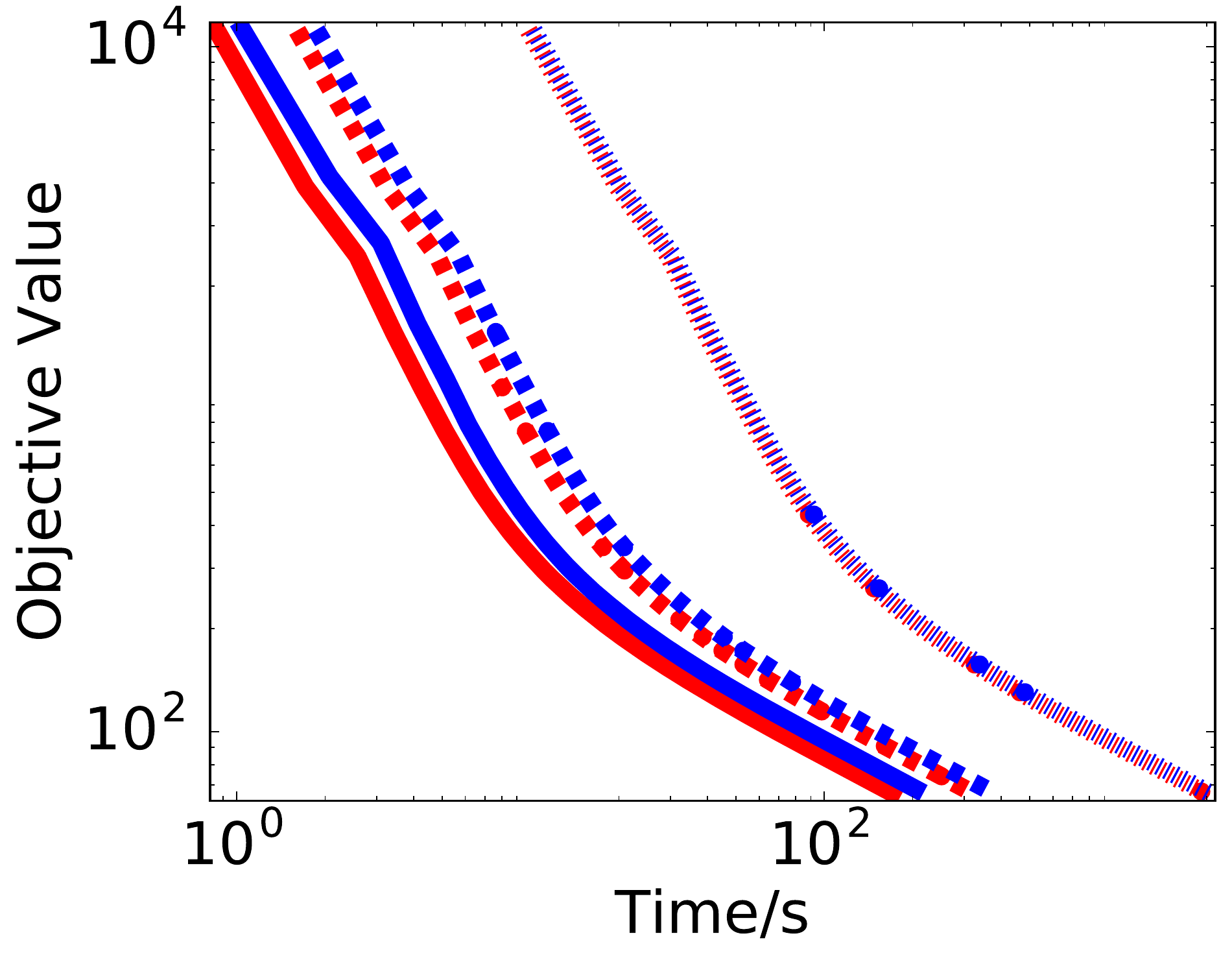}}\end{center}

  \caption{Convergence of \cyc{} and \hog{} on various problems, using 1, 8, 16 threads, in terms of running time for stochastic updates.
  }
  \label{appfig:expt_convergegd}
  \vspace{0.1cm}
  \hrule
\vspace{0.1cm}
\end{figure}

\newpage

\begin{figure}[H]
\begin{center}
  \vspace{0.1cm}
  \hrule
\vspace{0.1cm}
    \subfigure[LS, NH2010, SAGA]{\label{appfig:ls_nh2010_may_saga:speedups} \includegraphics[width=0.24\textwidth]{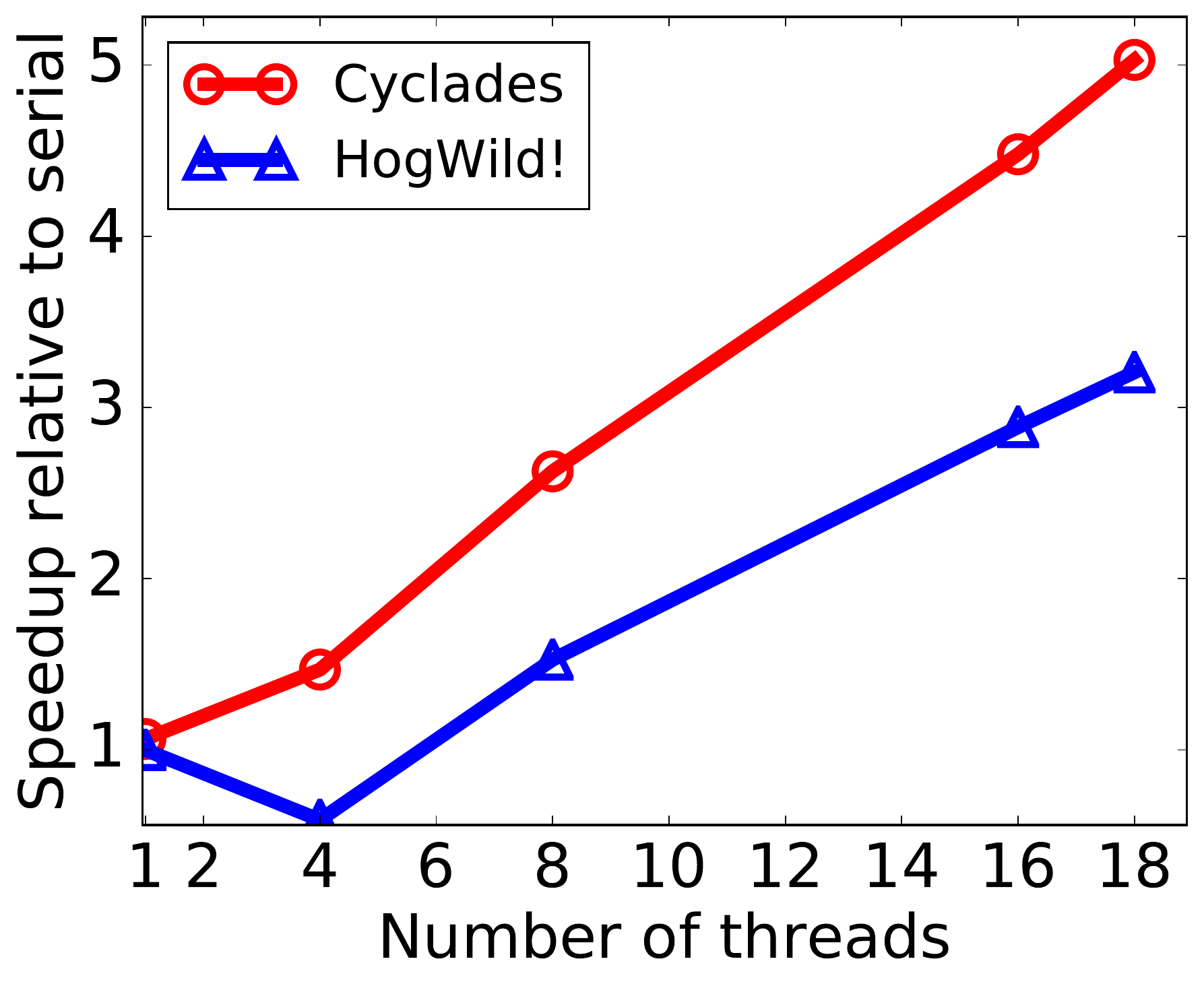}}
    \subfigure[LS, DBLP, SAGA]{\label{appfig:ls_DBLP_may_saga:speedups} \includegraphics[width=0.24\textwidth]{images/matplotlib_plots/cyc_least_squares_dblp__saga_100_10000_0_speedup}}
    \subfigure[Graph Eig., NH2010, SVRG]{\label{appfig:ge_nh2010_may_svrg:speedups} \includegraphics[width=0.24\textwidth]{images/matplotlib_plots/cyc_matrix_inverse_nh2010__svrg_100_1000_0_speedup}}
    \subfigure[Graph Eig., DBLP, SVRG]{\label{appfig:ge_dblp_may_svrg:speedups} \includegraphics[width=0.24\textwidth]{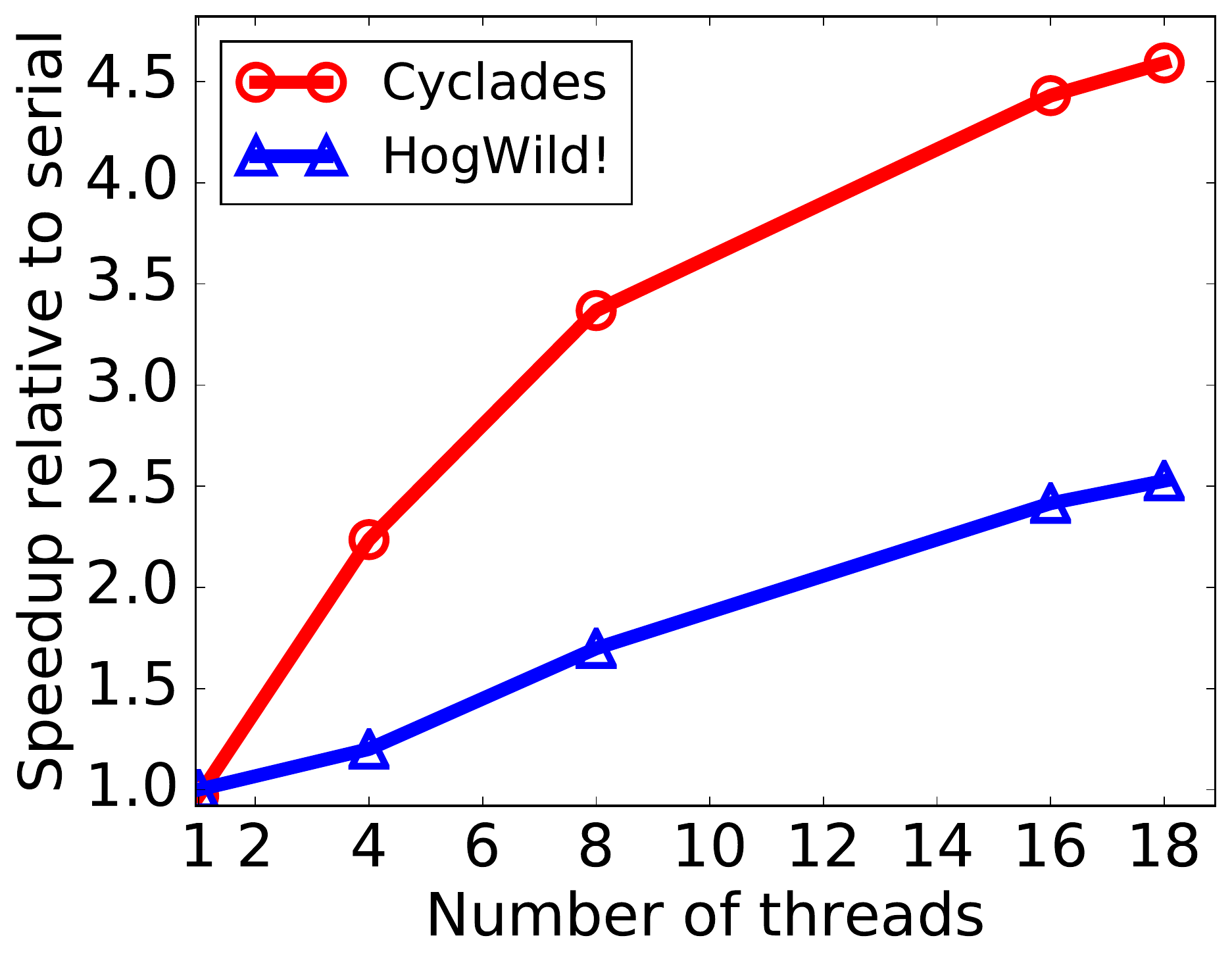}}\\

    \subfigure[Mat. Comp., 10M, $\ell_2$-SGD]{\label{appfig:mc_10m_may_rsg:speedups} \includegraphics[width=0.24\textwidth]{images/matplotlib_plots/cyc_movielens__regularize_200_5000_100_speedup}}
    \subfigure[Mat. Comp., 10M, SGD]{\label{appfig:mc_10m_may_sgd:speedups} \includegraphics[width=0.24\textwidth]{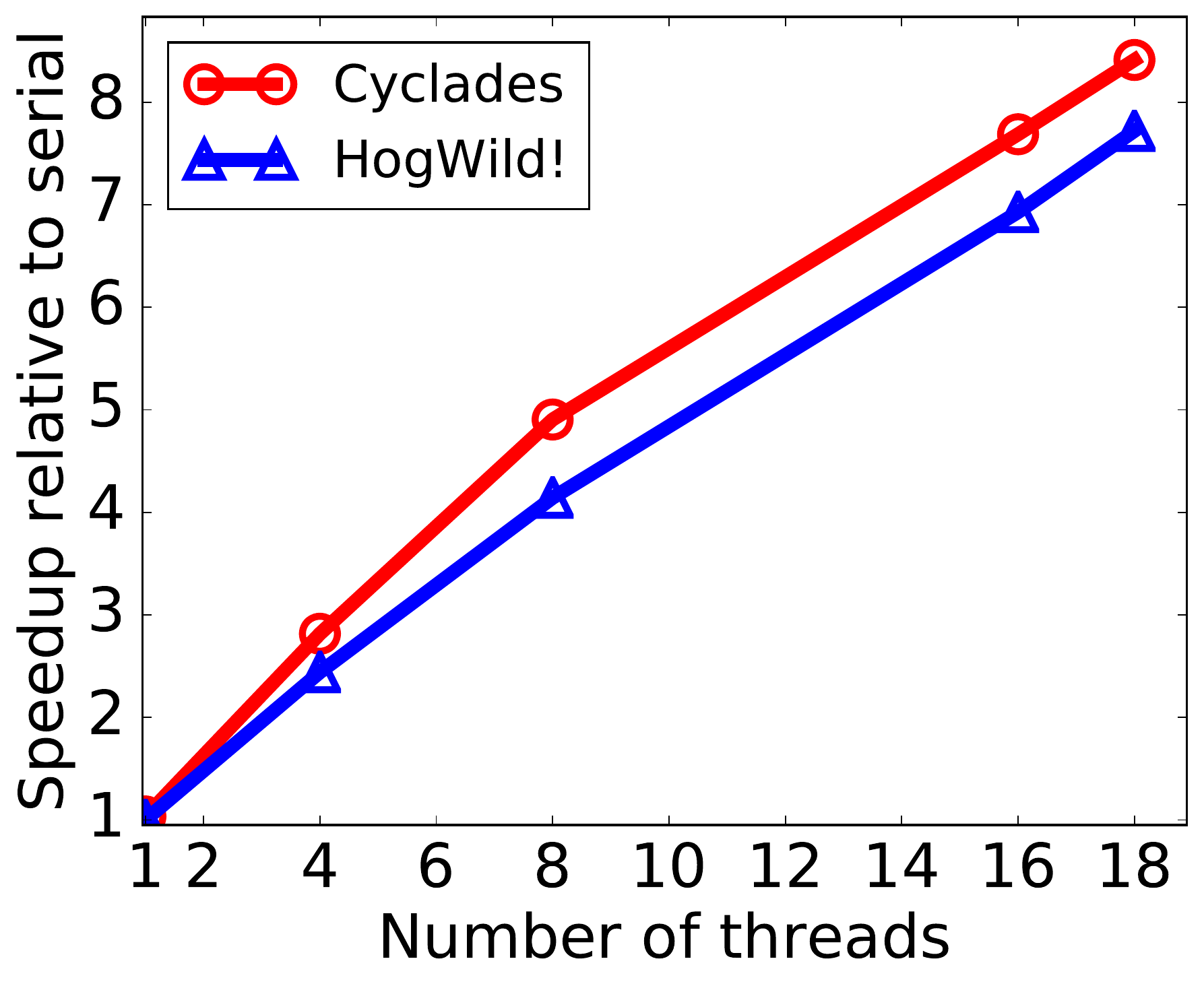}}
    \subfigure[Word2Vec, EN-Wiki, SGD]{\label{appfig:we_ewk_may_sgd:speedups} \includegraphics[width=0.24\textwidth]{images/matplotlib_plots/cyc_word_embeddings_copt__sgd_200_4250_100_speedup}}\\
\end{center}
  \caption{Speedup of \cyc{} and \hog{} on various problems, using 1, 4, 8, 16 threads, in terms of overall running time.
  On multiple threads, \cyc{} always reaches $\epsilon$ objective faster than \hog{}.
  In some cases (\ref{appfig:ls_nh2010_may_saga:speedups}, \ref{appfig:mc_10m_may_rsg:speedups}, \ref{appfig:we_ewk_may_sgd:speedups}), \cyc{} is faster than \hog{} on even 1 thread, as \cyc{} has better cache locality.
  }
  \label{appfig:expt_speedups}
  \vspace{0.1cm}
  \hrule
\vspace{0.1cm}
\end{figure}

\vspace{-0.7cm}

\begin{figure}[H]
\hrule
\vspace{0.1cm}
\begin{center}
    \subfigure[LS, NH2010, SAGA]{\label{appfig:ls_nh2010_may_saga:speedupsgd} \includegraphics[width=0.24\textwidth]{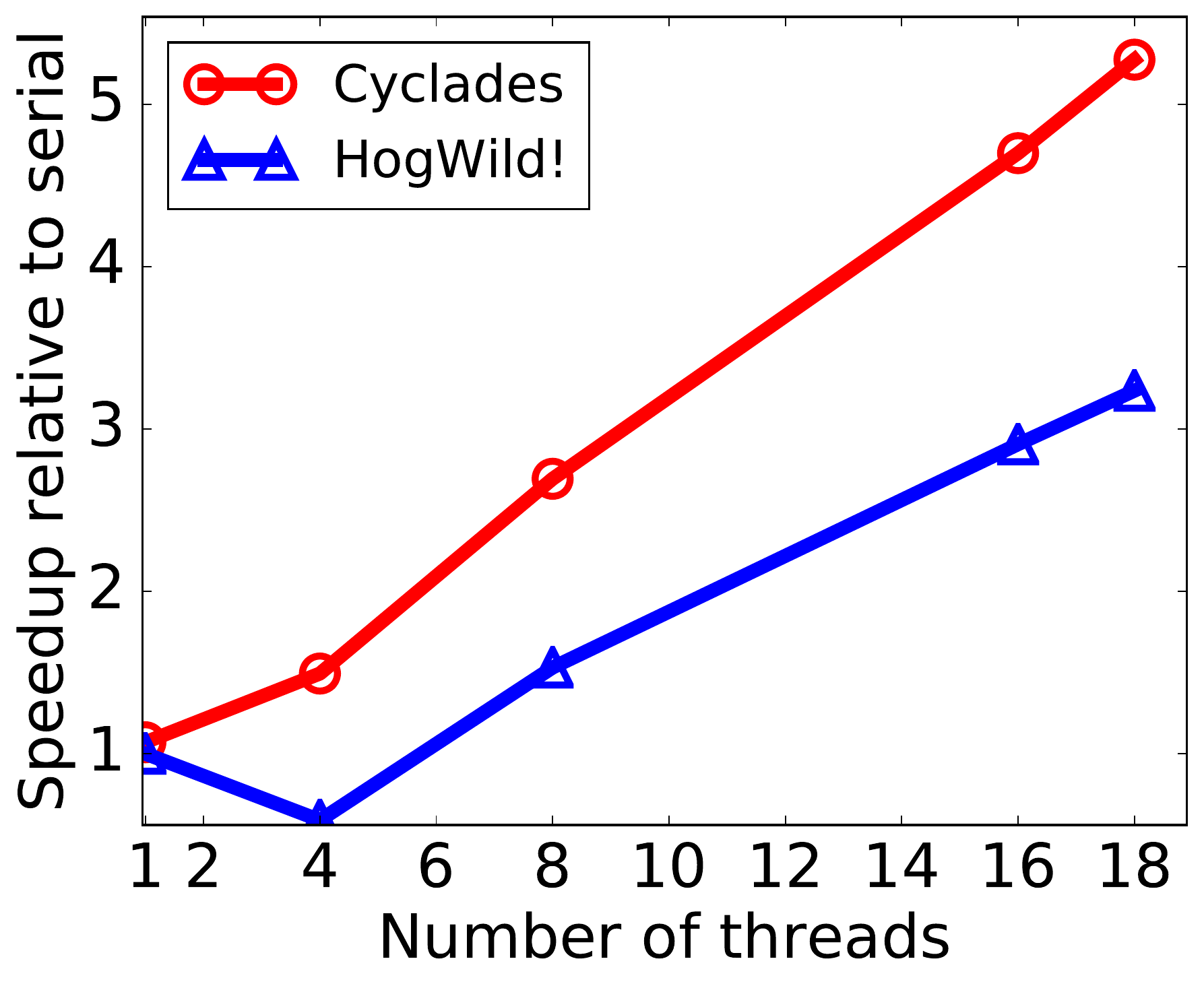}}
    \subfigure[LS, DBLP, SAGA]{\label{appfig:ls_DBLP_may_saga:speedupsgd} \includegraphics[width=0.24\textwidth]{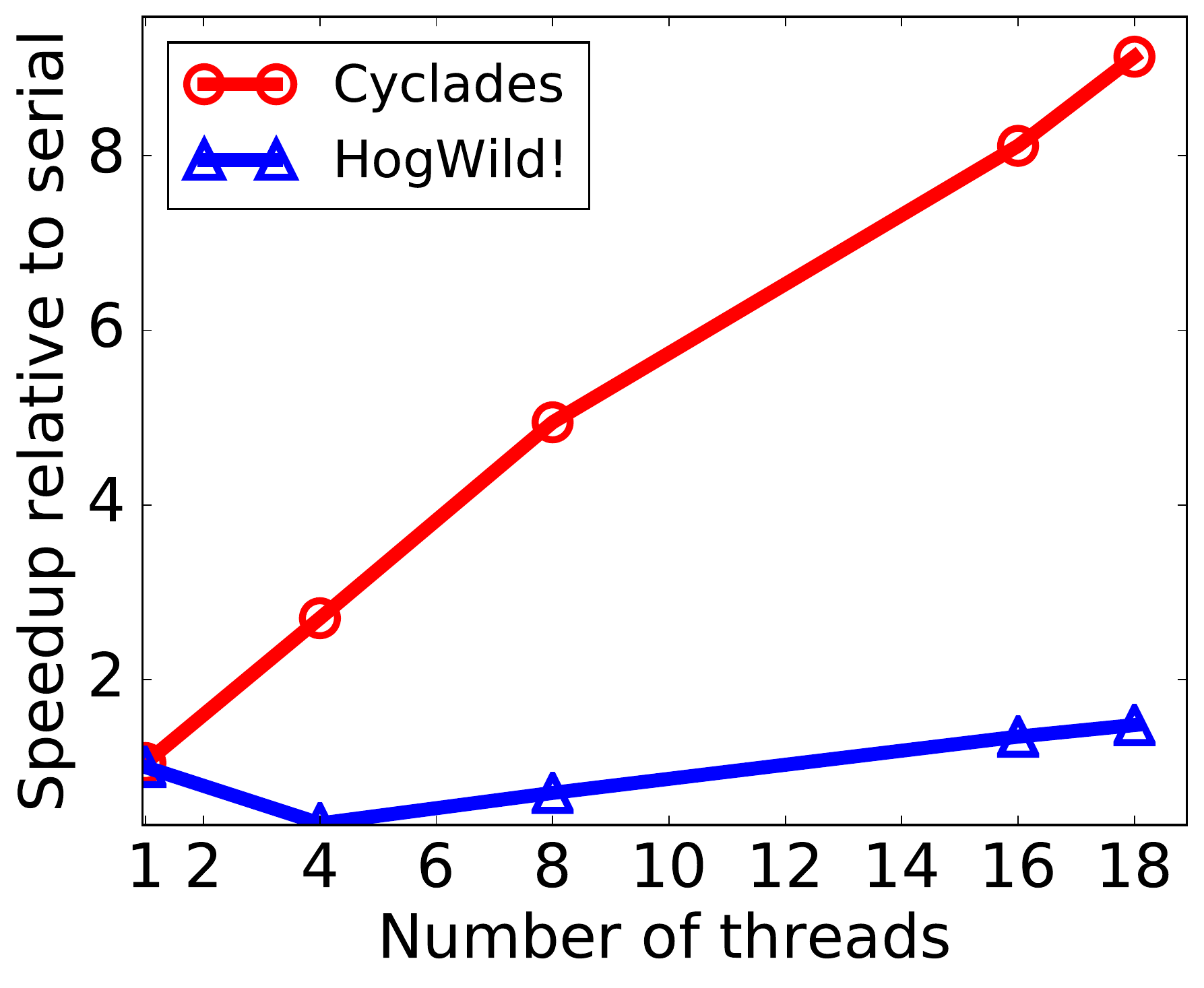}}
    \subfigure[Graph Eig., NH2010, SVRG]{\label{appfig:ge_nh2010_may_svrg:speedupsgd} \includegraphics[width=0.24\textwidth]{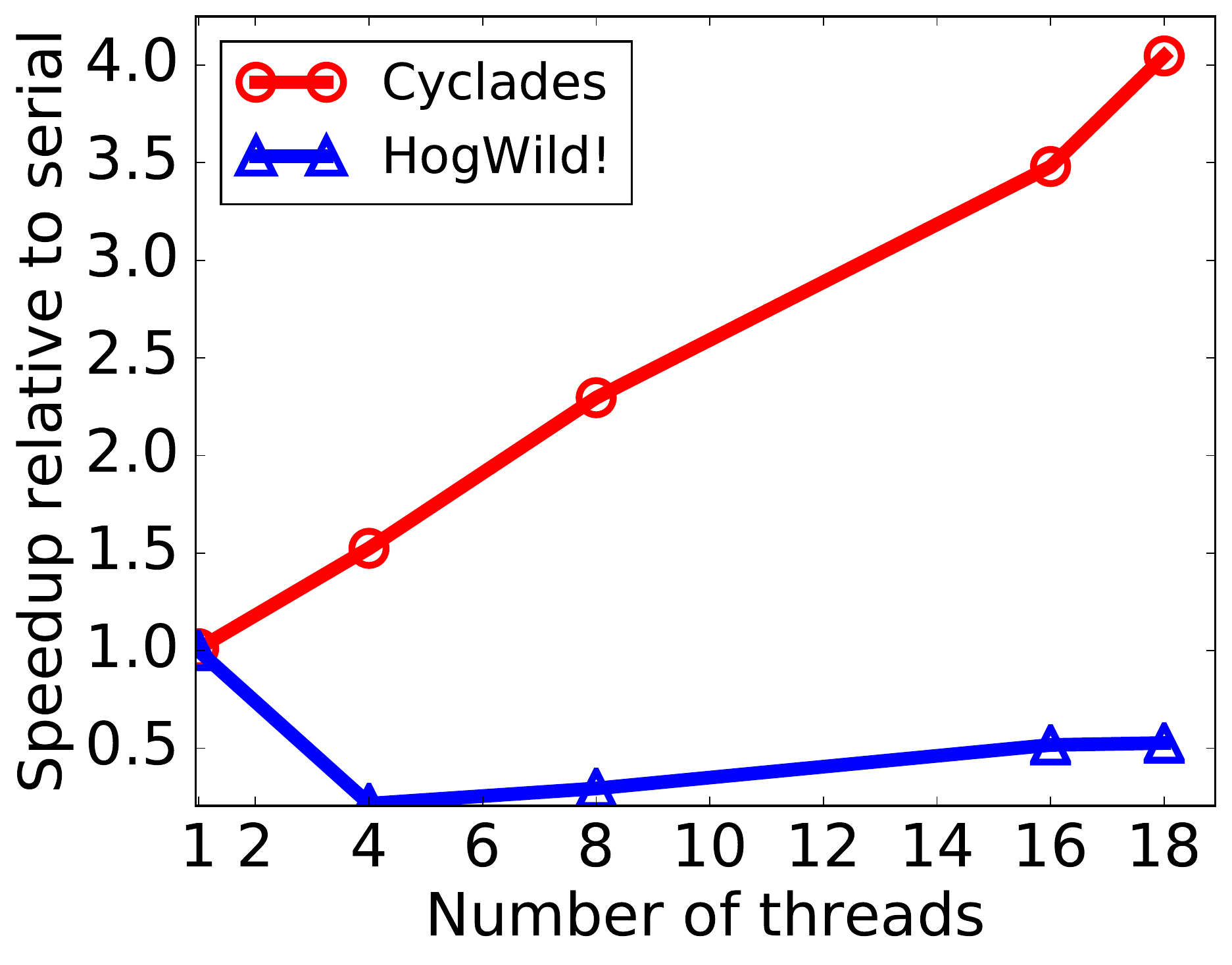}}
    \subfigure[Graph Eig., DBLP, SVRG]{\label{appfig:ge_dblp_may_svrg:speedupsgd} \includegraphics[width=0.24\textwidth]{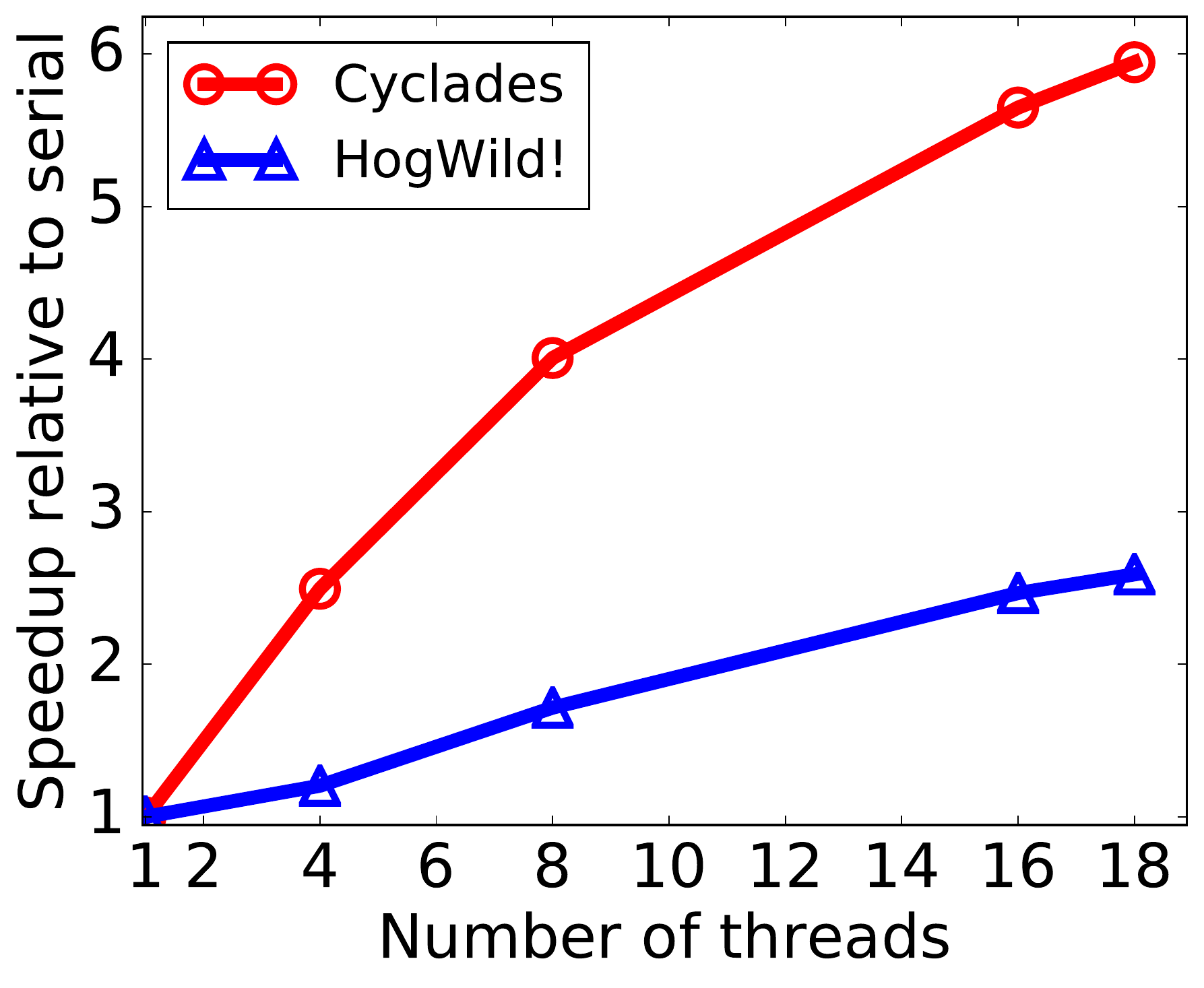}}\\

    \subfigure[Mat. Comp., 10M, $\ell_2$-SGD]{\label{appfig:mc_10m_may_rsg:speedupsgd} \includegraphics[width=0.24\textwidth]{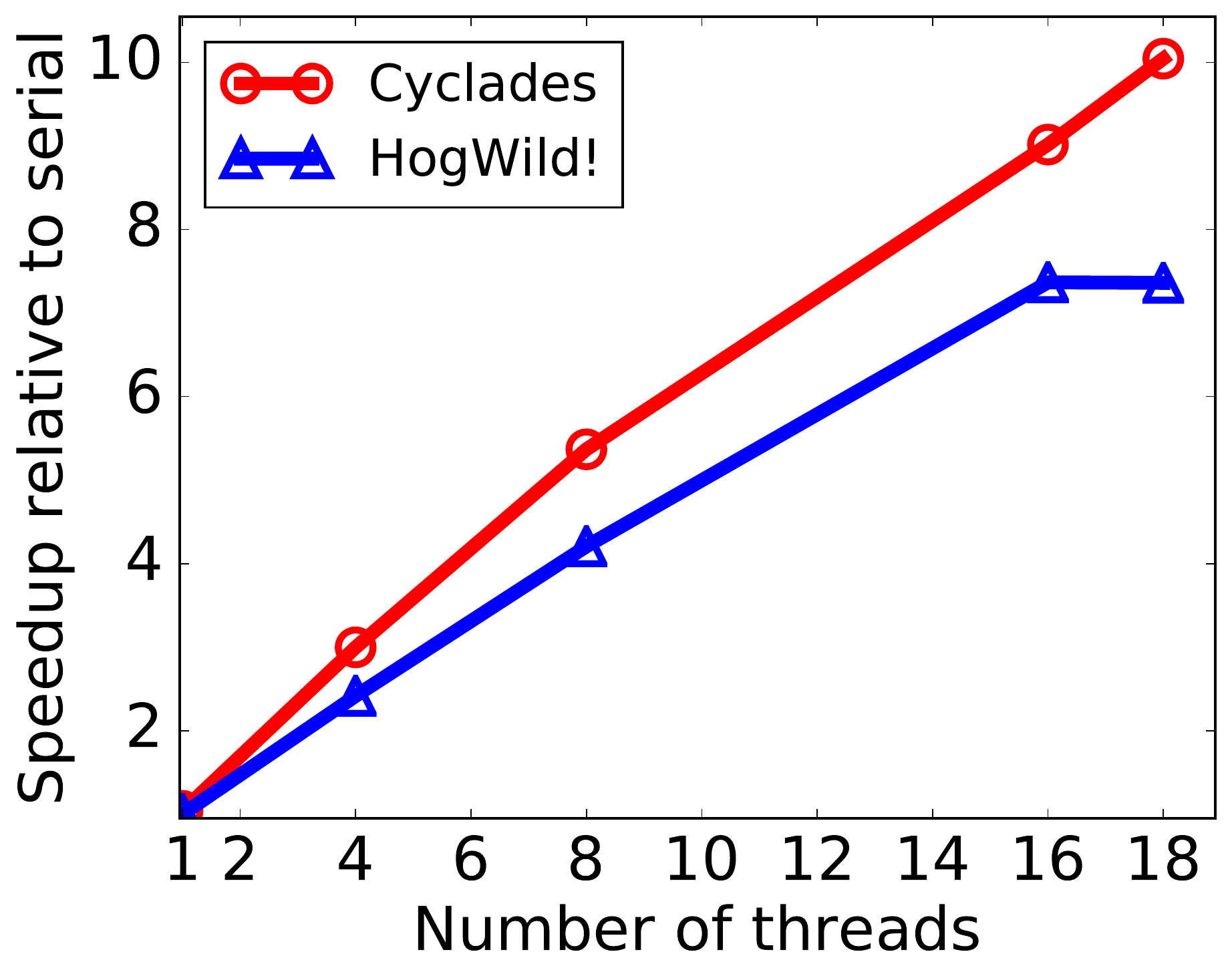}}
    \subfigure[Mat. Comp., 10M, SGD]{\label{appfig:mc_10m_may_sgd:speedupsgd} \includegraphics[width=0.24\textwidth]{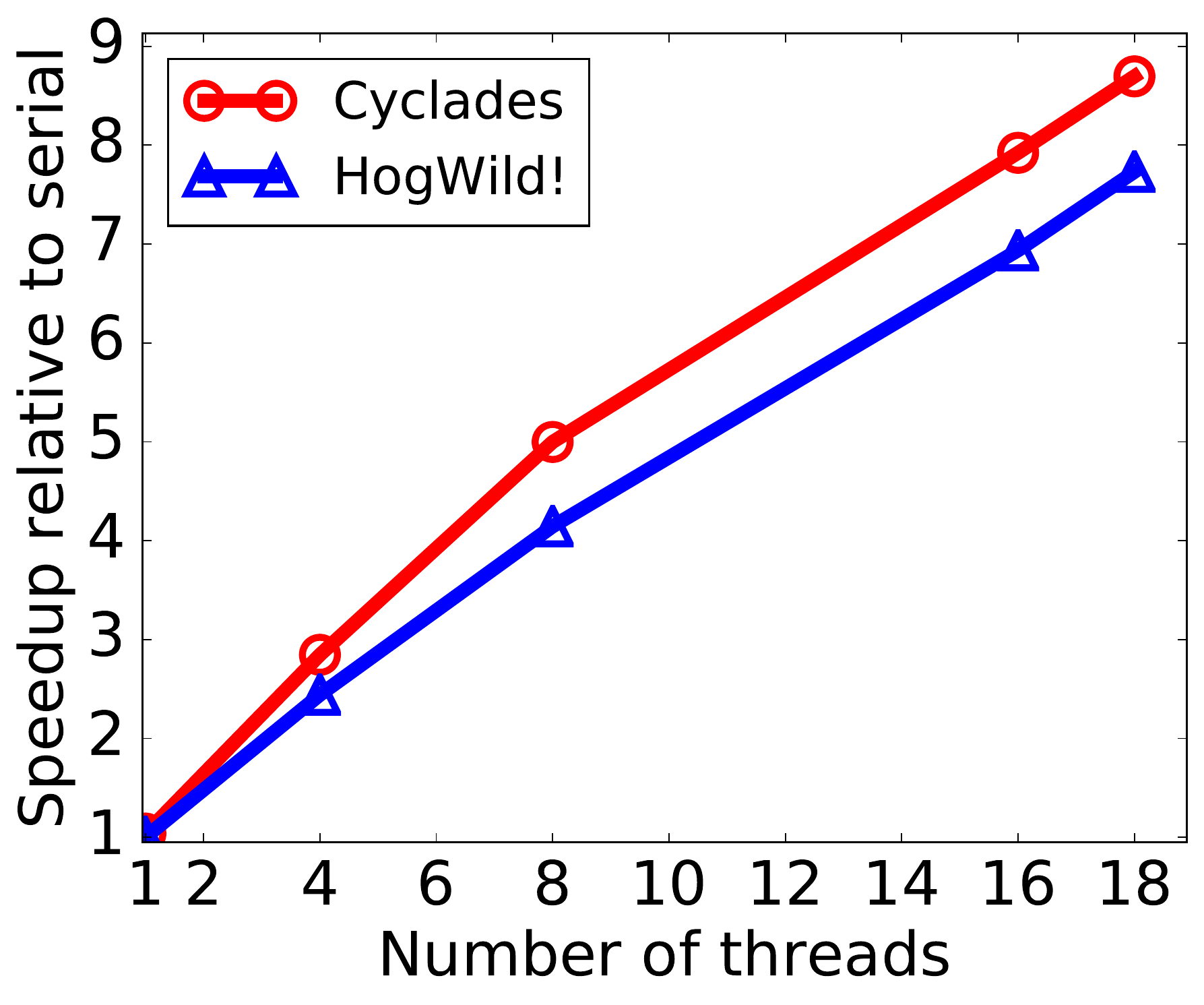}}
    \subfigure[Word2Vec, EN-Wiki, SGD]{\label{appfig:we_ewk_may_sgd:speedupsgd} \includegraphics[width=0.24\textwidth]{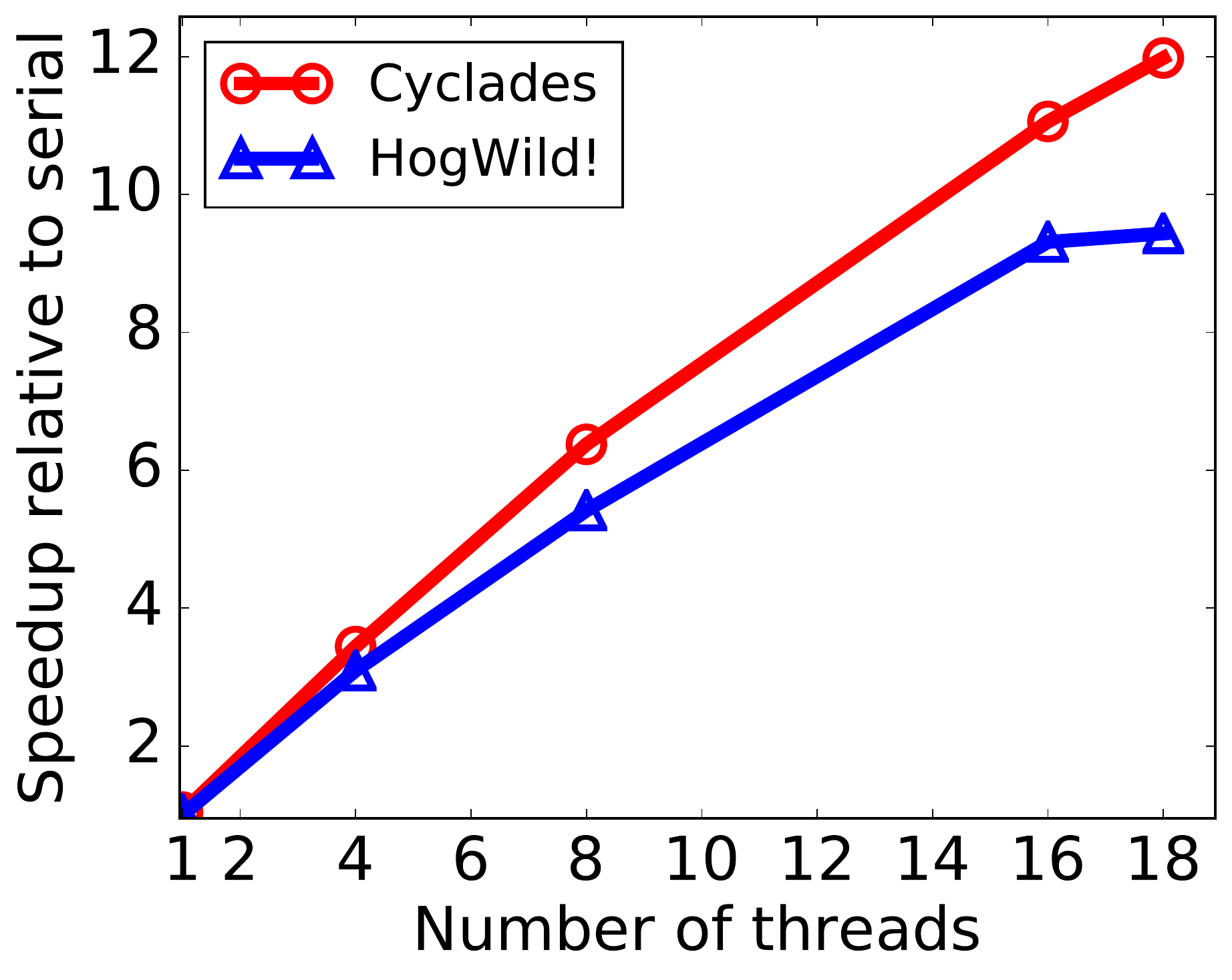}}\\
\end{center}
  \caption{Speedup of \cyc{} and \hog{} on various problems, using 1, 4, 8, 16 threads, in terms of running time for stochastic updates.
  }
  \vspace{0.1cm}
  \hrule
  \label{appfig:expt_speedupsgd}
\end{figure}


\newpage

\begin{figure}[H]
  \vspace{0.1cm}
  \hrule
\vspace{0.1cm}
\begin{center}
\includegraphics[width=0.5\textwidth]{images/matplotlib_plots/time_loss_legend_48_3}\\   
    \subfigure[0.016\%]{\label{appfig:url_0016:converge} \includegraphics[width=0.24\textwidth]{images/matplotlib_plots/cyc_text_classification_url_01__sgd_200_400_0_converge}}
    \subfigure[0.028\%]{\label{appfig:url_0028:converge} \includegraphics[width=0.24\textwidth]{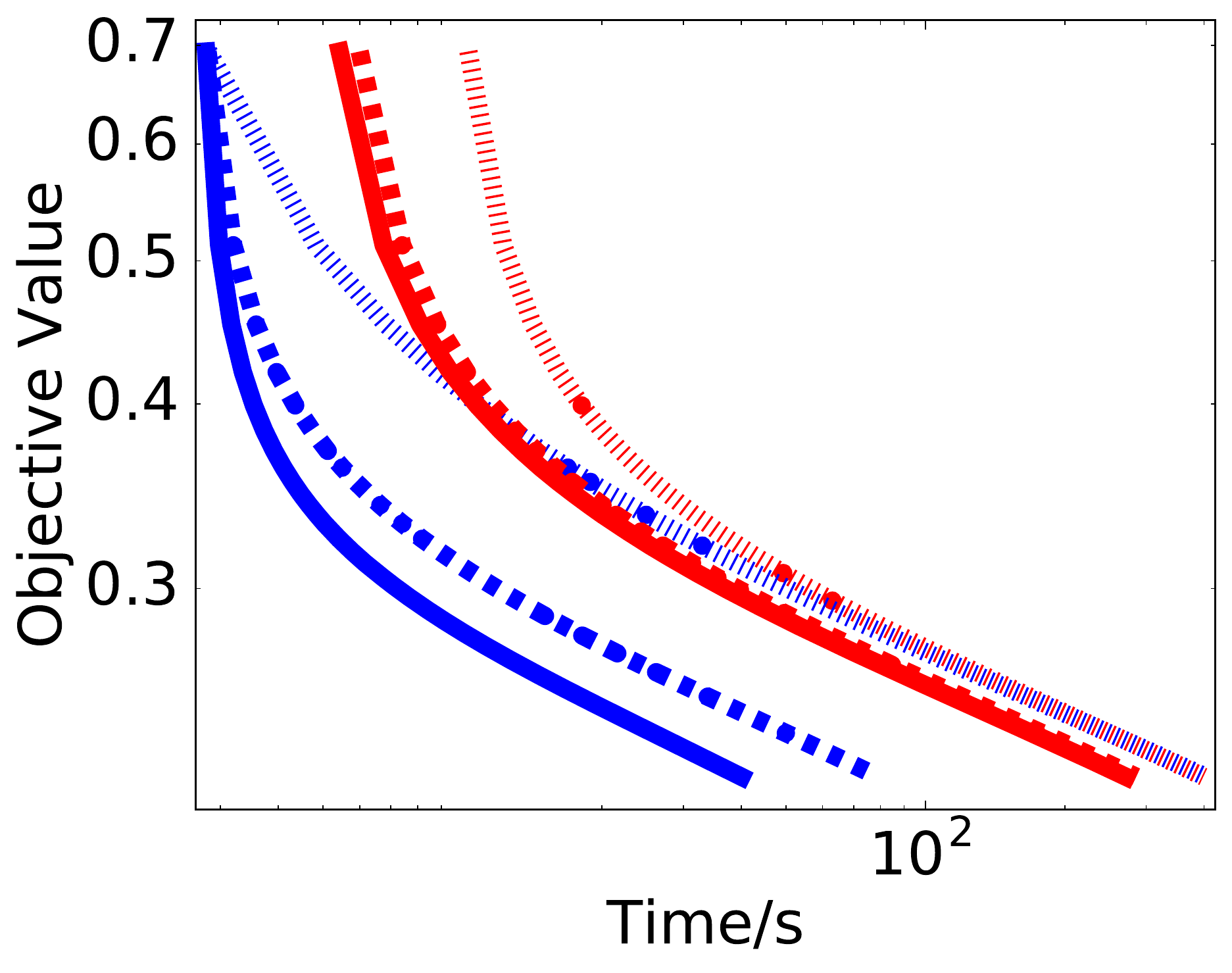}}
    \subfigure[0.047\%]{\label{appfig:url_0047:converge} \includegraphics[width=0.24\textwidth]{images/matplotlib_plots/cyc_text_classification_url_0025__sgd_200_400_0_converge}}
    \subfigure[0.048\%]{\label{appfig:url_0048:converge} \includegraphics[width=0.24\textwidth]{images/matplotlib_plots/cyc_text_classification_url_001__sgd_200_400_0_converge}}
    \end{center}
    \vspace{-0.7cm}
  \caption{Convergence of \cyc{} and \hog{} on the malicious URL detection problem, using 1, 4, 8, 16 threads, in terms of overall running time, for different percentage of features filtered.
  }
  \label{appfig:expt_url_converge}
    \vspace{0.1cm}
  \hrule
\vspace{0.1cm}
\end{figure}
\begin{figure}[H]
     \vspace{-0.7cm}
\begin{center}
	\includegraphics[width=0.5\textwidth]{images/matplotlib_plots/time_loss_legend_48_3}\\
    \subfigure[0.016\%]{\label{appfig:url_0016:convergegd} \includegraphics[width=0.24\textwidth]{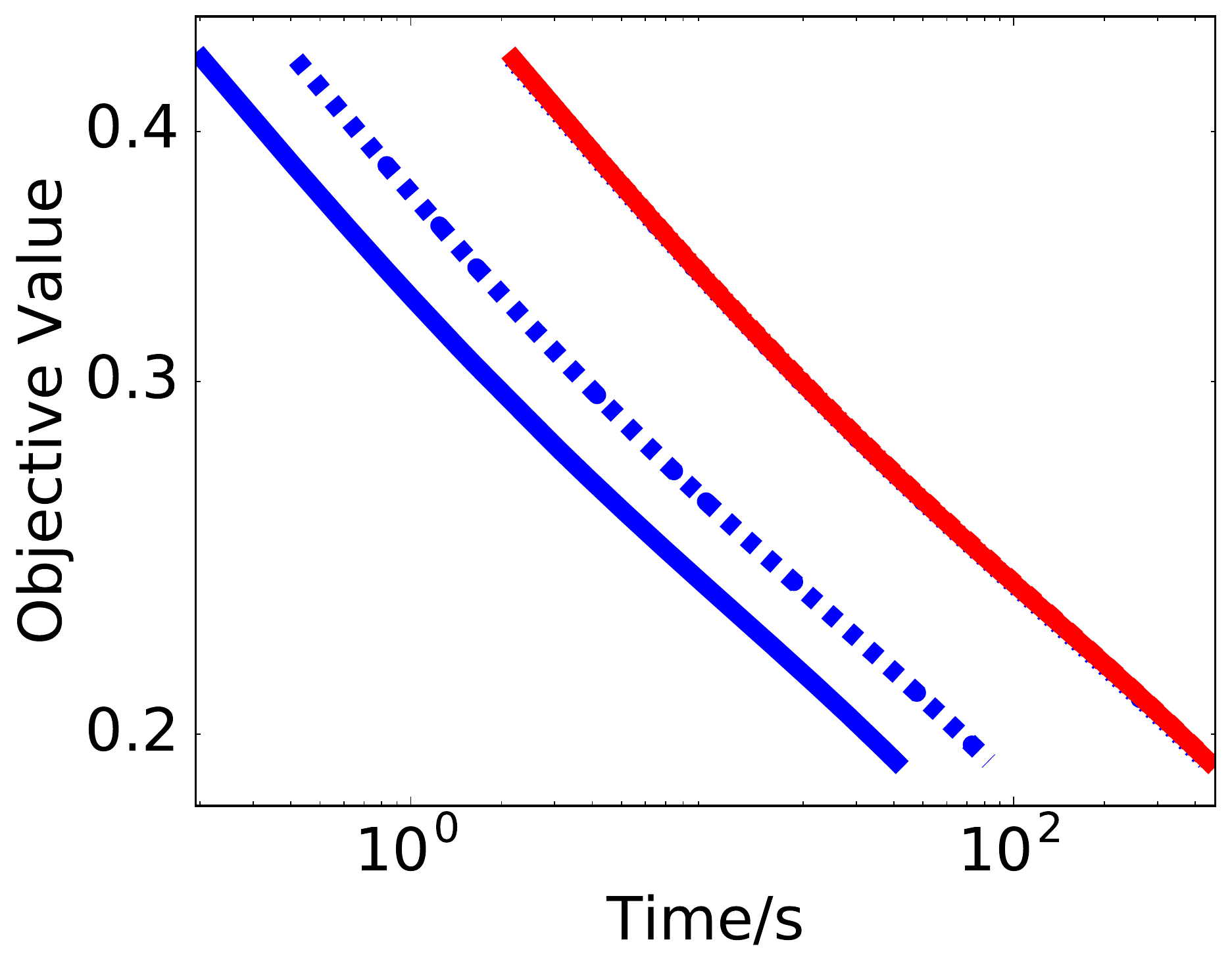}}
    \subfigure[0.028\%]{\label{appfig:url_0028:convergegd} \includegraphics[width=0.24\textwidth]{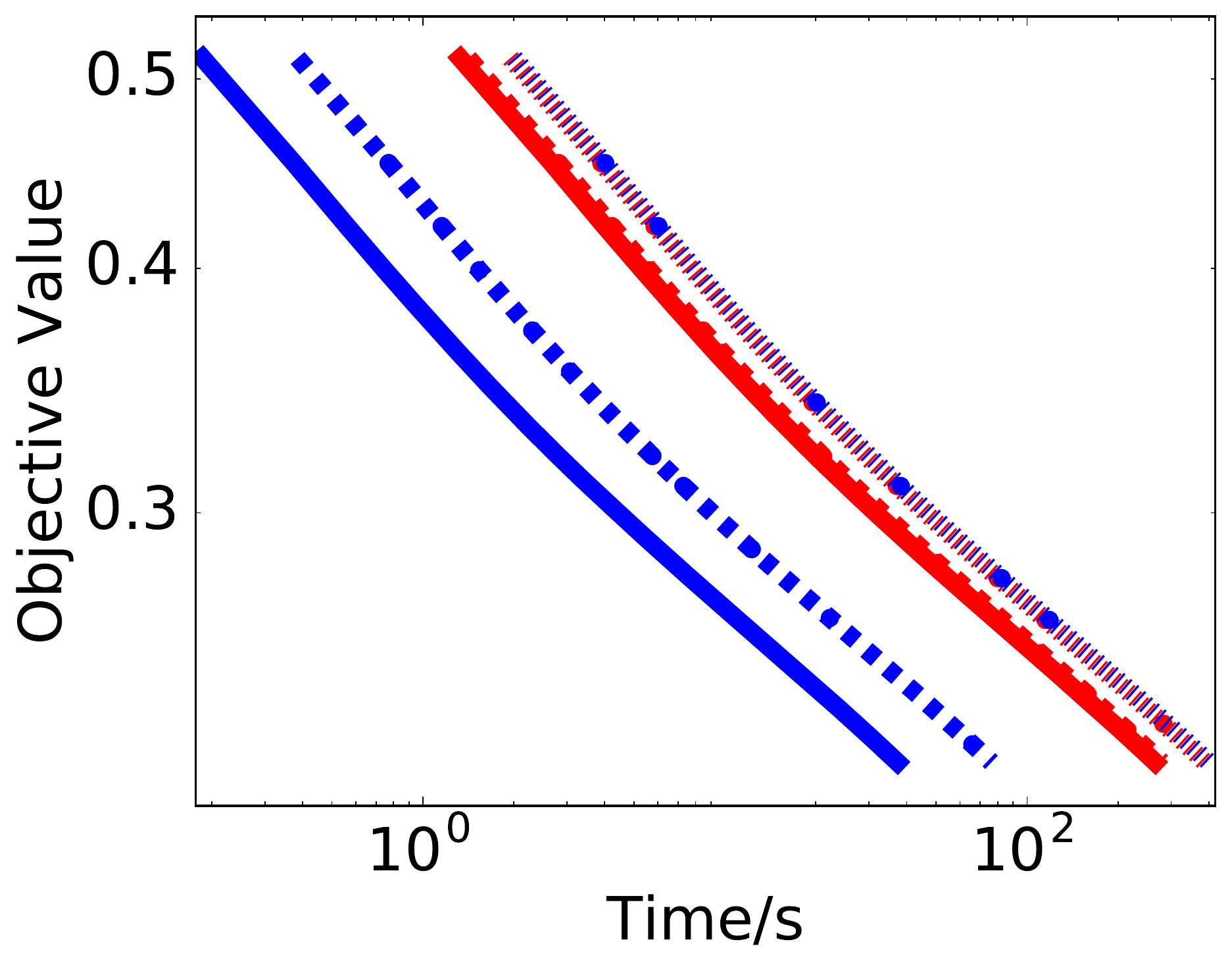}}
    \subfigure[0.047\%]{\label{appfig:url_0047:convergegd} \includegraphics[width=0.24\textwidth]{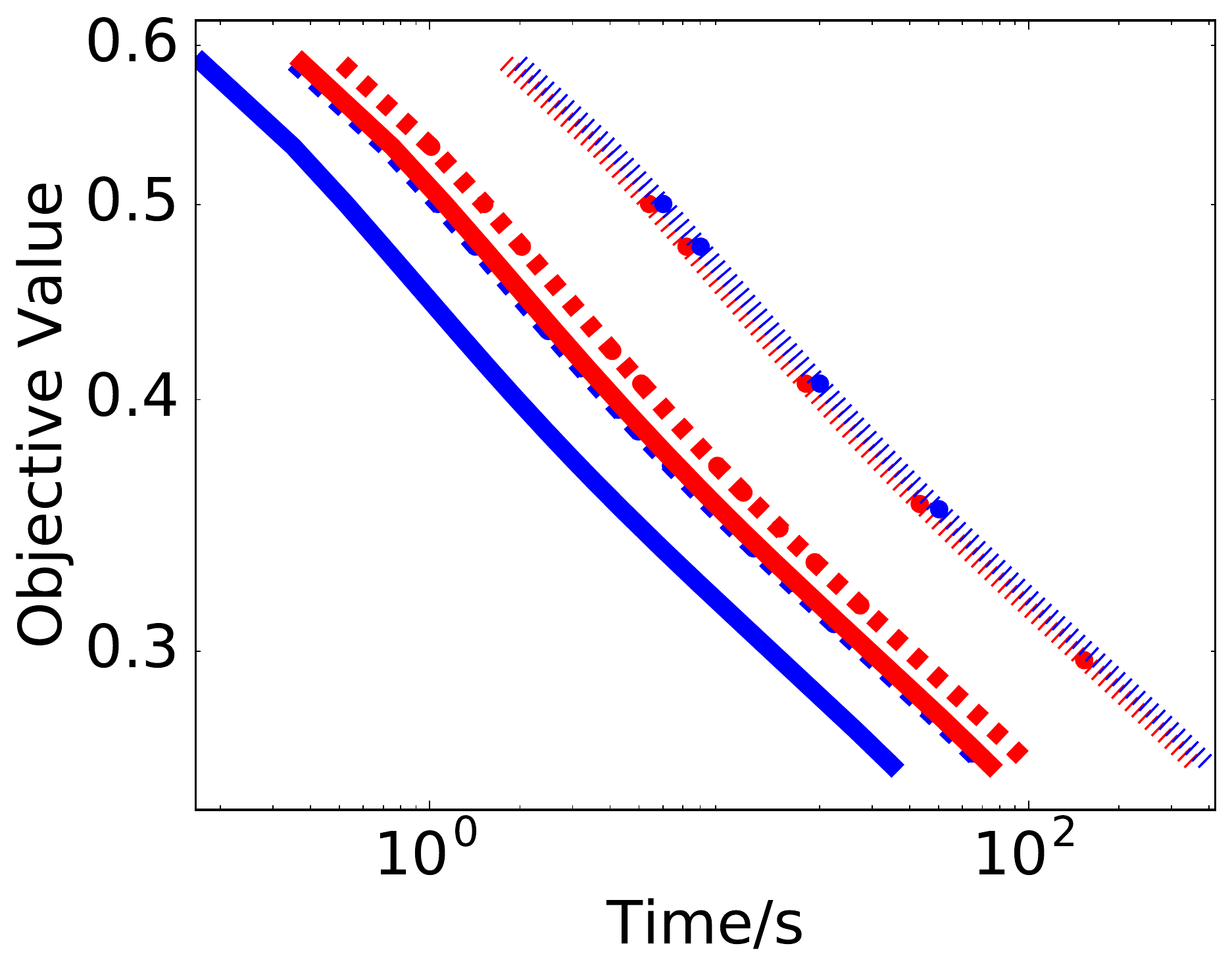}}
    \subfigure[0.048\%]{\label{appfig:url_0048:convergegd} \includegraphics[width=0.24\textwidth]{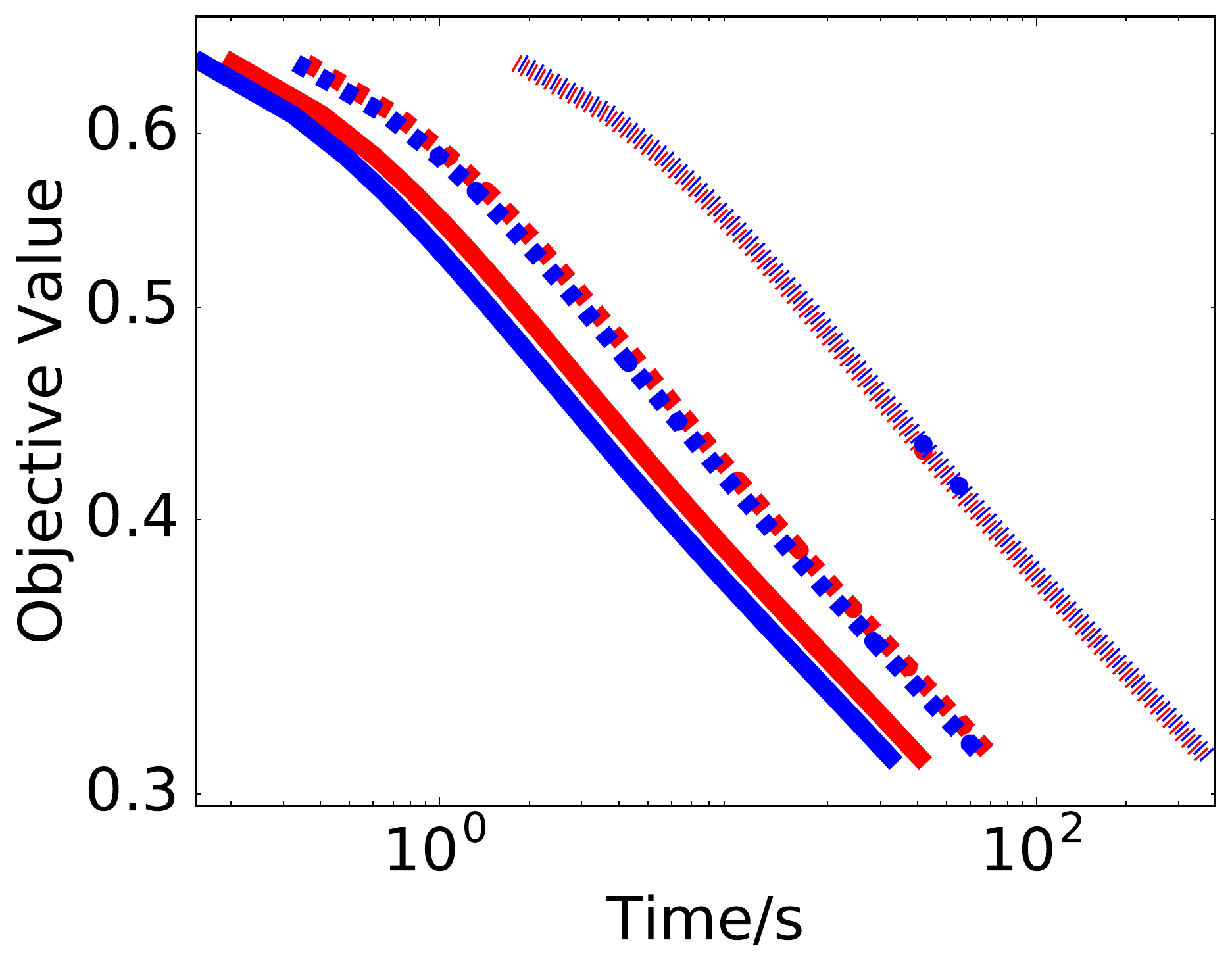}}
\end{center}
\vspace{-0.7cm}
  \caption{Convergence of \cyc{} and \hog{} on the malicious URL detection problem, in terms of running time for stochastic updates, for different percentage of features filtered.
  }
  \label{appfig:expt_url_convergegd}
\end{figure}

\vspace{-0.7cm}
\begin{figure}[H]
    \vspace{0.1cm}
  \hrule
\vspace{0.1cm}
  \centering
    \subfigure[0.016\%]{\label{appfig:url_0016:speedups} \includegraphics[width=0.24\textwidth]{images/matplotlib_plots/cyc_text_classification_url_01__sgd_200_400_0_speedup}}
    \subfigure[0.028\%]{\label{appfig:url_0028:speedups} \includegraphics[width=0.24\textwidth]{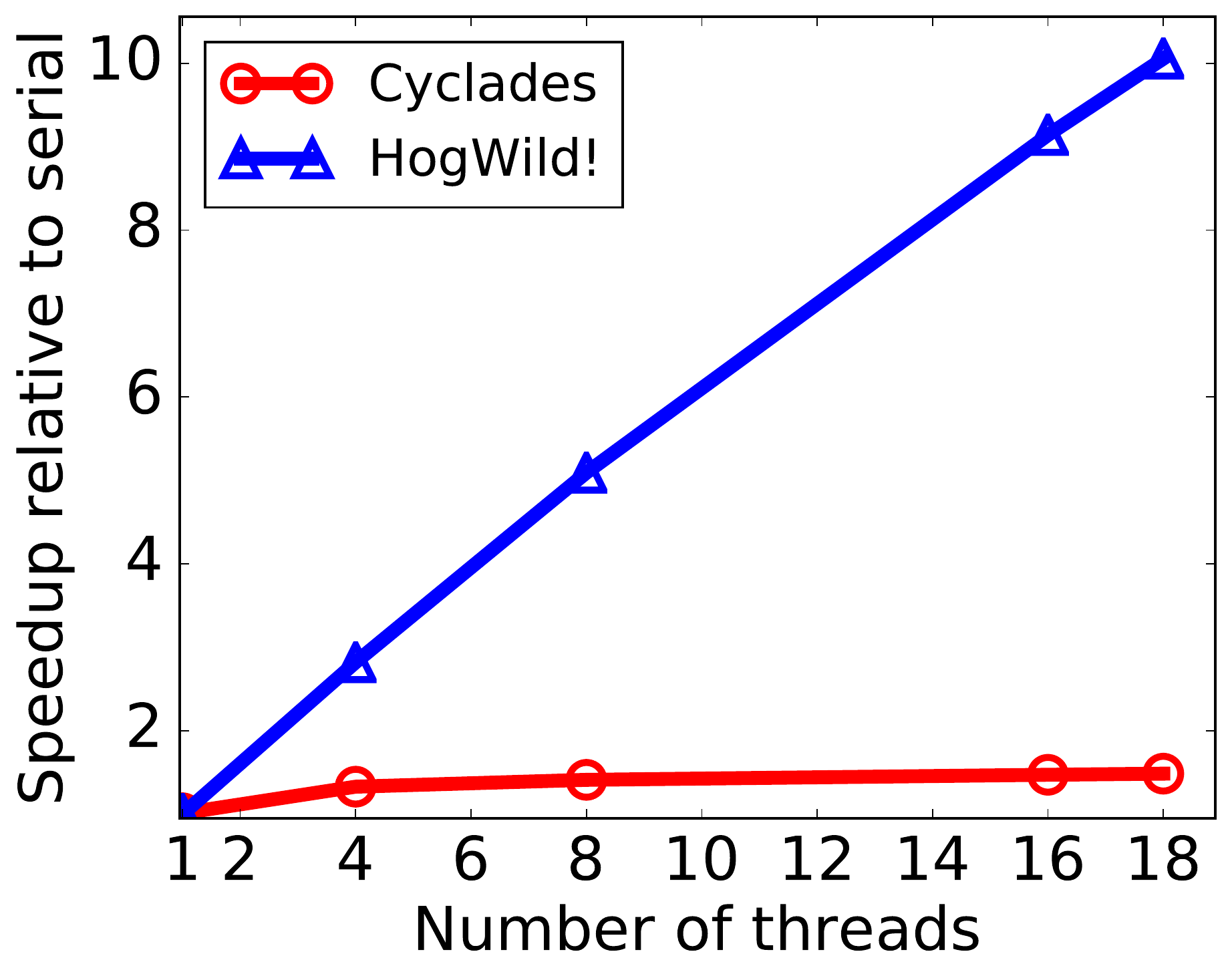}}
    \subfigure[0.047\%]{\label{appfig:url_0047:speedups} \includegraphics[width=0.24\textwidth]{images/matplotlib_plots/cyc_text_classification_url_0025__sgd_200_400_0_speedup}}
    \subfigure[0.048\%]{\label{appfig:url_0048:speedups} \includegraphics[width=0.24\textwidth]{images/matplotlib_plots/cyc_text_classification_url_001__sgd_200_400_0_speedup}}
    \vspace{-0.6cm}
  \caption{Speedup of \cyc{} and \hog{} on the malicious URL detection problem, using 1, 4, 8, 16 threads, in terms of overall running time, for different percentage of features filtered.
  }
  \label{appfig:expt_url_speedups}
       \vspace{0.1cm}
  \hrule
\vspace{0.1cm}
\end{figure}
\begin{figure}[H]
\vspace{-0.7cm}
\begin{center}
    \subfigure[0.016\%]{\label{appfig:url_0016:speedupsgd} \includegraphics[width=0.24\textwidth]{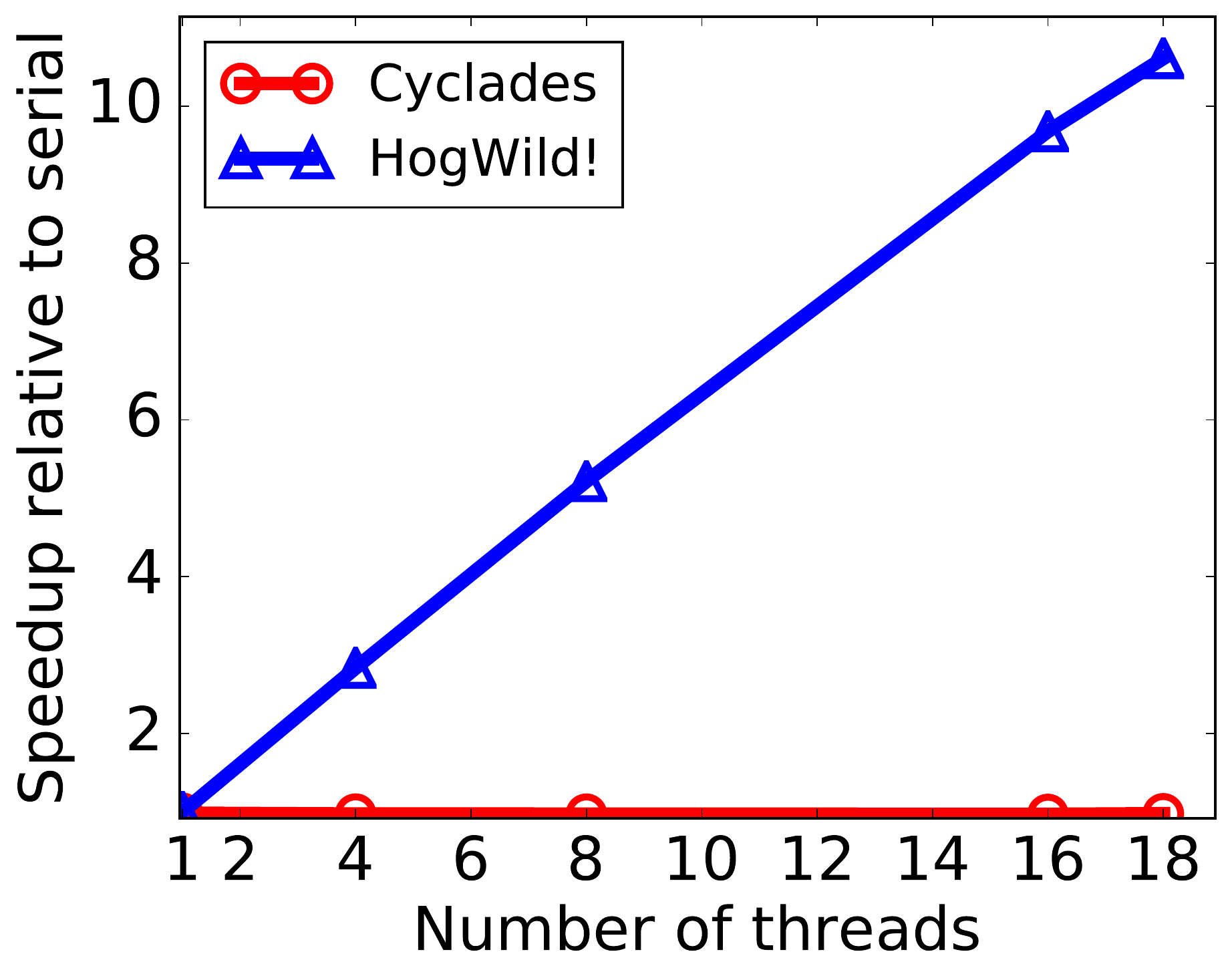}}
    \subfigure[0.028\%]{\label{appfig:url_0028:speedupsgd} \includegraphics[width=0.24\textwidth]{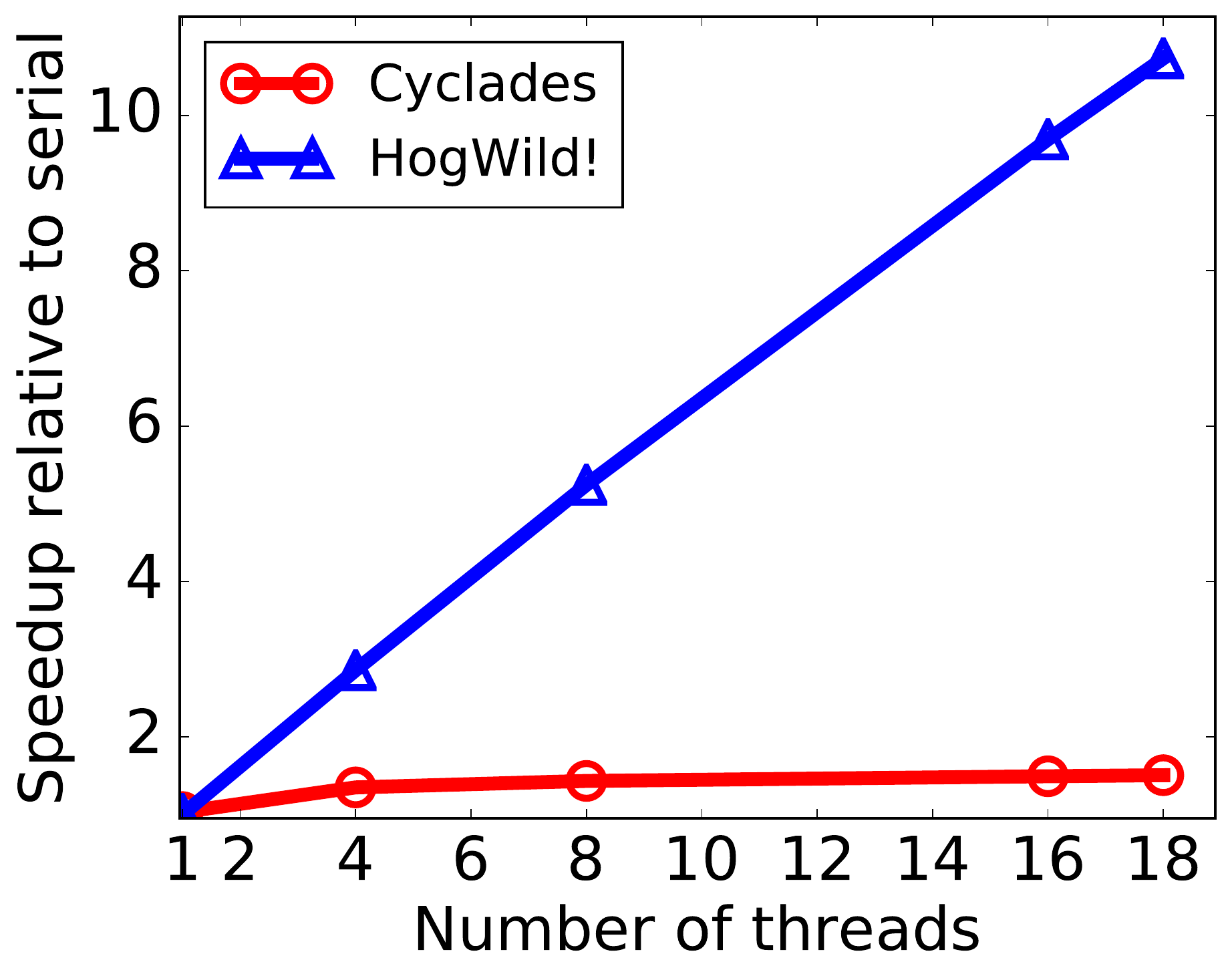}}
    \subfigure[0.047\%]{\label{appfig:url_0047:speedupsgd} \includegraphics[width=0.24\textwidth]{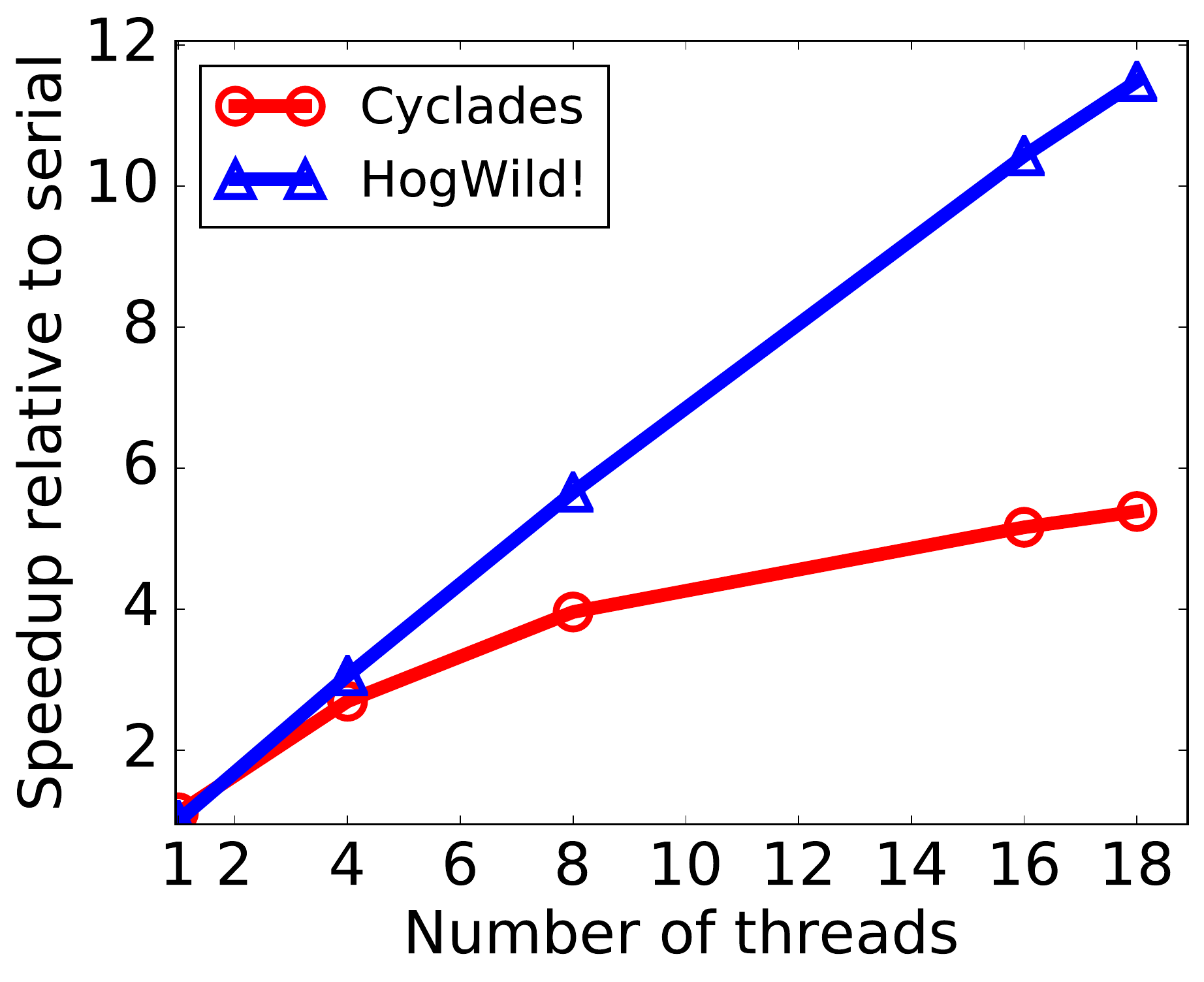}}
    \subfigure[0.048\%]{\label{appfig:url_0048:speedupsgd} \includegraphics[width=0.24\textwidth]{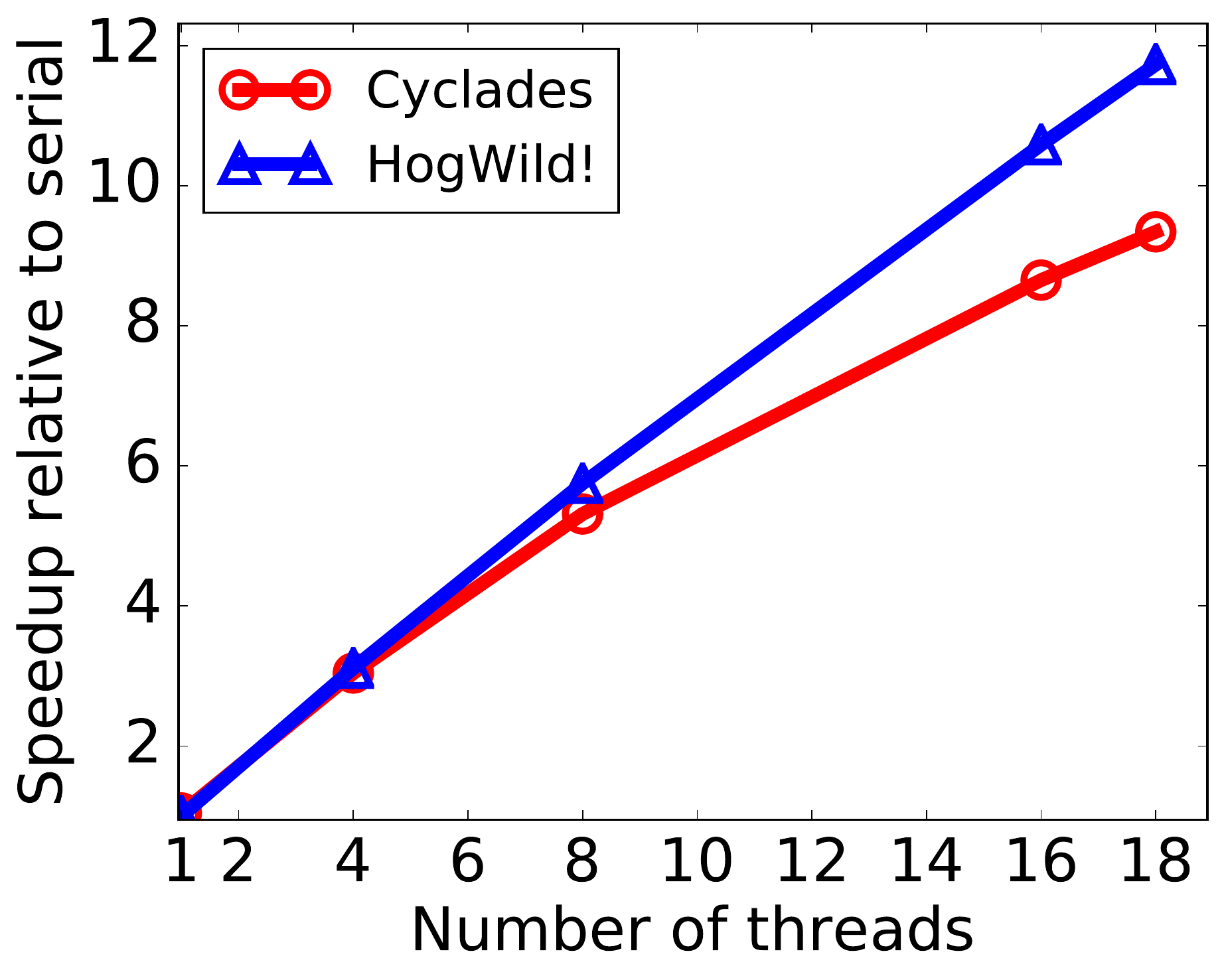}}
\end{center}
\vspace{-0.7cm}
  \caption{Speedup of \cyc{} and \hog{} on the malicious URL detection problem, in terms of running time for stochastic updates, for different percentage of features filtered.
  }
  \label{appfig:expt_url_speedupsgd}
    \vspace{0.1cm}
  \hrule
\vspace{0.1cm}
\end{figure}


\newpage

\begin{center}
\begin{figure}[H]
\vspace{0.1cm}
  \centering
    \subfigure[Least squares, NH2010, SAGA]{\label{appfig:ls_nh2010_may_saga:hdiverge} \includegraphics[width=0.45\textwidth]{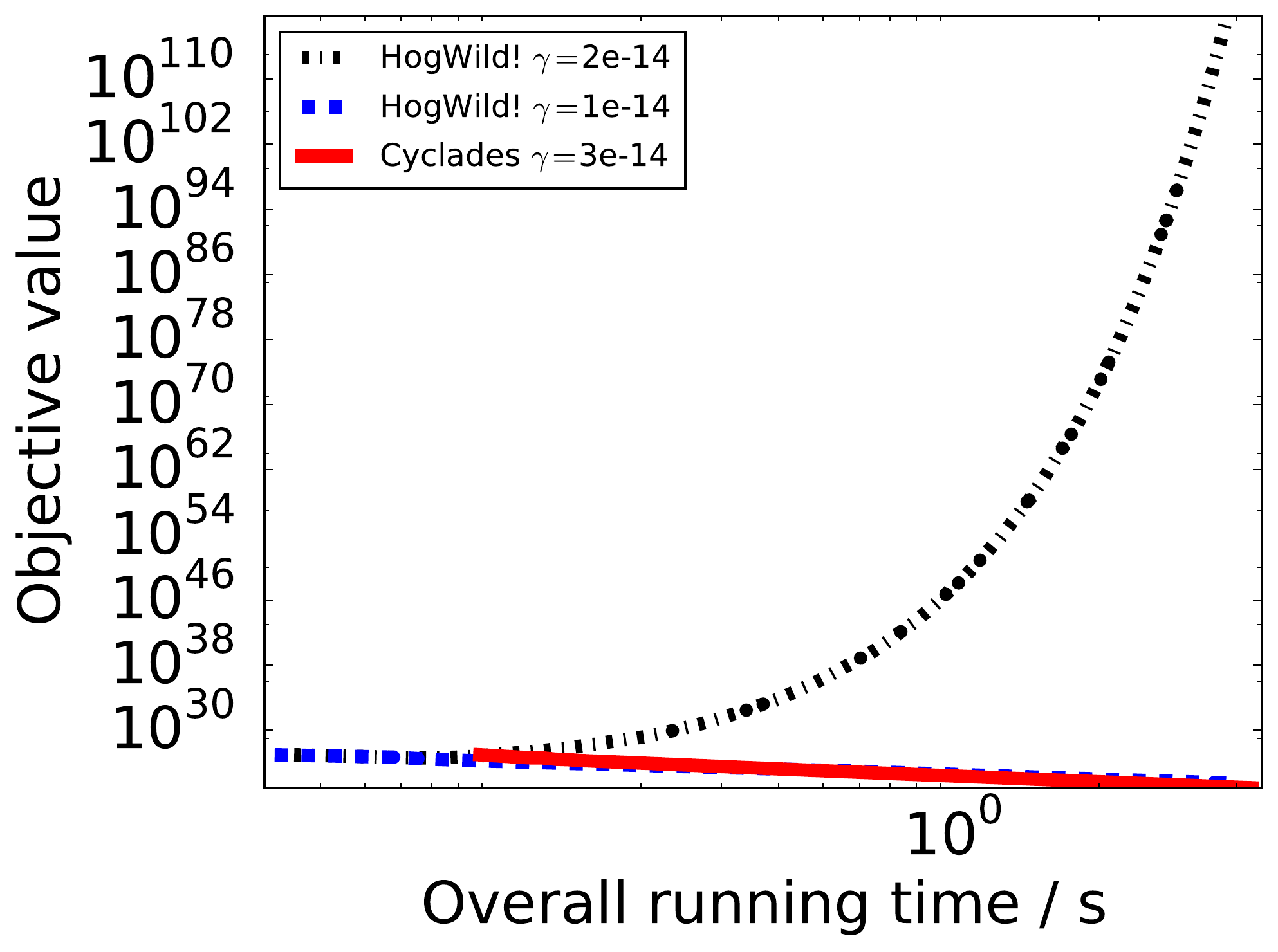}}
    \subfigure[Least squares, DBLP, SAGA]{\label{appfig:ls_DBLP_may_saga:hdiverge} \includegraphics[width=0.45\textwidth]{images/matplotlib_plots/cyc_least_squares_dblp__saga_hdiverge}}
  \caption{
  Convergence of \cyc{} and \hog{} on least squares using SAGA, with 16 threads, on the NH2010 and DBLP datasets.
  \cyc{} was able to converge using larger stepsizes, but \hog{} often diverged with the same large stepsize.
  Thus, we were only able to use smaller stepsizes for \hog{} in the multi-threaded setting.
  }
  \label{appfig:expt_hdiverge}
\end{figure}
\end{center}

\end{document}